\let\csname equation*\endcsname\relax
\let\csname endequation*\endcsname\relax
\renewcommand{\d}{{\mathrm{d}}}
\newcommand{\spt}{{\mathrm{spt}}}
\newcommand{\sign}{{\mathrm{sgn}}}
\newcommand{\meas}{{\mathcal{M}}}
\newcommand{\measp}{{\mathcal{M}_+}}
\newcommand{\diss}{{{d}}}
\newcommand{\R}{{\mathbb{R}}}
\newcommand{\N}{{\mathbb{N}}}
\newcommand{\Z}{{\mathbb{Z}}}
\newcommand{\imageParamVec}{\mathbf p}
\newcommand{\diff}{{\mathcal{D}}}
\newcommand{\den}{u}
\newcommand{\img}{g}
\newcommand{\Den}{U}
\newcommand{\Img}{G}
\newcommand{\Motion}{\mathbf W}
\newcommand{\motion}{\mathbf w}
\newcommand{\Poisson}{\mathcal P}
\newcommand{\Gauss}{\mathcal N}
\newcommand{\data}{{\mathrm{data}}}
\newcommand{\elec}{q}
\newcommand{\Elec}{Q}
\newcommand{\elecP}{\rho}
\newcommand{\white}{r}
\newcommand{\White}{R}
\newcommand{\shift}{w}
\newcommand{\shiftVec}{\motion}
\newcommand{\bumpCoeff}{c}
\newcommand{\bumpWidth}{\omega}
\newcommand{\centerPos}{y}
\newcommand{\inputImage}{\mathbf\img}
\newcommand{\reconstImage}{\den[\imageParamVec]}
\newcommand{\numAtoms}{L}
\newcommand{\numFitParams}{J}
\newcommand{\atomIndex}{l}
\newcommand{\numPixelsX}{{N_1}}
\newcommand{\numPixelsY}{{N_2}}
\newcommand{\ie}{i.\,e.\ }
\newcommand{\eg}{e.\,g.\ }
\newtheorem{theorem}{Theorem}
\newtheorem{lemma}{Lemma}
\newtheorem{remark}{Remark}
\begin{document}

\title{Joint denoising and distortion correction of atomic scale scanning transmission electron microscopy images}

\author{Benjamin~Berkels$^1$ and Benedikt Wirth$^2$}
\address{
$^1$ AICES Graduate School, RWTH Aachen University, Germany\\
$^2$ Applied Mathematics, University of M\"unster, Germany
}

\begin{abstract}
Nowadays, modern electron microscopes deliver images at atomic scale. The precise atomic structure encodes information about material properties. Thus, an important ingredient in the image analysis is to locate the centers of the atoms shown in micrographs as precisely as possible.
Here, we consider scanning transmission electron microscopy (STEM), which acquires data in a rastering pattern, pixel by pixel. Due to this rastering combined with the magnification to atomic scale, movements of the specimen even at the nanometer scale lead to random image distortions that make precise atom localization difficult. Given a series of STEM images, we derive a Bayesian method that jointly estimates the distortion in each image and reconstructs the underlying atomic grid of the material by fitting the atom bumps with suitable bump functions. The resulting highly non-convex minimization problems are solved numerically with a trust region approach. Well-posedness of the reconstruction method and the model behavior for faster and faster rastering are investigated using variational techniques.
The performance of the method is finally evaluated on both synthetic and real experimental data.
\end{abstract}

\section{Introduction}

Nowadays, imaging techniques like Transmission Electron Microscopy (TEM) allow to acquire images of materials at atomic resolution. The precise atomic configuration that can be identified from those images allows to infer information about material properties. In this article, we concentrate on a variant of TEM, described next.

\subsection{STEM}

An important variant of TEM is the so-called Scanning Transmission Electron Microscopy (STEM). STEM acquires images by moving a focused electron probe over a sample in line-by-line, pixel-by-pixel manner. In this paper, we consider high-angle annular dark-field (HAADF) STEM, which essentially counts at each pixel the electrons from the beam that are deflected by the sample and arrive at a circular annulus centered around the sampling position. A reason for the popularity of HAADF-STEM~\cite{BiBlBl12} is that the number of electrons counted at a pixel is proportional to the atomic number of the material at the corresponding position, which allows for a direct interpretation of the measured intensities. This property comes at a price. The sequential pixel-by-pixel rastering process combined with the magnification to atomic scale makes movements of the sample at the nanometer scale (\eg induced by environmental and instrumental disturbances) cause distortions in STEM images (see Figure~\ref{fig:STEMAndSimulImage} for an example).
Most visibly, there are characteristic discontinuous horizontal distortions of the depicted atoms~\cite{JoNe13}. Furthermore, low frequency sample drift induces smooth, but possibly non-rigid deformations of the depicted atomic lattice.
The longer the exposure (time spent at a pixel), the bigger the distortions. Thus, a natural way to reduce the distortions is to decrease the exposure time. Unfortunately, the signal-to-noise ratio (SNR) is inversely proportional to the exposure time. To get both small distortions and a good SNR, an often employed strategy is to acquire several images with short exposure instead of a single image with long exposure. Of course, then one has to reconstruct the underlying atom configuration from a series of images.
Due to its special nature, STEM imaging calls for tailored image processing techniques.
Finally, in many STEM applications, the reconstructed high quality image is no end in itself, but just an intermediate step towards a characterization of the depicted material. This characterization is a description of the atomic grid of the material, \ie the position of the atoms (which are actually atomic columns, since STEM shows a 2D projection of the 3D material structure) and their atomic number.

\subsection{The proposed reconstruction model}
We now briefly describe our ansatz for the inverse problem of identifying atom positions and other parameters from a STEM measurement.
STEM aims to sample the material by obtaining a measurement $\img_i$ (essentially the number of electrons deflected from the electron beam) at a discrete set of locations $x_i\in\R^2$, $i=1,\ldots,N$.
To this end, the beam is positioned at $x_i$ at time $t_i$, however, it will be slightly displaced relative to the sample by a stochastic motion $w_i\in\R^2$, for instance due to temperature fluctuations.
Given the sequence of measurements $\img_i$, we try to reconstruct the underlying material distribution of the sample
(essentially the atom numbers and positions, encoded by a vector $\mathbf p$ from a set $A$ of reasonable parameters)
as well as the stochastic perturbations $w_i$ of the measurement locations by minimizing the functional
\begin{equation}\label{eqn:model}
E[\motion,\mathbf p]=
\sum_{i=1}^N\diss(\img_i,\Delta t\den[\mathbf p](x_i+w_i))+\frac1{2\diff}\sum_{i=1}^N\frac{|w_i-w_{i-1}|^2}{(t_i-t_{i-1})}
\end{equation}
over $\motion=(w_1,\ldots,w_N)\in(\R^2)^N$ and $\mathbf p\in A$.
This variational approach will be rigorously derived from a Bayesian model in the first part of this article, however, the rough intuition behind it is easily explained.
The expected number of deflected electrons at any position $x\in\R^2$ depends on the atom configuration $\mathbf p$ in the sample and can easily be calculated as a function $\Delta t\den[\mathbf p]:\R^2\to[0,\infty)$
(the factor $\Delta t>0$ indicates the resting time over position $x_i$, which the total number of deflected electrons should be proportional to).
A deviation of a measurement $\img_i$ at time $t_i$ from the expected number $\Delta t\den[\mathbf p](x_i+w_i)$ at the (perturbed, true) location $x_i+w_i$ is penalized by a dissimilarity $\diss$,
for which we will essentially consider the Kullback-Leibler divergence.
In addition, the second sum penalizes a change between consecutive perturbations $w_{i-1},w_i$ with the idea that the stochastic perturbation cannot change too rapidly.
The constant $\diff$ just weights the importance of that regularization.

Apart from applying this model to artificial and real STEM data,
we also analyze how our reconstruction behaves as the electron beam samples the material at a faster and faster rate, thereby allowing more measurement locations during a fixed measurement time.
This requires an understanding of how the stochastic signal of deflected electrons behaves as more and more measurement locations are scanned.
Such an understanding can be obtained using the same tools as in stochastic homogenization.
The reconstruction model in the limit of infinite sampling speed (which corresponds to a time-continuous or even infinitely fast electron beam motion) can then be derived via a $\Gamma$-convergence analysis.
The result depends on the scan path; using space-filling curves one can obtain a reconstruction model for scans sampling the whole material,
while for instance row-wise scans with more and more rows lead to a tomographic-type model, in which only the average signal per row is used.

Due to the nonconvexity and relatively high complexity of the variational model, a very good initialization is vital for accurate reconstructions.
For this purpose, the model is reduced to a less accurate but much simpler convex optimization problem, which allows a fast and globally optimal solution.
Essentially, the reduced model represents a deconvolution problem similar to the ones considered and analyzed for instance recently in \cite{BrPi13,DuPe15}.

In numerical experiments, our proposed method robustly identifies all atom positions with high accuracy.
Furthermore, the conducted limit analysis gives indications as to what are reasonable scan paths and how measurement parameters should be chosen in relation to the stochastic noise present during the measurement.

\subsection{Related work}
The development of special processing methods tailored to the characteristic STEM properties is an active research topic mostly in the field of electron microscopy, but to some degree also in mathematics.
In \cite{JoNe13}, Jones and Nellist study sources of STEM distortion and propose an algorithm to correct these distortions on individual images. Kimoto et al.\ \cite{KiAsYu10} propose to use averaging and rigid registration on a series of STEM images to obtain a high quality image of the underlying material, which achieves considerably better precision than previous methods working on just a single image.
Note that we use the term \emph{precision} here in the way it is used in the electron microscopy community. There, it is a measure how precisely atom centers can be located (for a detailed definition see Section~\ref{sec:results}). The quality of the average image of a series can be further improved significantly with non-rigid registration~\cite{BeBiBl13}. As of now, to the best of our knowledge, this method still achieves the best reported precision on STEM images in the literature~\cite{YaBeDa14}. Since it uses a smooth deformation model, this method cannot fully correct the horizontal STEM distortions. Recently, Jones et al.\ \cite{JoYaPe15} proposed a non-rigid, non-smooth registration algorithm that also aims at correcting the horizontal distortions.
In \cite{BaBoBr16}, De Backer et al.\ propose a model-based estimation of STEM images with Gaussian bumps that is specifically designed to analyze a large field of view. A key concept of this approach is the segmentation of the image into smaller sections, which avoids the simultaneous estimation of the non-linear fit parameters to reduce the computational complexitity.

The outline of the article is as follows.
In Section~\ref{sec:model}, we derive the variational model \eref{eqn:model} and in particular analyze its well-posedness and further properties in Section~\ref{sec:variationalModel}.
Section~\ref{sec:limitModels} then examines the limit models for a continuum of sampling locations with main results Theorem~\ref{thm:GammaLimit} as well as Theorem~\ref{thm:noGammaLimit} including the subsequent Remark~\ref{rem:tomography}.
Section~\ref{sec:ConvAtomIdent} considers a reduced reconstruction model necessary to initialize the numerical optimization,
while Sections~\ref{sec:numerics} and \ref{sec:results} describe the numerical implementation and results.

Finally, for the reader's convenience, below we provide a reference list of the most important symbols used throughout.\\
\setlength\unitlength{.6\linewidth}%
\def\arraystretch{1.5}%
\begin{tabular}{ll}
$(x_1,\ldots,x_N),(t_1,\ldots,t_N)$& STEM measurement locations and times\\
$\Delta t,\Delta T$& dwell time (at a measurement location) and waiting time\\
$\Omega=[0,a]^2$ & scanned material region\\
$\mathbf\Img=(\Img_1,\ldots,\Img_N),\mathbf\img=(\img_1,\ldots,\img_N)$& \parbox[t]{\unitlength}{random variable and realization of STEM measurements at locations $(x_1,\ldots,x_N)$}\\
$\Den,\den$& random variable and realization of material density\\
$\Motion=(W_{\diff t_1},\ldots,W_{\diff t_N}),\motion=(w_1,\ldots,w_N)$& \parbox[t]{\unitlength}{random variable and realization of the accumulated sample motion up to times $t_1,\ldots,t_N$}\\
$\mathbf P,\mathbf p=(\centerPos_1,\ldots,\centerPos_\numAtoms,\bumpCoeff_1\ldots,\bumpCoeff_\numAtoms,\bumpWidth,o)$& \parbox[t]{\unitlength}{random variable and realization of the vector of fitting parameters for $\den$, consisting of atom positions $\centerPos_\atomIndex$, atom heights and widths $\bumpCoeff_\atomIndex$ and $\bumpWidth$, and background $o$}\\
$\numAtoms$, $\numFitParams$& number of atoms and of fitting parameters\\
$A$ & domain of fitting parameters\\
$\numPixelsX,\numPixelsY$ & number of pixels in horizontal and vertical direction\\
$f_X$& probability density function of random variable $X$\\
$W_t$& Brownian motion\\
$\diff$ & diffusion coefficient of stochastic motion\\
$\Poisson,\Gauss$ & Poisson and normal distribution\\
$\alpha,\mu,\sigma^2$ & gain factor, noise mean and noise variance of electron detector\\
$E$& proposed energy functional\\
$K$& number of input images or measurements\\
$\mathcal W,\mathcal G,\mathcal E$& time-continuum versions of $\motion,\mathbf\img,E$
\end{tabular}

\section{A Bayesian model for removal of spatial and image noise}\label{sec:model}
In this section, we describe the basic concept of STEM applied to an atomic crystal,
and we model the conditional probability of acquiring a particular image given the underlying atom distribution and (random) motion of the material sample.
This will then lead to a variational Bayesian model for recovering atom distribution and motion from STEM measurements.
Note that even though identifying the random sample motion is not of primary interest,
the model actually simplifies by explicitly including that motion as an unknown to be determined.

\subsection{Noise-free STEM model}
Let $\Omega=[0,a]^2$ denote the scanned material region,
and let $\den:\Omega\to[0,\infty)$ denote the idealized STEM signal
that one would obtain without any noise in the image acquisition (in particular no motion of the sample), if each point was scanned for a time interval of length $1$
($\den$ represents the intensity of electron deflection at each point, but for simplicity we shall think of it as a material density).

HAADF-STEM acquires a measurement at a sequence $x_1,\ldots,x_N\in\Omega$ of points in the sample
by moving to each single point $x_i$ and resting there for a fixed time interval $\Delta t$, the so-called \emph{dwell time},
during which the number of electrons is counted that are deflected from the electron beam at a certain angle.
The points $x_i$ are typically arranged along horizontal lines covering $\Omega$ and are traversed row-wise, but other scanning paths are possible as well.
Typically, the STEM dwell time is of the order of 10 microseconds.
The instrument may also spend some additional time in between the signal acquisition at two consecutive measurement positions:
Even though the time for moving the electron probe from one measurement position to the next lies at least two orders of magnitude below the dwell time (about 10 to 100 nanoseconds),
large movements (such as from the end of one row to the beginning of the next) cause the electron probe to wobble a little
so that the instrument waits a time $\Delta T$ of around 60 microseconds before starting the scan of the next position.

By $t_i\in\R$, $i=1,\ldots,N$, we shall denote the time points at which the signal acquisition at location $x_i$ is finished
(note that, setting $t_0=0$, we necessarily have $t_i-t_{i-1}\geq\Delta t$ for all $i=1,\ldots,N$).
Then, the motion of the electron beam over the sample can be described by the piecewise constant function
\begin{equation*}
x(t)=x_i\quad\text{for }t\in[t_{i-1},t_{i})\,.
\end{equation*}
The signal recorded by the instrument at position $x_i$ is denoted $\img_i$.
Without noise, we thus would have $\img_i=\Delta t\den(x_i)$ for $i=1,\ldots,N$.

However, there are multiple noise sources.
In particular, the sample is not stationary, but undergoes a random drift that can be modeled by Brownian motion.
Furthermore, the number of deflected electrons is not deterministic, but obeys a stochastic law.
As a result, the measured signal is a random variable, depending on the stochastic motion and electron deflection of the sample.

In the following, we denote random variables by capital letters and their realizations by the corresponding lower-case letters.
Vectors will be denoted by boldfont letters.
The probability density function of a random variable $X$ will be denoted $f_X$.

\subsection{A Bayesian probability model for observed material density and motion}\label{sec:Bayes}
Let us abbreviate $\mathbf\Img=(\Img_1,\ldots,\Img_N)$ to be the random variable of the STEM measurements at locations $(x_1,\ldots,x_N)$.
Given actual measurements $\mathbf\img=(\img_1,\ldots,\img_N)$,
we aim to recover the underlying true material density $\den$ and the accumulated sample motion $w_{i}$ at time points $t_i$, $i=1,\ldots,N$.
Here, $\den$ and $\motion=(w_1,\ldots,w_N)$ are just realizations of random variables $\Den$ and $\Motion$ to be described later.
Note that $\motion$ can be viewed as an auxiliary variable that will simplify the modeling, but it may also be viewed as a quantity of its own interest.
For instance, $\motion$ may contain information about systematic motion artifacts that occurred during the image acquisition and allow the microscopist to identify and eliminate the respective error sources.

To set up a corresponding variational model we would like to describe the conditional probability
of the measurements $\mathbf\img$ being produced with motion $\Motion=\motion$ and density $\Den=\den$.
By Bayes' theorem (see \eg \cite[Sec.\,1.1.6]{Le12}), the corresponding probability density function can be expressed as
\begin{equation*}
f_{\Motion,\Den}(\motion,\den\,|\,\mathbf\Img=\mathbf\img)
=\frac{f_{\mathbf\Img}(\mathbf\img\,|\,\Motion=\motion,\Den=\den)f_{\Motion,\Den}(\motion,\den)}{f_{\mathbf\Img}(\mathbf\img)}\,.
\end{equation*}
As our estimate of $\den$ and $\motion$ for a given image $\mathbf\img$, we will later use the so-called maximum a posteriori estimate, which is the pair $(\motion,\den)$ maximizing the above conditional probability.

Since the Brownian motion and the material density are independent, we have
\begin{equation*}
f_{\Motion,\Den}(\motion,\den)
=f_{\Motion}(\motion)f_{\Den}(\den)\,.
\end{equation*}
The following paragraphs derive expressions for $f_{\Motion}(\motion)$, $f_{\Den}(\den)$, and $f_{\mathbf\Img}(\mathbf\img\,|\,\Motion=\motion,\Den=\den)$.

\subsection{Brownian motion of the sample}
The sample motion is mainly due to random temperature fluctuations
 and thus can be modeled using two-dimensional Brownian motion, denoted by $W_t\in\R^2$ for $t\in\R$.
The actual random variable describing the sample motion is $W_{\diff t}$ with $\diff$ encoding the diffusion time scale of the motion.
An actual path, that is, a realization of $W_{\diff t}$ is denoted $w_t$.
Hence, the random variable of accumulated motion up to times $t_1,\ldots,t_N$ is
\begin{equation*}
\Motion=(W_{\diff t_1},\ldots,W_{\diff t_N})
\text{ with realizations }
\motion=(w_{1},\ldots,w_{N})\,.
\end{equation*}
By definition of Brownian motion, $W_{\diff t_i}-W_{\diff t_{i-1}}$ is distributed according to the normal distribution of mean $0$ and covariance $\diff(t_i-t_{i-1})I_2$ with the $2\times2$ identity matrix $I_2$,
for which we use the notation $W_{\diff t_i}-W_{\diff t_{i-1}}\sim\Gauss(0,\diff(t_i-t_{i-1})I_2)$.
Denoting the accumulated random motion up to the begin of the measurement by $w_0$ (by a coordinate shift we may also simply define $w_0$ to be zero), we thus have the probability density function
\begin{equation*}
f_{\Motion}(\motion)
=\prod_{i=1}^Nf_{(W_{\diff t_i}-W_{\diff t_{i-1}})}(w_i-w_{i-1})
=\frac1{\prod_{i=1}^N{2\pi\diff(t_i-t_{i-1})}}\exp\left(-\sum_{i=1}^N\frac{|w_i-w_{i-1}|^2}{2\diff(t_i-t_{i-1})}\right)\,.
\end{equation*}
\subsection{Model for the sample density}
Computer simulations based on forward models of STEM show \cite{Ki10} that the material density for a given sample has the form
\begin{equation*}
\den(x)=\den[\mathbf p](x)=\sum_{\atomIndex=1}^\numAtoms b[\bumpCoeff_\atomIndex](x-\centerPos_\atomIndex)+o\,,
\end{equation*}
where $o$ is a constant (or slowly varying) background gray level, $\numAtoms$ is the number of atom locations visible in $\Omega$,
$\centerPos_\atomIndex\in\Omega$ is the $\atomIndex$\textsuperscript{th} atom location,
$\mathbf p=(\centerPos_1,\ldots,\centerPos_\numAtoms,\bumpCoeff_1\ldots,\bumpCoeff_\numAtoms,o)$ is a vector of parameters,
and $b[\bumpCoeff_\atomIndex]:\R^2\to\R$ is the signal response of a single atom (or of multiple atoms stacked above one another in the atom lattice), parameterized by some coefficient $\bumpCoeff_\atomIndex\in\R^m$.
Since the number of atoms can be readily identified from a STEM measurement, we assume $\numAtoms$ to be fixed.
A good model for the response of a single atom seems to be a Gaussian bell function
of height $\bumpCoeff_{\atomIndex,1}$ (depending on the atom type and the number of stacked atoms) and width $\bumpCoeff_{\atomIndex,2}$ (which only depends on the employed magnification of the microscope),
that is, $m=2$, $\bumpCoeff_\atomIndex=(\bumpCoeff_{\atomIndex,1},\bumpCoeff_{\atomIndex,2})$ and
\begin{equation*}
b[\bumpCoeff_\atomIndex](x)=\bumpCoeff_{\atomIndex,1}\exp\left(\frac{-|x|^2}{2\bumpCoeff_{\atomIndex,2}^2}\right)\,.
\end{equation*}
We will use this particular $b$ throughout the article, but other functions are possible as well.
As noted above, the width $\bumpCoeff_{\atomIndex,2}$ only depends on the employed magnification and the instrument. In particular, it does not depend on the type of the corresponding atom. Thus, it is sufficient to treat the width as a single scalar unknown instead of having a separate width for each atom. Hence, the vector of parameters changes to
$\mathbf p=(\centerPos_1,\ldots,\centerPos_\numAtoms,\bumpCoeff_1\ldots,\bumpCoeff_\numAtoms,\bumpWidth,o)$ with $c_l\in\mathbb{R}$ and $\bumpWidth\in\mathbb{R}$.

The above implies that instead of working with the random variable $\Den$, describing the distribution of material densities $\den$,
we may just as well work with the random variable $\mathbf P$, describing the distribution of parameters $\mathbf p$, and we have
\begin{equation*}
f_{\Den}(\den)=\begin{cases}f_{\mathbf P}(\mathbf p)&\text{if }\den=\den[\mathbf p]\\0&\text{else}\end{cases}
\end{equation*}
and
\begin{equation*}
f_{\mathbf\Img}(\mathbf\img\,|\,\Motion=\motion,\Den=\den)
=\begin{cases}f_{\mathbf\Img}(\mathbf\img\,|\,\Motion=\motion,\mathbf P=\mathbf p)&\text{if }\den=\den[\mathbf p]\\0&\text{else.}\end{cases}
\end{equation*}
For the entries of $\mathbf P$, we assume (in lack of a more appropriate description)
a uniform distribution over a compact set $A\subset\R^\numFitParams$ (for instance $A=[a,b]^\numFitParams$ with $0<a<b$), hence
\begin{equation*}
f_{\mathbf P}(\mathbf p)=
\begin{cases}
\frac1{|A|}&\text{if }\mathbf p\in A\\
0&\text{ else,}
\end{cases}
\end{equation*}
where $|A|$ denotes the volume of $A$, and $\numFitParams=(2+m)\numAtoms+1$ for the general model or $\numFitParams=3\numAtoms+2$ for the Gaussian bump model with a single scalar unknown for the width of the bumps.

\subsection{Model for the intensity noise}\label{sec:noiseModel}
HAADF-STEM essentially counts the electrons from the beam that are deflected by the sample and arrive at a circular annulus centered around the sampling position.
Since the electron deflection events occur independently, this electron count must clearly be Poisson distributed, however,
it is typically multiplied by some gain factor $\alpha>0$ and perturbed by an additive Gaussian noise of distribution $\Gauss(\mu,\sigma^2)$ modeling the (device-specific) background noise of the sensor. While the noise model is usually not studied in detail in the microscopy literature, mixed Poisson-Gaussian noise is suitable in general for CMOS sensors~\cite{FoTrKa08} and thus also applicable to STEM.
Thus, given a fixed and constant material density $\den$, the measured (stochastic) STEM signal $\Img$ during a time interval of length $\Delta t$
is the sum of a Poisson and a Gaussian distributed random variable,
\begin{equation*}
\Img=\alpha\Img_P+\Img_G
\qquad\text{with }
\Img_P\sim\Poisson(\Delta t\den)
\text{ and }
\Img_G\sim\Gauss(\mu,\sigma^2)\,,
\end{equation*}
where $\Poisson(\lambda)$ denotes the Poisson distribution with mean $\lambda$.
Abbreviating $\Delta t\den=z$, the corresponding probability density function is thus given by
\begin{equation*}
f_{\Img}(\img)
=\sum_{k=0}^\infty\frac{z^ke^{-z}}{k!}\frac1{\sqrt{2\pi}\sigma}\exp\left(-\frac{(\img-\alpha k-\mu)^2}{2\sigma^2}\right)=:f^\data(\img;z)\,.
\end{equation*}
For very small $\sigma\ll\alpha z$ the Gaussian component is negligible unless $\alpha k\approx\img-\mu$,
while for very large $\sigma\gg\alpha z$ the Poisson component ensures a concentration at $k\approx z$ so that the distribution can be approximated by
\begin{equation*}
f^\data(\img;z)\approx\begin{cases}
\frac{z^{\llbracket\frac{\img-\mu}\alpha\rrbracket}e^{-z}}{\llbracket\frac{\img-\mu}\alpha\rrbracket!}&\text{if }\sigma\ll\alpha z\\
C\frac1{\sqrt{2\pi}\sigma}\exp\left(-\frac{(\img-\alpha z-\mu)^2}{2\sigma^2}\right)&\text{if }\sigma\gg\alpha z
\end{cases}
\end{equation*}
(where $\llbracket\cdot\rrbracket$ denotes rounding to the nearest nonnegative integer),
however, one can also stick with a numerical approximation of $f^\data(\img;z)$.
Note that above we integrated out the actual electron count $k$ and the actual realization $\img-\alpha k$ of the background noise, since neither quantity is of interest to us.
This is conceptually different from keeping one of them as an auxiliary variable as we have done for $\motion$.
The resulting maximum a posteriri estimate for $\den$ will differ slightly between both approaches,
where the approach without additional variables takes into account the stochastic behavior in a larger region of the probability space.

\subsection{Influence of sample motion on signal}
Unfortunately, during a measurement interval of time $\Delta t$ at a position $x_i$ the sample is not stationary,
but undergoes a stochastic translation according to two-dimensional Brownian motion.
Recall that the (continuous) path of the sample as a function of time $t\in\R$ is described by $w_t\in\R^2$.
Thus, the material density during the measurement interval $[t_i-\Delta t,t_i)$ at position $x_i$ is not $\den(x_i)$,
but rather changes over time and is given by $\den(x_i+w_t)$.
Consequently, the electron count is not distributed according to $\Poisson(\Delta t\den(x_i))$,
but rather according to
\begin{equation*}
\Img_P\sim\Poisson\left(\int_{t_i-\Delta t}^{t_i}\den(x_i+w_t)\,\d t\right)\,.
\end{equation*}
Indeed, partition the interval $[t_i-\Delta t,t_i)$ into $M$ subintervals $I_{1},\ldots,I_M$ of length $h=\frac{\Delta t}M$
and let $\underline\den_{ij}=\min_{t\in I_j}\den(x_i+w_t)$, $\overline\den_{ij}=\max_{t\in I_j}\den(x_i+w_t)$.
Then, the electron count $\Img_{Pj}$ during the $j$\textsuperscript{th} subinterval is distributed according to a distribution $\Poisson_j$
between $\underline\Poisson_j=\Poisson(h\underline\den_{ij})$ and $\overline\Poisson_j=\Poisson(h\overline\den_{ij})$ in the sense
that the cumulative distribution function of $\Poisson_j$ lies everywhere between those of $\underline\Poisson_j$ and $\overline\Poisson_j$.
Likewise, letting $\underline\Img_{Pj}$ and $\overline\Img_{Pj}$ denote random variables
with distribution $\underline\Poisson_j$ and $\overline\Poisson_j$, respectively,
the distribution of the sum $\Img_P=\sum_{j=1}^M\Img_{Pj}$ lies between the distributions of $\sum_{j=1}^M\underline\Img_{Pj}$ and $\sum_{j=1}^M\overline\Img_{Pj}$,
which are well-known to be Poisson distributed according to
\begin{equation*}
\sum_{j=1}^M\underline\Img_{Pj}\sim\Poisson\left(\sum_{j=1}^Mh\underline\den_{ij}\right)\,,\qquad
\sum_{j=1}^M\overline\Img_{Pj}\sim\Poisson\left(\sum_{j=1}^Mh\overline\den_{ij}\right)\,.
\end{equation*}
As $M\to\infty$ and $h\to0$, both distributions uniformly converge against $\Poisson\left(\int_{t_i-\Delta t}^{t_i}\den(x_i+w_t)\,\d t\right)$ so that summarizing,
the measured signal at location $x_i$ satisfies
\begin{equation*}
\Img_i=\alpha\Img_P+\Img_G
\qquad\text{with }
\Img_P\sim\Poisson\left(\int_{t_i-\Delta t}^{t_i}\den(x_i+w_t)\,\d t\right)
\text{ and }
\Img_G\sim\Gauss(\mu,\sigma^2)\,.
\end{equation*}

\subsection{Signal distribution for stochastic motion}
Since $w_t$ is not known, but just a realization of $W_{\diff t}$, the parameter $\int_{t_i-\Delta t}^{t_i}\den(x_i+w_t)\,\d t$ inside the distribution of $\Img_i$
is itself just a realization of the random variable $\Lambda=\int_{t_i-\Delta t}^{t_i}\den(x_i+W_{\diff t})\,\d t$.
Thus, the Poisson component $\Img_P$ of the signal is actually distributed according to a probability distribution
\begin{equation*}
P(k)=\int_0^\infty\frac{\lambda^ke^{-\lambda}}{k!}f_{\Lambda}(\lambda)\,\d\lambda\,,
\end{equation*}
where $f_\Lambda$ denotes the probability density of $\Lambda$, still to be determined.

Let us assume $\den$ is analytic (this is true for our model of $\den$) and thus can be expanded into a Taylor series about any position $x_i+w_i$,
\begin{equation*}
\den(x)=\sum_{j=0}^\infty u_{ij}(\underbrace{x-x_i-w_i,\ldots,x-x_i-w_i}_{j\text{ times}})
\qquad\text{with }
u_{ij}=\tfrac1{j!}D^j\den(x_i+w_i)\,.
\end{equation*}
As a consequence,
\begin{equation*}
\Lambda
=\Delta t\den(x_i+w_i)+\tilde\Lambda
\qquad\text{with }
\tilde\Lambda=\sum_{j=1}^\infty u_{ij}\left(\int_{t_i-\Delta t}^{t_i}W_{\diff t}\,\d t-\Delta tw_i,\ldots,\int_{t_i-\Delta t}^{t_i}W_{\diff t}\,\d t-\Delta tw_i\right)\,.
\end{equation*}
It is readily verified that as long as the $u_{ij}$ decrease fast enough in $j$ (for instance, if the derivatives $|D^j\den(x_i+w_i)|$ increase at most exponentially in $j$),
then $\tilde\Lambda$ is a random variable with moments (in particular mean and standard deviation) that are small compared to $\Delta t$
(for instance, we show further below that the highest order term in $\tilde\Lambda$ is of order $\Delta t(w_i-w_{i-1})$).
Since the dwell time $\Delta t$ is small, we shall in our model approximate $\Lambda\approx u_i:=\Delta t\den(x_i+w_i)$ and $f_\Lambda=\delta_{u_i}$,
where $\delta_{u_i}$ denotes the Dirac measure centered at $u_i$.
Thus, we obtain $P(k)=\frac{u_i^ke^{-u_i}}{k!}$ and
\begin{equation*}
\Img_i=\alpha\Img_P+\Img_G
\qquad\text{with }
\Img_P\sim\Poisson\left(u_i\right)
\text{ and }
\Img_G\sim\Gauss(\mu,\sigma^2)
\end{equation*}
and therefore
\begin{equation*}
f_{\Img_i}(\img_i\,|\,W_{\diff t_{i-1}}=w_{i-1},W_{\diff t_i}=w_i,\Den=\den)
=f^\data(\img_i;u_i)\,.
\end{equation*}

\paragraph{Improved approximation.}
If desired, the distribution of $\Lambda$ can be more accurately approximated, for instance by incorporating terms of higher order in $\Delta t$.
As above we have
\begin{equation*}
\Lambda
=\Delta t\den(x_i+w_i)+D\den(x_i+w_i)\left(\int_{t_i-\Delta t}^{t_i}W_{\diff t}\,\d t-\Delta tw_i\right)+\hat\Lambda\,,
\end{equation*}
where $\hat\Lambda$ is a random variable with negligible moments if the $u_{ij}$ decrease fast enough.
We now examine the distribution of $X_i=\int_{t_i-\Delta t}^{t_i}W_{\diff t}\,\d t$ under the conditions $W_{\diff t_{i-1}}=w_{i-1}$ and $W_{\diff t_i}=w_i$.
For the time being let us assume $t_{i-1}<t_i-\Delta t$ strictly.
Approximating the integral by a Riemann sum with interval width $h=\frac{\Delta t}M$ and $t_i^j=t_{i}-\Delta t+jh$ for $j=0,\ldots,M$ yields
\begin{multline*}
X_i^h=\sum_{j=1}^M(t_i^j-t_i^{j-1})W_{\diff t_i^j}
=(t_i^M-t_i^0)W_{\diff t_i^M}+\sum_{j=1}^{M-1}(t_i^j-t_i^0)(W_{\diff t_i^j}-W_{\diff t_i^{j+1}})\\
=\Delta tw_{i}+\sum_{j=1}^{M-1}(t_i^j-t_i^0)(W_{\diff t_i^j}-W_{\diff t_i^{j+1}})\,.
\end{multline*}
Setting $t_i^{-1}=t_{i-1}$, $\Delta T=t_i-\Delta t-t_{i-1}$ and $Y_j=W_{\diff t_i^j}-W_{\diff t_i^{j-1}}$ for $j=0,\ldots,M$
we first note that $\mathbf Y=(Y_0,\ldots,Y_M)^T$ is normally distributed with mean $0\in\R^{M+1}$ and diagonal variance $V=\diff\mathrm{diag}(\Delta T,h,\ldots,h)$, that is, $\mathbf Y\sim\Gauss(0,V)$.
Introducing the vectors $a=(0,0,t_i^1-t_i^0,\ldots,t_i^{M-1}-t_i^0)^T$ and $e=(1,\ldots,1)^T$ we obtain
\begin{equation*}
X_i^h
=\Delta tw_{i}-\mathbf Y^Ta
=\Delta tw_{i}-\tfrac{e^TVa}{e^TVe}\mathbf Y^Te-\mathbf Y^T(a-\tfrac{e^TVa}{e^TVe}e)
=\Delta tw_{i}+X_{i,1}^h+X_{i,2}^h\,.
\end{equation*}
Obviously, $X_{i,1}^h$ and $X_{i,2}^h$ are normally distributed.
Furthermore, since $e$ and $(a-\tfrac{e^TVa}{e^TVe}e)$ are $V$-orthogonal, $X_{i,1}^h$ and $X_{i,2}^h$ are independent.
Using $e^TVa=\diff h^2\frac{M(M-1)}2=\diff\frac{M-1}{2M}\Delta t^2$ and $e^TVe=\diff(Mh+\Delta T)=\diff(\Delta t+\Delta T)$ we have
\begin{align*}
X_{i,1}^h&\sim\Gauss\left(0,\left[(\tfrac{M-1}{M}\tfrac{\Delta t}{\Delta t+\Delta T}\tfrac{\Delta t}2)^2\left[\Delta T+Mh\right]\right]\diff I_2\right)
=\Gauss\left(0,\left[(\tfrac{M-1}{M})^2(\tfrac{\Delta t}2)^2\tfrac{\Delta t^2}{\Delta t+\Delta T}\right]\diff I_2\right)\,,\\
X_{i,2}^h&\sim\Gauss\left(0,\left[(\Delta T+h)(\tfrac{M-1}{M}\tfrac{\Delta t}{\Delta t+\Delta T}\tfrac{\Delta t}2)^2+h\sum_{j=2}^M(t_i^{j-1}-t_i^0-\tfrac{M-1}{M}\tfrac{\Delta t}{\Delta t+\Delta T}\tfrac{\Delta t}2)^2\right]\diff I_2\right)\,.
\end{align*}
Now the condition $W_{t_i}=w_i$ and $W_{t_{i-1}}=w_{i-1}$ is equivalent to $\tfrac{M-1}{M}\tfrac{\Delta t}{\Delta t+\Delta T}\tfrac{\Delta t}2(w_i-w_{i-1})=\tfrac{e^TVa}{e^TVe}\sum_{j=0}^MY_j=-X_{i,1}^h$,
thus under this condition we have
\begin{multline*}
X_i^h
=\Delta tw_{i}-\tfrac{M-1}{M}\tfrac{\Delta t}{\Delta t+\Delta T}\tfrac{\Delta t}2(w_i-w_{i-1})+X_{i,2}^h
\sim\Gauss\Bigg(\Delta tw_{i}-\tfrac{M-1}{M}\tfrac{\Delta t}{\Delta t+\Delta T}\tfrac{\Delta t}2(w_i-w_{i-1}),\\
\textstyle\left[(\Delta T+h)(\tfrac{M-1}{M}\tfrac{\Delta t}{\Delta t+\Delta T}\tfrac{\Delta t}2)^2+h\sum_{j=2}^M(t_i^{j-1}-t_i^0-\tfrac{M-1}{M}\tfrac{\Delta t}{\Delta t+\Delta T}\tfrac{\Delta t}2)^2\right]\diff I_2\Bigg)\,.
\end{multline*}
As $h=\frac1M\to0$ this converges against
\begin{multline*}
X_i
\sim\Gauss\left((\tfrac{\Delta T+\Delta t/2}{\Delta T+\Delta t}w_i+\tfrac{\Delta t/2}{\Delta T+\Delta t}w_{i-1})\Delta t,\left[(\tfrac{\Delta t/2}{\Delta T+\Delta t})^2\Delta T\Delta t^2+\int_{t_i-\Delta t}^{t_i}(t-t_i^0-\tfrac{\Delta t/2}{\Delta T+\Delta t}\Delta t)^2\,\d t\right]\diff I_2\right)\\
=\Gauss\left((\tfrac{\Delta T+\Delta t/2}{\Delta T+\Delta t}w_i+\tfrac{\Delta t/2}{\Delta T+\Delta t}w_{i-1})\Delta t,\left[(\tfrac{\Delta t/2}{\Delta T+\Delta t})^2\Delta T\Delta t^2+\tfrac{\Delta t^3}3\left[(\tfrac{\Delta T+\Delta t/2}{\Delta T+\Delta t})^3+(\tfrac{\Delta t/2}{\Delta T+\Delta t})^3\right]\right]\diff I_2\right)\,.
\end{multline*}
In the case $\Delta T=0$, an analogous argument leads to
\begin{equation*}
X_i\sim\Gauss(\Delta t\tfrac{w_i+w_{i-1}}2,\tfrac{\Delta t^3}{12}\diff I_2)\,.
\end{equation*}
As a result,
\begin{equation*}
\Lambda-\hat\Lambda\sim\Gauss\left(m,s^2\right)
\end{equation*}
for the mean and variance
\begin{align*}
m&=\Delta t\left[\den(x_i+w_i)-\tfrac{\Delta t/2}{\Delta T+\Delta t}D\den(x_i+w_i)(w_i-w_{i-1})\right]\,,\\
s^2&=\diff\left[(\tfrac{\Delta t/2}{\Delta T+\Delta t})^2\Delta T\Delta t^2+\tfrac{\Delta t^3}3\left[(\tfrac{\Delta T+\Delta t/2}{\Delta T+\Delta t})^3+(\tfrac{\Delta t/2}{\Delta T+\Delta t})^3\right]\right]|D\den(x_i+w_i)|^2\,.
\end{align*}
With this approximation for $\Lambda$ we obtain
\begin{equation*}
f_{\Img_i}(\img_i\,|\,W_{\diff t_{i-1}}=w_{i-1},W_{\diff t_i}=w_i,\Den=\den)
=\int_\R f^\data(\img_i;z)\tfrac1{\sqrt{2\pi}s}\exp(-\tfrac{(z-m)^2}{2s^2})\,\d z\,,
\end{equation*}
which only depends on $\img_i,w_{i-1}$, $w_i$, $\den(x_i+w_i)$, $D\den(x_i+w_i)$, $\mu$, $\sigma$, and $\alpha$.

\subsection{The variational model}\label{sec:variationalModel}
Since $\Img_i$ only depends on $\Den$, $W_{\diff t_{i-1}}$, and $W_{\diff t_i}$, we may write
\begin{equation*}
f_{\mathbf\Img}(\mathbf\img\,|\,\Motion=\motion,\Den=\den)
=\prod_{i=1}^Nf_{\Img_i}(\img_i\,|\,W_{\diff t_{i-1}}=w_{i-1},W_{\diff t_i}=w_i,\Den=\den)\,.
\end{equation*}
As explained previously, we shall be looking for the widely used maximum a posteriori (MAP) estimate of $\motion$ and $\den$ (or equivalently $\mathbf p$).
To this end, we shall minimize the negative logarithm of $f_{\Motion,\mathbf P}(\motion,\mathbf p\,|\,\mathbf\Img=\mathbf\img)$, which by Section\,\ref{sec:Bayes} and the subsequent sections can be expressed as
\begin{align*}
E[\motion,\mathbf p]
&=-\log f_{\Motion,\mathbf P}(\motion,\mathbf p\,|\,\mathbf\Img=\mathbf\img)\\
&=\text{const.}-\sum_{i=1}^N\log f_{\Img_i}(\img_i\,|\,W_{\diff t_{i-1}}=w_{i-1},W_{\diff t_i}=w_i,\Den=\den[\mathbf p])-\log f_{\Motion}(\motion)-\log f_{\mathbf P}(\mathbf p)\\
&=\text{const.}+\sum_{i=1}^N\diss(\img_i,\Delta t\den[\mathbf p](x_i+w_i))+\frac1{2\diff}\sum_{i=1}^N\frac{|w_i-w_{i-1}|^2}{(t_i-t_{i-1})}+\iota_A(\mathbf p)\,,
\end{align*}
where $\iota_A$ denotes the indicator function of the set $A$ and
\begin{equation*}
\diss(g,z)=-\log f^\data(g;z)
\end{equation*}
is a data dissimilarity.
The minimizers $(\motion,\mathbf p)$ of the energy $E$ serve as estimates of the true sample motion and atom configuration,
where the well-posedness of the minimization is shown in the following theorem.
\begin{theorem}[Existence of minimizers]\label{thm:existence}
Let $\mathbf p\mapsto\den[\mathbf p]$ be a continuous mapping from $A$ into $C(\bar\Omega)$.
Then, $E[\motion,\mathbf p]$ possesses a minimizer in $\R^{2N}\times A$.
\end{theorem}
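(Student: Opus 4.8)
The plan is to use the direct method of the calculus of variations. First I would observe that $E$ is not coercive in the $\motion$ variable as stated — the regularization $\frac1{2\diss}\sum_i\frac{|w_i-w_{i-1}|^2}{t_i-t_{i-1}}$ only controls the \emph{increments} $w_i-w_{i-1}$, and adding a constant vector to all $w_i$ leaves that term unchanged; moreover the data term $\diss(\img_i,\Delta t\den[\mathbf p](x_i+w_i))$ need not grow as $|w_i|\to\infty$ since $\den[\mathbf p]$ is merely continuous on $\bar\Omega$ (and the energy involves $\den[\mathbf p](x_i+w_i)$ with $x_i+w_i$ possibly leaving $\Omega$). So the first real step is to argue that we may restrict the minimization to a compact subset of $\R^{2N}\times A$. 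Since $A$ is compact by assumption, only the $\motion$ part needs attention.

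For that I would take a minimizing sequence $(\motion^{(n)},\mathbf p^{(n)})$. Compactness of $A$ gives a subsequence with $\mathbf p^{(n)}\to\mathbf p^*\in A$. To handle $\motion^{(n)}$, the key point is that $\den[\mathbf p]$ extends (or can be assumed extended) to a bounded continuous function, or — cleaner — that $\den[\mathbf p]\ge 0$ so the dissimilarity $\diss(g,z)=-\log f^{\data}(g;z)$ is bounded below uniformly (one checks $f^{\data}(g;z)\le \frac1{\sqrt{2\pi}\sigma}$, hence $\diss\ge\log(\sqrt{2\pi}\sigma)$, independent of $g,z$). Consequently along a minimizing sequence the regularization term $\sum_i\frac{|w_i^{(n)}-w_{i-1}^{(n)}|^2}{t_i-t_{i-1}}$ stays bounded, so with $w_0=0$ fixed, a telescoping/Cauchy–Schwarz estimate bounds each $|w_i^{(n)}|$ uniformly. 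Thus $\motion^{(n)}$ is bounded and, up to a further subsequence, $\motion^{(n)}\to\motion^*$ in $\R^{2N}$.

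The remaining step is lower semicontinuity of $E$ along this convergent subsequence. The regularization term is a finite sum of continuous functions of $\motion$, hence continuous. For the data term, continuity of $\mathbf p\mapsto\den[\mathbf p]$ into $C(\bar\Omega)$ together with $\motion^{(n)}\to\motion^*$ gives $\den[\mathbf p^{(n)}](x_i+w_i^{(n)})\to\den[\mathbf p^*](x_i+w_i^*)$ for each $i$ (uniform convergence of the functions composed with convergence of the evaluation points; if $x_i+w_i^*$ can lie outside $\bar\Omega$ one first fixes a continuous bounded extension of each $\den[\mathbf p]$ to $\R^2$, depending continuously on $\mathbf p$, e.g.\ the constant-background extension suggested by the model). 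It then remains to check that $z\mapsto f^{\data}(g;z)$ is continuous and strictly positive for $z\ge 0$ — which follows because the defining series converges locally uniformly in $z$ and every term is positive — so that $\diss(g,\cdot)$ is continuous and finite on $[0,\infty)$. Hence $\mathbf p\mapsto\diss(\img_i,\Delta t\,\den[\mathbf p](x_i+w_i))$ is continuous, the whole energy is continuous along the subsequence, and $E[\motion^*,\mathbf p^*]\le\liminf_n E[\motion^{(n)},\mathbf p^{(n)}]=\inf E$, so $(\motion^*,\mathbf p^*)$ is a minimizer (note $\iota_A(\mathbf p^*)=0$ since $A$ is closed).

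The main obstacle is the non-coercivity in $\motion$: one must exploit that the dissimilarity is bounded below (so the data term cannot subsidize an unbounded regularization term) and that the regularization controls increments rooted at the fixed $w_0=0$. A secondary, more technical point is making sense of $\den[\mathbf p](x_i+w_i)$ when the perturbed location leaves $\Omega$; the honest fix is to replace $C(\bar\Omega)$ by a bounded extension to $\R^2$ depending continuously on $\mathbf p$, after which everything goes through routinely.
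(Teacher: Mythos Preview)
Your proof is correct and follows the same direct-method approach as the paper, which simply asserts lower semicontinuity, coercivity (in the sense that the energy diverges if any component does), boundedness from below, and the existence of a finite-energy admissible point. Your argument is in fact more careful than the paper's terse version: you explicitly verify the uniform lower bound on $\diss$, explain why coercivity in $\motion$ genuinely requires the anchor $w_0=0$ (which the paper sets earlier but does not invoke in the proof), and flag the technical issue of $x_i+w_i$ possibly leaving $\bar\Omega$---a point the paper silently sidesteps since its concrete Gaussian-bump density is naturally defined on all of $\R^2$.
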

\begin{proof}
The energy is lower semi-continuous in all variables (note in particular that due to the continuous dependence of $\den[\mathbf p]$ on $\mathbf p$ the evaluation at $x_i+w_i$ is continuous).
Furthermore, the enery is coercive in the sense that if any component of the variables diverges, then so does the energy.
Finally, the choice $\motion=0$ and $\mathbf p\in A$ arbitrary yields finite energy, and all energy terms are globally bounded from below by a constant only depending on $\mathbf\img$.
Existence of minimizers thus follows by the standard direct method of the calculus of variations.
\end{proof}

For later reference, let us here also state a property of the data dissimilarity $\diss$.
The property essentially means that for small enough Gaussian sensor noise, $\diss$ approximates the Kullback--Leibler divergence.
This is not surprising, since for small $\sigma$ the measured signals will almost follow the Poisson distribution, whose logarithm is well-known to lead to the Kullback--Leibler divergence.
In the following, $o$ and $O$ denote the Landau symbols.
Also recall the notation $\llbracket\cdot\rrbracket$ for rounding to the nearest \emph{nonnegative} integer.
\begin{lemma}\label{thm:dataTerm}
Let $\frac\sigma\alpha\to0$.
Furthermore, let $\frac{g-\mu}\alpha\in o(\exp((\frac\alpha\sigma)^2))$, $\left|\llbracket\tfrac{g-\mu}\alpha\rrbracket-\tfrac{g-\mu}\alpha\right|<c<\frac12$ and $z\in[\underline z,\overline z]$ for some fixed $0<\underline z<\overline z$,
then $$\diss(g,z)=z-\llbracket\tfrac{g-\mu}\alpha\rrbracket\log z+\tilde C(g-\mu,\alpha,\sigma)+O(\tfrac\sigma\alpha)\,,$$
where the constant $\tilde C(g-\mu,\alpha,\sigma)$ only depends on $g-\mu$, $\alpha$, and $\sigma$, but not on $z$.
\end{lemma}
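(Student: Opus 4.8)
The plan is to estimate the infinite series $f^\data(g;z)=\sum_{k=0}^\infty\frac{z^ke^{-z}}{k!}\frac1{\sqrt{2\pi}\sigma}\exp\!\left(-\frac{(g-\alpha k-\mu)^2}{2\sigma^2}\right)$ by isolating the single term $k=k_0:=\llbracket\frac{g-\mu}\alpha\rrbracket$ and showing that all other terms are negligible as $\frac\sigma\alpha\to0$. First I would write $\frac{g-\mu}\alpha=k_0+\theta$ with $|\theta|<c<\frac12$, so that $g-\alpha k-\mu=\alpha(k_0-k+\theta)$ and hence the Gaussian weight of the $k$-th term is $\frac1{\sqrt{2\pi}\sigma}\exp\!\left(-\frac{\alpha^2(k_0-k+\theta)^2}{2\sigma^2}\right)$. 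For $k=k_0$ this equals $\frac1{\sqrt{2\pi}\sigma}\exp(-\frac{\alpha^2\theta^2}{2\sigma^2})$, which is a quantity depending only on $g-\mu,\alpha,\sigma$; for $k\neq k_0$ we have $(k_0-k+\theta)^2\geq(1-c)^2>0$, so each such weight is smaller by a factor $\exp(-\frac{\alpha^2((k_0-k+\theta)^2-\theta^2)}{2\sigma^2})$, which decays like $\exp(-\Omega((\frac\alpha\sigma)^2))$ as $\frac\sigma\alpha\to0$.

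Next I would bound the tail sum $\sum_{k\neq k_0}\frac{z^ke^{-z}}{k!}\frac1{\sqrt{2\pi}\sigma}\exp(-\frac{\alpha^2(k_0-k+\theta)^2}{2\sigma^2})$. Since $z\in[\underline z,\overline z]$ is bounded, $\sum_k\frac{z^k}{k!}e^{-z}=1$, so the Poisson weights contribute at most $1$; the bottleneck is that the Gaussian factors for $k$ far from $k_0$ are tiny but there are infinitely many of them. Here I would use that the Gaussian weights are summable: $\sum_{j\in\Z}\exp(-\frac{\alpha^2(j+\theta)^2}{2\sigma^2})=O(\frac\sigma\alpha)$ by comparison with a Gaussian integral (for $\frac\sigma\alpha$ small the sum is dominated by $j=0$, which is excluded, giving the stated order $O(\frac\sigma\alpha)$ after dividing out the normalizing $\frac1{\sqrt{2\pi}\sigma}$ — more precisely the excluded-$j=0$ tail sum relative to the $k_0$ term is $O(\exp(-\Omega((\alpha/\sigma)^2)))$, which is $o(\frac\sigma\alpha)$). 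The hypothesis $\frac{g-\mu}\alpha\in o(\exp((\frac\alpha\sigma)^2))$ enters to control the Poisson factor $\frac{z^{k_0}}{k_0!}$: using Stirling, $\log\frac{z^{k_0}}{k_0!}=k_0\log z-k_0\log k_0+k_0+O(\log k_0)$, and one must check this does not overwhelm the Gaussian decay $-\Omega((\alpha/\sigma)^2)$ of the competing terms — the growth condition on $k_0$ is exactly what guarantees $\log k_0=o((\alpha/\sigma)^2)$.

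Therefore $f^\data(g;z)=\frac{z^{k_0}e^{-z}}{k_0!}\frac1{\sqrt{2\pi}\sigma}\exp(-\frac{\alpha^2\theta^2}{2\sigma^2})\left(1+O(\tfrac\sigma\alpha)\right)$, and taking $\diss(g,z)=-\log f^\data(g;z)$ gives
\begin{equation*}
\diss(g,z)=z-k_0\log z+\log k_0!+\log(\sqrt{2\pi}\sigma)+\tfrac{\alpha^2\theta^2}{2\sigma^2}+O(\tfrac\sigma\alpha)\,,
\end{equation*}
using $\log(1+O(\frac\sigma\alpha))=O(\frac\sigma\alpha)$. Collecting the $z$-independent terms into $\tilde C(g-\mu,\alpha,\sigma):=\log k_0!+\log(\sqrt{2\pi}\sigma)+\frac{\alpha^2\theta^2}{2\sigma^2}$ (note $k_0$ and $\theta$ are functions of $g-\mu$ and $\alpha$ only) yields the claim $\diss(g,z)=z-\llbracket\frac{g-\mu}\alpha\rrbracket\log z+\tilde C(g-\mu,\alpha,\sigma)+O(\frac\sigma\alpha)$.

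The main obstacle is controlling the error uniformly: the $O(\frac\sigma\alpha)$ must hold uniformly over $z\in[\underline z,\overline z]$ and over all admissible $g$ simultaneously, which is why the boundedness of $z$ away from $0$ and $\infty$ and the strict bound $c<\frac12$ on $\theta$ are needed (the latter prevents the nearest competing term $k=k_0\pm1$ from approaching the same Gaussian weight as $k_0$). The condition $\frac{g-\mu}\alpha\in o(\exp((\alpha/\sigma)^2))$ is the delicate quantitative input, and verifying that Stirling's estimate for $\log k_0!$ interacts correctly with it — so that the relative contribution of the tail is genuinely $o(1)$, in fact $O(\frac\sigma\alpha)$ — is the step I expect to require the most care.
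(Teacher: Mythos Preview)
The overall structure---factoring out the $k=k_0$ term and bounding the remainder relative to it---is exactly the paper's strategy, and your final formula for $\tilde C$ matches. However, your tail estimate has a quantitative gap. You bound the tail via ``Poisson weights contribute at most $1$'', which amounts to
\[
\sum_{k\neq k_0}\frac{z^ke^{-z}}{k!}\cdot(\text{Gaussian}_k)
\;\leq\;
\sum_{k\neq k_0}(\text{Gaussian}_k)
\;\lesssim\;
\exp\!\Big(-\tfrac{\alpha^2(1-c)^2}{2\sigma^2}\Big)\,.
\]
To convert this into a \emph{relative} error you must divide by the $k_0$-term, which carries the Poisson factor $\frac{z^{k_0}e^{-z}}{k_0!}$. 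Since $z\in[\underline z,\overline z]$ is bounded while $k_0$ may be enormous, this factor can be as small as $e^{-k_0\log k_0+O(k_0)}$ (by the very Stirling expansion you write down). The relative tail is therefore of order $\exp\big(k_0\log k_0-\Omega((\alpha/\sigma)^2)\big)$, and the hypothesis gives at best $\log k_0\lesssim(\alpha/\sigma)^2$, not $k_0\log k_0\lesssim(\alpha/\sigma)^2$, so this need not tend to zero. Your sentence ``the growth condition on $k_0$ is exactly what guarantees $\log k_0=o((\alpha/\sigma)^2)$'' names the wrong quantity: with your bounding scheme it is $k_0\log k_0$, not $\log k_0$, that must be dominated.

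The paper avoids this loss by bounding each ratio
\[
r_k=\frac{k_0!}{k!}\,z^{k-k_0}\exp\!\Big(\tfrac{\alpha^2}{2\sigma^2}\big[(\tfrac{g-\mu}\alpha-k_0)^2-(\tfrac{g-\mu}\alpha-k)^2\big]\Big)
\]
individually rather than the absolute tail. The point is that $\frac{k_0!}{k!}\leq k_0^{|k_0-k|}$ contributes only $|k_0-k|\log k_0$ to the exponent---\emph{linear} in $|k_0-k|$---while the Gaussian ratio contributes $-\tfrac{\alpha^2}{2\sigma^2}(1-2c)(k_0-k)^2$, which is \emph{quadratic}. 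Hence $r_k\leq\exp\big(-\tfrac{\alpha^2}{4\sigma^2}(1-2c)(k_0-k)^2\big)$ once $\log k_0+C\leq\tfrac{\alpha^2}{4\sigma^2}(1-2c)$, and then $\sum_{k\neq k_0}r_k$ is controlled by a Gaussian integral, yielding the $O(\sigma/\alpha)$ bound. The fix is thus to keep the Poisson and Gaussian factors together when forming the term-by-term ratio, rather than separating them with the crude estimate $\frac{z^ke^{-z}}{k!}\leq1$.
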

\begin{proof}
Let us abbreviate $K=\llbracket\tfrac{g-\mu}\alpha\rrbracket$, then $\log K\in o(\frac{\alpha^2}{\sigma^2})$.
We have
\begin{equation*}
f^\data(g;z)
=\frac1{\sqrt{2\pi}\sigma}\frac{e^{-z}z^K}{K!}\exp\left(-\frac{(g-\mu-\alpha K)^2}{2\sigma^2}\right)\left(1+\sum_{\substack{k=0\\k\neq K}}^\infty r_k\right)
\end{equation*}
for
\begin{align*}
r_k
&=\frac{K!}{k!}z^{k-K}\exp\left(\frac{(g-\mu-\alpha K)^2-(g-\mu-\alpha k)^2}{2\sigma^2}\right)\\
&=\exp\left(\log K!-\log k!+(k-K)\log z+\frac{\alpha^2}{2\sigma^2}\left[\left(\frac{g-\mu}\alpha-K\right)^2-\left(\frac{g-\mu}\alpha-k\right)^2\right]\right)\\
&\leq\exp\left(\log K!-\log k!+|K-k|C+\frac{\alpha^2}{2\sigma^2}\left[c^2-(1-c)^2\left(K-k\right)^2\right]\right)\\
&\leq\exp\left(\log K!-\log k!+|K-k|C-\frac{\alpha^2}{2\sigma^2}[1-2c]\left(K-k\right)^2\right)
\end{align*}
for $C=\max(|\log\underline z|,|\log\overline z|)$,
where we used $(\frac{g-\mu}\alpha-k)^2=\left(1+\frac{\frac{g-\mu}\alpha-K}{K-k}\right)^2(K-k)^2\geq(1-\frac c1)^2(K-k)^2$.
Now, if $k>K$, then
\begin{equation*}
r_k
\leq\exp\left(C(k-K)-\tfrac{\alpha^2}{2\sigma^2}\left[1-2c\right]\left(K-k\right)^2\right)
\leq\exp\left(\left[C-\tfrac{\alpha^2}{2\sigma^2}[1-2c]\right]\left(K-k\right)^2\right)\,,
\end{equation*}
while for $k<K$,
\begin{multline*}
r_k
\leq\exp\left(\log K+\log(K-1)+\ldots+\log(k+1)+C(K-k)-\tfrac{\alpha^2}{2\sigma^2}\left[1-2c\right]\left(K-k\right)^2\right)\\
\leq\exp\left((\log K+C)(K-k)-\tfrac{\alpha^2}{2\sigma^2}\left[1-2c\right]\left(K-k\right)^2\right)
\leq\exp\left(\left[C+\log K-\tfrac{\alpha^2}{2\sigma^2}[1-2c]\right]\left(K-k\right)^2\right)\,.
\end{multline*}
Thus, for $\frac\alpha\sigma$ large enough,
we obtain in both cases $r_k\leq\exp\left(-\tfrac{\alpha^2}{4\sigma^2}[1-2c]\left(K-k\right)^2\right)$ so that
\begin{equation*}
0
\leq\sum_{\substack{k=0\\k\neq K}}^\infty r_k
\leq\sum_{\substack{k=0\\k\neq K}}^\infty\exp\left(-\frac{\alpha^2}{4\sigma^2}[1-2c]\left(K-k\right)^2\right)
\leq\int_{-\infty}^\infty\exp\left(-\frac{\alpha^2}{4\sigma^2}[1-2c]\left(K-k\right)^2\right)\,\d k
=2\sqrt{\frac\pi{1-2c}}\frac\sigma\alpha\,.
\end{equation*}
Therefore,
\begin{equation*}
\diss(g,z)
=-\log f^\data(g;z)
=z-K\log z+\tfrac{\log(2\pi\sigma^2)}2+\log K!+\frac{(g-\mu-\alpha K)^2}{2\sigma^2}-\log\left(1+O\left(\frac\sigma\alpha\right)\right)\,.
\qedhere
\end{equation*}
\end{proof}

\subsection{Scan mode and redundant image acquisition}\label{sec:scanMode}
Here, we specify the scan mode with which the experimental data of the later sections is obtained.
In those experiments, the instrument measures pixels row by row along $N_2$ rows of $N_1$ pixels each, that is,
there are horizontal and vertical pixel distances $\Delta x_1,\Delta x_2$ with $N_1\Delta x_1=N_2\Delta x_2=a$, $N=N_1N_2$, and
\begin{equation}
\label{eq:STEMIndexing}
\left.
\begin{split}
x_i&=x_{ml}=(m\Delta x_1,l\Delta x_2)\,,\\
t_i&=(lN_1+m)\Delta t+l\Delta T
\end{split}
\right\}
\quad\text{for }
m=(i-1)\mathrm{mod}N_1+1
\text{ and }
l=\lfloor\tfrac{i-1}{N_1}\rfloor+1\,,
\end{equation}
$\lfloor\cdot\rfloor$ representing the integer part.

Finally, to obtain a better resolution, the same sample is sometimes imaged multiple times, say $K$ times, potentially after sample rotations $R_k\in SO(2)$, $k=1,\ldots,K$, yielding measurements $\mathbf{\img}^k$, $k=1,\ldots,K$.
In that case, we have Brownian motions $\motion^k$ for each image acquisition, and an analogous derivation as before yields the objective functional
\begin{equation}\label{eqn:objectiveMultipleInputs}
E^K[\motion^1,\ldots,\motion^K,\mathbf p]
=\sum_{k=1}^K\left[\sum_{i=1}^N\diss(\img_i^k,\Delta t\den[\mathbf p](R_k(x_i+w_i^k)))+\frac1{2\diff}\sum_{i=1}^N\frac{|w_i^k-w_{i-1}^k|^2}{(t_i-t_{i-1})}\right]+\iota_A(\mathbf p)\,,
\end{equation}
to be minimized for $\motion^1,\ldots,\motion^K,\mathbf p$,
where as usual, we abbreviated $w_0^k=0$.
An analogous argument as in Theorem~\ref{thm:existence} shows the well-posedness of this slightly extended energy.

\section{The corresponding continuous limit models}\label{sec:limitModels}
STEM essentially samples the probe at a discrete set of points.
This may be viewed as a discretization of a continuous measurement,
and one may ask the question what happens as measurements are acquired at more and more locations,
thereby increasing the scan resolution until in the limit the instrument measures the sample in a continuous manner.
Of course, in parallel, the dwell time $\Delta t$ has to decrease at the same rate in order to keep the total image acquisition time bounded.
The examination of this limit process may give some hints as to how to choose the scanning parameters.
For ease of exposition, we just consider the case of a single image acquisition; the case of $K$ acquisitions follows in exactly the same way.

\subsection{Limit model for measurements via stochastic coupling}
In order to study the limit as the resolution of the scanning path and the number of measurement locations tend to infinity,
we first have to comprehend how the measurement vector $\mathbf\img$ behaves for finer and finer resolution.
Note, for instance, that the dimension of $\mathbf\img$ will increase with the number of measurement locations.
Unfortunately, $\mathbf\img$ cannot be determined deterministically for a given resolution, since it is just one possible realization of a random variable.
Therefore, we must rather understand how the distribution of $\mathbf\img$ varies as the resolution gets finer.
This is possible using standard models of Poisson point processes.

Consider a time-continuous measurement (in our case the electron count) during the time interval $[0,T]$
(for instance the total time interval to acquire a full STEM image).
The local material density underneath the electron beam at time $t\in[0,T]$ shall be $\lambda(t)$
(in our case, $\lambda(t)=\den(x(t)+w(t))$, where $x$ denotes the desired beam position and $w$ the Brownian motion of the sample).
The events of an electron being detected can mathematically be formulated as a measure
\begin{equation*}
\elec=\sum_{j=1}^{N_\elec}\delta_{\tau_j}\in\meas([0,T])\,,
\end{equation*}
where each Dirac mass $\delta_{\tau_j}$ describes the detection of an eletron at time $\tau_j\in[0,T]$ and $\meas([0,T])$ denotes the space of Radon measures on $[0,T]$.
The times $\tau_j$ are distributed over $[0,T]$ according to an inhomogeneous Poisson point process with intensity $\lambda(t)$
(for an introduction to Poisson processes, their existence, and their interpretation as empirical processes as exploited here see for instance \cite[Sec.\,2.1 and 2.5]{Ki93}, \cite[Sec.\,1.2]{Re93}),
thus each $\elec$ is just a realization of a random variable $\Elec$ representing the Poisson point process.
The actual measurement $\img_i$ is then obtained as
\begin{equation*}
\img_i=\alpha\elec([t_{i}-\Delta t,t_i))+\img_G\,,
\end{equation*}
where $\img_G$ is the realization of the sensor noise $\Img_G$ at measurement position $i$.
Note that in our model derivation we only had to exploit the fact that $\img_P=\elec([t_{i}-\Delta t,t_i))$ is a realization of a Poisson distributed random variable, but we did not have to resolve the single electron counting events temporally.

Now consider a particular scan of a particular sample with material density $\den$.
We perform the thought experiment that we acquire measurements, indexed by superscript $n$, of this particular physical situation
(including the fixed realization $w$ of the Brownian motion) at finer and finer resolution,
that is, with a dwell time $\Delta t^n\to0$ as $n\to\infty$ and the number of measurement locations $N^n\to\infty$.
Note that the corresponding density $\lambda^n(t)$ changes as $n\to\infty$, since the scanning path changes slightly.
Consequently, also the distribution $\Elec^n$ of electron detections changes.

The change of this distribution can be understood via the following standard stochastic coupling (an extension coupling in the terminology of \cite[Chp.\,3, Sec.\,3.1]{Th00}). Let
\begin{equation*}\textstyle
B=\left\{\hat\elec=\sum_{i=1}^\infty\delta_{\tau_i,\lambda_i}\,:\,\tau_i\in[0,T],\,\lambda_i\in[0,\infty)\,\forall i,\,\hat\elec([0,T]\times[0,\lambda])<\infty\,\forall\lambda\geq0\right\}\,.
\end{equation*}
On $B$ we impose the natural $\sigma$-algebra $\mathcal F$
which is generated by the maps $\hat\elec\mapsto\hat\elec(S)$ for all Borel measurable $S\subset [0,T]\times[0,\infty)$.
Finally, let $\elecP$ be the probability measure on $B$
such that the support $\spt\hat\elec$ of $\hat\elec$ is distributed according to a Poisson point process on $[0,T]\times[0,\infty)$ with intensity $1$.
For given $\hat\elec\in B$ and $\lambda:[0,T]\to[0,\infty)$ we now introduce
\begin{equation*}
\hat\elec_\lambda=\sum_{\substack{(\tau_j,\lambda_j)\in\spt\hat\elec\\\lambda_j\leq\lambda(\tau_j)}}\delta_{\tau_j}\,.
\end{equation*}
It is known that $\hat\elec_\lambda$ is distributed according to a Poisson point process with intensity $\lambda$
(this is a direct consequence of the mapping theorem \cite[p.\,18 and example\,(2.30)-(2.31)]{Ki93}, \cite[Lem.\,1.1.3]{Re93})
and therefore exactly like the electron count $\elec$.
Thus, the above thought experiment may be performed at fixed realization $\hat\elec$ (setting $\elec^n=\hat\elec_{\lambda^n}$),
which describes appropriately how the physical measurements or their distributions change as $n\to\infty$.

We also have to specify how to deal with the second component of the signal, the background sensor noise, as $n\to\infty$.
For reasons to become clear later, we assume the variance $\sigma^2$ of the Gaussian background noise to change with $n$.
Let $\White$ be independent Gaussian white noise on $[0,T]$, and denote the times at which the electron beam moves from one to the next measurement location by $t_1^n,\ldots,t_{N^n}^n$, where the superscript $n$ refers to the sequence of thought experiments with increasing resolution.
For a realization $\white$ of $\White$ define
\begin{equation*}
\mathbf\white^n=(\white(t_1^n),\ldots,\white(t_{N^n}^n))\,,
\end{equation*}
then $\mathbf\img_G^n=\mu+\sigma^n\mathbf\white^n$ is distributed in the same way as the background noise of the measurement.
Therefore we may perform the above thought experiment for fixed realization $\white$.

The following lemma now analyses how the signal in our thought experiment behaves as the material density $\lambda^n(t)$ under the electron beam changes for $n\to\infty$.
\begin{lemma}\label{thm:dataBehaviour}
For given fixed $\hat\elec\in B$ and white noise realization $r$, define the electron counting events, the electron counts, the background noise, and the full measurement as
\begin{align*}
\elec^n&=\hat\elec_{\lambda^n}\,,\\
\mathbf\img_P^n&=(\elec^n([t_0^n,t_1^n)),\ldots,\elec^n([t_{N^n-1}^n,t_{N^n}^n)))\,,\\
\mathbf\img_G^n&=\mu+\sigma^n(\white(t_1^n),\ldots,\white(t_{N^n}^n))\,,\\
\mathbf\img^n&=\alpha\mathbf\img_P^n+\mathbf\img_G^n\,,
\end{align*}
where $0=t_0^n<\ldots<t_{N^n}^n=T$.
We further introduce continuum versions of the signals as
\begin{equation*}
\mathcal G_P^n(t)=\frac{(\mathbf\img_P^n)_i}{t_{i}^n-t_{i-1}^n}\,,\qquad
\mathcal G_G^n(t)=\frac{(\mathbf\img_G^n)_i-\mu}{\alpha(t_{i}^n-t_{i-1}^n)}\,,\qquad
\mathcal G^n(t)=\frac{(\mathbf\img^n)_i-\mu}{\alpha(t_{i}^n-t_{i-1}^n)}\,,\qquad\text{each for }t\in[t_{i-1}^n,t_i^n)\,.
\end{equation*}
Finally, we shall assume $\sup_{i=1,\ldots,N^n}(t_i^n-t_{i-1}^n)\to0$ as $n\to\infty$. The following statements hold true almost surely with respect to the distribution of $\hat\elec$ and $r$.
\begin{enumerate}
\item\label{enm:elecCountConv} If $\lambda^n\to\lambda^\infty$ uniformly, then $\elec^n\to\elec^\infty$ strongly in $\meas([0,T])$.
\item\label{enm:elecCountWeakConv} If $\lambda^n\stackrel*\rightharpoonup\lambda^\infty$ in $L^\infty((0,T))$, then there exists a subsequence with $\elec^n\to\elec^\infty$ strongly in $\meas([0,T])$.
If $\lambda^n\not\to\lambda^\infty$ strongly in $L^1((0,T))$, then with positive probability $\elec^n\not\to\elec^\infty$ for the entire sequence.
\item\label{enm:backgNoiseConv} We have
\begin{equation*}
\|\mathcal G_G^n\|_{\meas([0,T])}\to\begin{cases}
0&\text{if }\frac{\sigma^nN^n}{\alpha}\to0\,,\\
\infty&\text{if }\frac{\sigma^nN^n}{\alpha}\to\infty\,,
\end{cases}
\end{equation*}
and $\mathcal G_G^n\stackrel*\rightharpoonup0$ in ${\meas([0,T])}$ if $\lim_{n\to\infty}\frac{\sigma^nN^n}{\alpha}\in(0,\infty)$.
\item\label{enm:niceData}
If $\lim_{n\to\infty}\frac{\sigma^nN^n}{\alpha}<\infty$ and $\lambda^n\to\lambda^\infty$ uniformly on $[0,T]$,
then $\mathcal G^n\stackrel*\rightharpoonup\elec^\infty$ in $\meas([0,T])$.
If only $\lambda^n\stackrel*\rightharpoonup\lambda^\infty$ in $L^\infty((0,T))$, then $\mathcal G^n\stackrel*\rightharpoonup\elec^\infty$ only for a subsequence.
\item\label{enm:cleanData}
If $\lim_{n\to\infty}\frac{\sigma^nN^n}{\alpha}=0$, letting $\tilde{\mathcal G}^n(t)=\frac1{t_i^n-t_{i-1}^n}\left(\llbracket(t_i^n-t_{i-1}^n)\mathcal G^n(t)\rrbracket-(t_i^n-t_{i-1}^n)\mathcal G^n(t)\right)$ for $t\in[t_{i-1}^n,t_i^n)$,
we have $\|\tilde{\mathcal G}^n\|_{\meas([0,T])}\to0$.
\end{enumerate}
Note that whenever we use a function, such as $\mathcal G_G^n$, in the sense of a measure, we refer to the measure induced when interpreting the function as density with respect to the Lebesgue measure.
\end{lemma}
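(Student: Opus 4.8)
The plan is to treat the five claims more or less independently, building on the single stochastic object $\hat\elec\in B$ (a unit-intensity Poisson point process on $[0,T]\times[0,\infty)$) and a fixed white noise realization $r$, and exploiting the coupling $\elec^n=\hat\elec_{\lambda^n}$ throughout. For \eref{enm:elecCountConv}, the key observation is that, since $\hat\elec$ has locally finitely many points, only finitely many points $(\tau_j,\lambda_j)$ of $\spt\hat\elec$ satisfy $\lambda_j\leq\sup_t\lambda^n(t)+1$ for all large $n$; for each such point, whether $\lambda_j\leq\lambda^n(\tau_j)$ is eventually constant in $n$ (by uniform convergence, unless $\lambda_j=\lambda^\infty(\tau_j)$, which happens with probability zero since $\hat\elec$ a.s.\ puts no mass on the graph of $\lambda^\infty$). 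Hence $\elec^n$ and $\elec^\infty$ agree as finite sums of Diracs for $n$ large, giving strong convergence in $\meas([0,T])$. Claim \eref{enm:elecCountWeakConv} follows the same pattern: weak-$*$ convergence in $L^\infty$ gives $\int_0^T \mathbf 1[\lambda_j\leq\lambda^n(\tau)]\,d\tau$-type control only in an averaged sense, so pointwise stabilization of $\mathbf 1[\lambda_j\leq\lambda^n(\tau_j)]$ fails in general; however, along a subsequence one can extract pointwise-a.e.\ convergence $\lambda^n\to\lambda^\infty$ (again the graph is $\hat\elec$-null a.s.), yielding the subsequential strong convergence. For the negative statement, if $\lambda^n\not\to\lambda^\infty$ in $L^1$ then on a set of $\tau$'s of positive measure the indicator $\mathbf 1[\lambda_j\leq\lambda^n(\tau)]$ oscillates; since $\hat\elec$ a.s.\ has a point $(\tau_j,\lambda_j)$ with $\tau_j$ in that oscillation set and $\lambda_j$ strictly between $\liminf$ and $\limsup$, the counting measure $\elec^n$ cannot converge — a Borel--Cantelli / positive-probability argument.

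For \eref{enm:backgNoiseConv} I would compute the total variation norm of $\mathcal G_G^n$ explicitly: it is $\sum_i \frac{\sigma^n |r(t_i^n)|}{\alpha}$, so $\expec\|\mathcal G_G^n\|_{\meas}\sim \frac{\sigma^n N^n}{\alpha}\sqrt{2/\pi}$ and the variance is $O((\sigma^n/\alpha)^2 N^n)$; a standard concentration / second-moment argument (or simply the strong law applied to the i.i.d.\ $|r(t_i^n)|$, using that the mesh tends to zero so the $t_i^n$ become "generic") forces the dichotomy $\|\mathcal G_G^n\|_{\meas}\to 0$ resp.\ $\to\infty$ according to whether $\sigma^n N^n/\alpha\to 0$ resp.\ $\to\infty$, almost surely. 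The weak-$*$ statement when $\sigma^n N^n/\alpha\to\ell\in(0,\infty)$ is more delicate: one tests against a fixed $\varphi\in C([0,T])$ and writes $\langle\mathcal G_G^n,\varphi\rangle=\frac{\sigma^n}{\alpha}\sum_i r(t_i^n)\varphi(t_i^n)$, a sum of independent mean-zero terms whose variance is $O((\sigma^n/\alpha)^2 N^n \|\varphi\|_\infty^2)=O(\ell^2\|\varphi\|_\infty^2/N^n)\to 0$; Borel--Cantelli over a countable dense set of $\varphi$ plus equiboundedness of $\|\mathcal G_G^n\|_{\meas}$ (which holds a.s.\ since it concentrates near $\ell\sqrt{2/\pi}$) gives weak-$*$ convergence to $0$.

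Claim \eref{enm:niceData} is then essentially additive: $\mathcal G^n=\mathcal G_P^n+\mathcal G_G^n$ as measures, where $\mathcal G_P^n$ is exactly the measure $\elec^n$ (each Dirac of $\elec^n$ in $[t_{i-1}^n,t_i^n)$ contributes mass $1$, spread as a density of height $1/(t_i^n-t_{i-1}^n)$ over that interval, which weak-$*$ converges to the Dirac itself as the mesh shrinks). So $\mathcal G_P^n\stackrel*\rightharpoonup\elec^\infty$ by part \eref{enm:elecCountConv}, $\mathcal G_G^n\stackrel*\rightharpoonup 0$ by the relevant case of part \eref{enm:backgNoiseConv} (note $\sigma^nN^n/\alpha$ bounded suffices once we also use the limit is $0$ or positive-finite — the bounded case reduces to these), and summing gives the result; the subsequential version inherits from part \eref{enm:elecCountWeakConv}. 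Finally, \eref{enm:cleanData}: here $(t_i^n-t_{i-1}^n)\mathcal G^n(t)=(\mathbf\img^n)_i/\alpha=\elec^n([t_{i-1}^n,t_i^n))+\sigma^n r(t_i^n)/\alpha$, and $\elec^n([t_{i-1}^n,t_i^n))$ is an integer, so $\llbracket(t_i^n-t_{i-1}^n)\mathcal G^n\rrbracket-(t_i^n-t_{i-1}^n)\mathcal G^n$ equals $-\sigma^n r(t_i^n)/\alpha$ whenever $|\sigma^n r(t_i^n)/\alpha|<\tfrac12$ — which holds for \emph{all} $i$ once $n$ is large, since $\max_i|r(t_i^n)|$ grows only like $\sqrt{\log N^n}$ while $\sigma^n/\alpha$ decays faster than $1/N^n$. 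Hence $\tilde{\mathcal G}^n=-\mathcal G_G^n$ for large $n$, and $\|\tilde{\mathcal G}^n\|_{\meas}=\|\mathcal G_G^n\|_{\meas}\to 0$ by part \eref{enm:backgNoiseConv} in the case $\sigma^n N^n/\alpha\to 0$.

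The main obstacle will be the almost-sure control of the background noise sums in \eref{enm:backgNoiseConv} and the uniform bound $\max_i|r(t_i^n)|=o(\alpha/(\sigma^n))$ needed for \eref{enm:cleanData}: these require quantitative tail bounds for maxima of (dependent, since sampled from one realization $r$ at nested time grids) Gaussian values together with a Borel--Cantelli argument to pass from "for each $n$ with high probability" to "almost surely for all large $n$", and some care is needed because the grids $\{t_i^n\}$ are not assumed nested or independent across $n$. The measure-theoretic facts about the graph of $\lambda^\infty$ being $\hat\elec$-null, and weak-$*$-to-a.e.\ extraction along subsequences, are routine but should be stated carefully to make \eref{enm:elecCountConv}–\eref{enm:elecCountWeakConv} rigorous.
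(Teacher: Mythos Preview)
Your treatment of parts \ref{enm:elecCountConv}, \ref{enm:backgNoiseConv}, \ref{enm:niceData}, and \ref{enm:cleanData} is essentially correct and close in spirit to the paper's, though the paper handles \ref{enm:cleanData} more simply: it shows the pointwise bound $|\tilde{\mathcal G}^n(t)|\leq|\mathcal G_G^n(t)|$ directly (distinguishing only whether the Gaussian part is nonnegative or not), so no uniform control on $\max_i|r(t_i^n)|$ is needed and the obstacle you flag disappears.

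There is, however, a genuine gap in your plan for part \ref{enm:elecCountWeakConv}. You write that from $\lambda^n\stackrel*\rightharpoonup\lambda^\infty$ in $L^\infty$ ``along a subsequence one can extract pointwise-a.e.\ convergence $\lambda^n\to\lambda^\infty$''. This is false: weak-$*$ convergence in $L^\infty$ does \emph{not} imply the existence of an a.e.\ convergent subsequence. Standard counterexamples are the Rademacher functions or $\lambda^n(t)=\sin(nt)$, which converge weakly-$*$ to zero but have no subsequence converging at even a single point. So your route to the subsequential strong convergence $\elec^n\to\elec^\infty$ breaks down.

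The paper's argument is necessarily more delicate and does \emph{not} pass through a.e.\ convergence of $\lambda^n$. Instead it exploits that only a finite random set $\{\tau_1,\ldots,\tau_m\}$ of times matters (the first coordinates of the Poisson points of $\hat\elec$ below level $M=\sup_n\|\lambda^n\|_\infty$), and that these $\tau_j$ are i.i.d.\ uniform on $[0,T]$. The event ``no subsequence of $\elec^n$ converges to $\elec^\infty$'' is contained in a countable union of events of the form $\{(\tau_1,\ldots,\tau_m)\in S_\delta^m(\zeta)\}$, where $S_\delta^m(\zeta)$ is the set of $m$-tuples for which, along the whole tail of the sequence, at least one coordinate $\tau_i$ satisfies $\zeta_i(\lambda^n(\tau_i)-\lambda^\infty(\tau_i))\geq\delta$. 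The key step is then to show each $S_\delta^m(\zeta)\subset[0,T]^m$ is Lebesgue-null; this is done by induction on $m$, using at the base that the set $\{\tau:\zeta_1(\lambda^{n_j}(\tau)-\lambda^\infty(\tau))\geq\delta\text{ for all }j\}$ must be null for \emph{any} subsequence $(n_j)$, since otherwise testing against its characteristic function would contradict weak-$*$ convergence. The point is that one needs only, for each \emph{finite} tuple of times outside a nullset, \emph{some} subsequence along which $\lambda^n(\tau_i)\to\lambda^\infty(\tau_i)$ for all $i$ simultaneously --- a much weaker requirement than a single subsequence working for a.e.\ $t$, and one that weak-$*$ convergence does deliver.
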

\begin{proof}
\begin{enumerate}
\item Let $\varepsilon_n=\sup_{t\in[0,T]}|\lambda^n(t)-\lambda^\infty(t)|$,
then $\|\elec^n-\elec^\infty\|_{\meas([0,T])}\leq\hat\elec(S^n)$
for $S^n=\{(t,\lambda)\in[0,T]\times[0,\infty)\,:\,\lambda^\infty(t)-\varepsilon_n\leq\lambda\leq\lambda^\infty(t)+\varepsilon_n\}$.
Thus
\begin{multline*}
\elecP\big(\big\{\hat\elec\in B\,:\,\lim_{n\to\infty}\|\elec^n-\elec^\infty\|_{\meas([0,T])}=0\big\}\big)
\geq\elecP\big(\big\{\hat\elec\in B\,:\,\lim_{n\to\infty}\hat\elec(S^n)=0\big\}\big)\\
\geq\elecP\big(\big\{\hat\elec\in B\,:\,\hat\elec(S^n)=0\big\}\big)
=e^{-|S^n|}
\mathop{\to}_{n\to\infty}1\,,
\end{multline*}
where $|S^n|$ denotes the volume of $S^n$.
\item Abbreviate $M=\sup_{n=0,1,\ldots}\|\lambda^n\|_{L^\infty((0,T))}<\infty$
and denote by $V\subset B$ the set of Poisson processes $\hat\elec$ such that $\elec^n\not\to\elec^\infty$ for any subsequence.
Let $\hat\elec\in V$ with $m$ points $(\tau_1,\lambda_1),\ldots,(\tau_m,\lambda_m)$ in $[0,T]\times[0,M]$
and introduce $\zeta^{\hat\elec}=(\sign(\lambda_1-\lambda^\infty(\tau_1)),\ldots,\sign(\lambda_m-\lambda^\infty(\tau_m)))$.
Obviously, $\hat\elec\in V$ implies that for every large enough $n$ there is a point $(\tau_i,\lambda_i)\in\spt\hat\elec\subset[0,T]\times[0,\infty)$
such that $\lambda_i\in(\lambda^n(\tau_i),\lambda^\infty(\tau_i)]\text{ if }\zeta^{\hat\elec}_i<0\text{ and }\lambda_i\in(\lambda^\infty(\tau_i),\lambda^n(\tau_i)]\text{ else}$.
Choosing $\delta=\frac12\min(|\lambda_1-\lambda^\infty(\tau_1)|,\ldots,|\lambda_m-\lambda^\infty(\tau_m)|)$ we thus have $(\tau_1,\ldots,\tau_m)\in S(\lambda_1,\ldots,\lambda_m,\zeta^{\hat\elec})\subset S_\delta^m(\zeta^{\hat\elec})$ for
\begin{align*}
S(\lambda_1,\ldots,\lambda_m,\zeta^{\hat\elec})
&=\{(\tau_1,\ldots,\tau_m)\in[0,T]^m\,:\,\exists N>0\,\forall n>N\,\exists i\in\{1,\ldots,m\}:\\&\qquad\qquad\lambda_i\in(\lambda^n(\tau_i),\lambda^\infty(\tau_i)]\text{ if }\zeta^{\hat\elec}_i<0\text{ and }\lambda_i\in(\lambda^\infty(\tau_i),\lambda^n(\tau_i)]\text{ else}\}\,,\\
S_\delta^m(\zeta^{\hat\elec})
&=\{(\tau_1,\ldots,\tau_m)\in[0,T]^m\,:\,\exists N>0\,\forall n>N\,\exists i\in\{1,\ldots,m\}:\zeta^{\hat\elec}_i(\lambda^n(\tau_i)-\lambda^\infty(\tau_i))\geq\delta\}\,.
\end{align*}
Therefore, we obtain
\begin{align*}
V&\subset\bigcup_{m=1}^\infty\{\hat\elec\in B\,:\,\hat\elec([0,T]\times[0,M])=m,\,\spt(\hat\elec)=\{(\tau_1,\lambda_1),\ldots,(\tau_m,\lambda_m)\},\,(\tau_1,\ldots,\tau_m)\in S(\lambda_1,\ldots,\lambda_m,\zeta^{\hat\elec})\}\\
&\subset\bigcup_{m=1}^\infty\bigg\{\hat\elec\in B\,:\,\spt(\hat\elec)\cap[0,T]\times[0,M]=\{(\tau_1,\lambda_1),\ldots,(\tau_m,\lambda_m)\},\,(\tau_1,\ldots,\tau_m)\in\bigcup_{\zeta\in\{-1,1\}^m}\bigcup_{\delta>0}S_\delta^m(\zeta)\bigg\}\\
&\subset\bigcup_{m=1}^\infty\bigcup_{\zeta\in\{-1,1\}^m}\bigcup_{\delta>0}\{\hat\elec\in B\,:\,\spt(\hat\elec)\cap[0,T]\times[0,M]=\{(\tau_1,\lambda_1),\ldots,(\tau_m,\lambda_m)\},\,(\tau_1,\ldots,\tau_m)\in S_\delta^m(\zeta)\}\\
&=\bigcup_{m=1}^\infty\bigcup_{\zeta\in\{-1,1\}^m}\bigcup_{\delta>0}V_\delta^m(\zeta)\,.
\end{align*}
Thus, we get
\begin{equation*}
\elecP(V)
\leq\elecP\left(\bigcup_{l=1}^\infty\bigcup_{\zeta\in\{-1,1\}^m}\bigcup_{i\in\N}V_\delta^m(\zeta)\right).
\end{equation*}
Since $V_\delta^m(\zeta)$ is monotone in $\delta$, that is, $V_\delta^m(\zeta)\subset V_{\hat\delta}^m(\zeta)$ for any $\hat\delta<\delta$, we can replace the union over $\delta$ with a countably infinite union.
Hence, if we can show $\elecP(V_\delta^m(\zeta))=0$ for all $\delta>0$, $\zeta\in\{-1,1\}^m$, and $m\geq0$, we get $\elecP(V)=0$, proving the desired claim.

It remains to show $\elecP(V_\delta^m(\zeta))=0$ for fixed $\delta>0$, $\zeta\in\{-1,1\}^m$, $m\geq0$. To this end, we
\begin{enumerate}
\item\label{enm:nullset} show that $S_\delta^m(\zeta)$ is a nullset and
\item\label{enm:nullset2} imply that $\elecP(V_\delta^m(\zeta))=0$.
\end{enumerate}

As for the first item,
note that $S_\delta^m(\zeta)$ is measurable since it is the countable union of countable intersections of measurable sets,
\begin{multline*}
S_\delta^m(\zeta)
=\bigcup_{N=1}^\infty\bigcap_{n=N}^\infty\big[
\{\tau\in[0,T]\,:\zeta_1(\lambda^n(\tau)-\lambda^\infty(\tau))\geq\delta\}\times\R^{m-1}\,\cup\\
\R\times\{\tau\in[0,T]\,:\zeta_2(\lambda^n(\tau)-\lambda^\infty(\tau))\geq\delta\}\times\R^{m-2}\,\cup
\ldots\cup\\
\R^{m-1}\times\{\tau\in[0,T]\,:\zeta_m(\lambda^n(\tau)-\lambda^\infty(\tau))\geq\delta\}\big]\,.
\end{multline*}
Furthermore, the set
\begin{equation*}
\tilde S_\delta^m=S_\delta^m(\zeta)\setminus(\R\times S_\delta^{m-1}((\zeta_2,\ldots,\zeta_{m})))
\end{equation*}
is a nullset, which follows from \cite[Thm.\,B]{Do89} and the fact that $\tilde S_\delta^m$ is measurable and
that for any $\vec \tau\in[0,T]^{m-1}$ the set $\hat S_\delta^m=\{(\tau_1,\ldots,\tau_m)\in\tilde S_\delta^m\,:\,(\tau_2,\ldots,\tau_{m})=\vec \tau\}$ is an $\mathcal L^1$-nullset.
Indeed, for $\vec\tau\in S_\delta^{m-1}((\zeta_2,\ldots,\zeta_{m}))$ we have $\hat S_\delta^m=\emptyset$,
while for $\vec \tau\notin S_\delta^{m-1}((\zeta_2,\ldots,\zeta_{m}))$ there is an infinite subsequence $n_1,n_2,\ldots$ with $\zeta_i(\lambda_{n_j}(\vec \tau_i)-\lambda^\infty(\vec \tau_i))<\delta$ for all $i=2,\ldots,m$ and $j\in\N$ so that
\begin{multline*}
\hat S_\delta^m
=\{(\tau_1,\ldots,\tau_m)\in S_\delta^m(\zeta)\,:\,(\tau_2,\ldots,\tau_{m})=\vec \tau\}\\
=\{\tau\in[0,T]\,:\,\zeta_1(\lambda_{n_j}(\tau)-\lambda^\infty(\tau)\geq\delta\text{ for all }j\}\times\{\vec\tau\}
=\bar S_\delta^1(\zeta_1)\times\{\vec\tau\}\,.
\end{multline*}
Now the set $\bar S_\delta^1(\zeta_1)$ must be a nullset
since otherwise $\lambda_{n_j}-\lambda^\infty$ tested with the characteristic function of $\bar S_\delta^1(\zeta_1)$ would yield $\int_0^T\zeta_1\chi_{\bar S_\delta^1(\zeta_1)}(\lambda_{n_j}-\lambda^\infty)\,\d\tau\geq\delta|\bar S_\delta^1(\zeta_1)|$ for all $j$,
contradicting the weak-* convergence $\lambda^n\stackrel*\rightharpoonup\lambda^\infty$
(here $|\cdot|$ denotes the Lebesgue measure).
Finally, $S_\delta^1(\pm1)=\{t\in[0,T]\,:\,\exists N\geq0\,\forall n>N:\pm(\lambda^n(t)-\lambda^\infty(t))\geq\delta\}$ is a nullset for the same reason.
By induction in $m$ it thus follows that $S_\delta^m(\zeta)$ is a nullset.

Now let us derive $\elecP(V_\delta^m(\zeta))=0$.
For $\hat\elec\in B$ with $\hat\elec([0,T]\times[0,M])=m$ denote the points in its support by $\spt(\hat\elec)\cap[0,T]\times[0,M]=\{(\tau_1,\lambda_1),\ldots,(\tau_m,\lambda_m)\}$.
Obviously, $\tau_1,\ldots,\tau_m\in[0,T]$ are independently identically distributed random variables obeying the uniform distribution on $[0,T]$.
Thus the vector $(\tau_1,\ldots,\tau_m)$ is uniformly distributed on $[0,T]^m$ with constant probability density $T^{-m}$ so that
\begin{equation*}
\elecP(V_\delta^m(\zeta))
=\elecP(\{\hat\elec\in B\,:\,\hat\elec([0,T]\times[0,M])=m\})T^{-m}|S_\delta^m(\zeta)|
=0\,.
\end{equation*}
This ends the proof of the first part.

For the second part, assume $\lambda^n\not\to\lambda^\infty$ in $L^1((0,T))$, that is, there exists a subsequence (indexed again by $n$)
such that $S^n=\{(\tau,\lambda)\in[0,T]\times[0,\infty)\,:\,\lambda\in(\lambda^n(\tau),\lambda^\infty(\tau))\text{ or }\lambda\in(\lambda^\infty(\tau),\lambda^n(\tau))\}$ satisfies
$|S^n|\geq\delta$ for some $\delta>0$ and all $n$.
The probability that $\|\elec^n-\elec^\infty\|_{\meas([0,T])}\geq1$ is thus given by
\begin{equation*}
\elecP(\{\hat\elec\in B\,:\,\|\elec^n-\elec^\infty\|_{\meas([0,T])}\geq1\})
=\elecP(\{\hat\elec\in B\,:\,\hat\elec(S^n)\geq1\})
=1-e^{-|S^n|}
\geq1-e^{-\delta}\,.
\end{equation*}
If we introduce $B_n=\{\hat\elec\in B\,:\,\|\elec_j-\elec^\infty\|_{\meas([0,T])}\geq1\text{ for some }j\geq n\}$,
then this set is monotone, $B_n\supset B_{n+1}$, and has probability $\elecP(B_n)\geq1-e^{-\delta}$ due to the above.
Thus,
\begin{equation*}
\elecP\Big(\Big\{\hat\elec\in B\,:\,\limsup_{j\to\infty}\|\elec_j-\elec^\infty\|_{\meas([0,T])}\geq1\Big\}\Big)
=\elecP\left(\bigcap_{n=1}^\infty B_n\right)
\geq1-e^{-\delta}>0
\end{equation*}
due to the monotone convergence theorem.

Note that there is a subset of $B$ with positive probability such that we still have $\elec^n\to\elec^\infty$,
for instance all those $\hat\elec$ that do not contain any mass in $[0,T]\times[0,M]$.
\item We have
\begin{equation*}
\tfrac\alpha{\sigma^nN^n}\|\mathcal G_G^n\|_{\meas([0,T])}
=\tfrac1{N^n}\sum_{i=1}^{N^n}|\white(t_i^n)|\,.
\end{equation*}
By the strong law of large numbers, the right-hand side converges almost surely against the expected value of the $|\white(t_i^n)|$.
Since $\white(t_i^n)$ is normally distributed with mean $0$ and variance $1$, $|\white(t_i^n)|$ has expected value $\sqrt{2/\pi}$ so that almost surely
\begin{equation*}
\tfrac\alpha{\sigma^nN^n}\|\mathcal G_G^n\|_{\meas([0,T])}\to\sqrt{\tfrac2\pi}\,,
\end{equation*}
which direcly implies the first two statements.
Next assume $\lim_{n\to\infty}\tfrac{\sigma^nN^n}\alpha\in(0,\infty)$
so that we have almost sure boundedness of $\|\mathcal G_G^n\|_{\meas([0,T])}$.
Let $\varphi\in C([0,T])$.
For $\varepsilon>0$ let $\delta>0$ such that $|\varphi(t)-\varphi(\hat t)|<\varepsilon$ for all $|t-\hat t|<\delta$,
and let $0=\tau_0<\tau_1<\ldots<\tau_K=T$ with $\tau_{j+1}-\tau_j<\delta$ for all $j$.
For simplicity let us assume that for each $n$ we have $\{\tau_0,\ldots,\tau_K\}\subset\{t_0^1,\ldots,t_{N^n}^n\}$ (the argument can easily be adapted if this is not the case).
We have
\begin{multline*}
\left|\int_0^T\mathcal G_G^n\varphi\,\d t\right|
=\left|\sum_{j=1}^K\int_{\tau_{j-1}}^{\tau_j}\mathcal G_G^n\varphi\,\d t\right|
\leq\sum_{j=1}^K\left(|\varphi(\tau_j)|\left|\int_{\tau_{j-1}}^{\tau_j}\mathcal G_G^n\,\d t\right|+\varepsilon\int_{\tau_{j-1}}^{\tau_j}\left|\mathcal G_G^n\right|\,\d t\right)\\
=\varepsilon\|\mathcal G_G^n\|_{\meas([0,T])}+\sum_{j=1}^K|\varphi(\tau_j)|\left|\int_{\tau_{j-1}}^{\tau_j}\mathcal G_G^n\,\d t\right|
=\varepsilon\|\mathcal G_G^n\|_{\meas([0,T])}+\frac{\sigma^n}\alpha\sum_{j=1}^K|\varphi(\tau_j)|\left|\sum_{t_i^n\in(\tau_{j-1},\tau_j]}\white(t_i^n)\right|\\
\leq\varepsilon\|\mathcal G_G^n\|_{\meas([0,T])}+\|\varphi\|_{C([0,T])}\frac{\sigma^n}\alpha\sum_{j=1}^KK_j^n\left|\frac1{K_j^n}\sum_{t_i^n\in(\tau_{j-1},\tau_j]}\white(t_i^n)\right|\\
\leq\varepsilon\|\mathcal G_G^n\|_{\meas([0,T])}+\|\varphi\|_{C([0,T])}\frac{N^n\sigma^n}\alpha\sup_j\left|\frac1{K_j^n}\sum_{t_i^n\in(\tau_{j-1},\tau_j]}\white(t_i^n)\right|\,,
\end{multline*}
where $K_j^n$ denotes the number of summands in the interior sum.
Now by the law of large numbers, each of the absolute values on the right-hand side converges to zero almost surely
so that the desired result follows by the arbitrariness of $\varepsilon$.
\item By parts \ref{enm:elecCountConv} and \ref{enm:elecCountWeakConv} we almost surely have $\elec^n\to\elec^\infty$ (only for a subsequence in the case $\lambda^n\stackrel*\rightharpoonup\lambda^\infty$), which implies $\mathcal G_P^n\stackrel*\rightharpoonup\elec^\infty$.
Furthermore, part \ref{enm:backgNoiseConv} implies $\mathcal G_G^n\stackrel*\rightharpoonup0$ so that $\mathcal G^n=\mathcal G_P^n+\mathcal G_G^n\stackrel*\rightharpoonup\elec^\infty$.
\item For $t\in[t_{i-1}^n,t_i^n)$ we have
\begin{multline*}
|\tilde{\mathcal G}^n(t)|
=\frac{\left|\llbracket(t_i^n-t_{i-1}^n)\mathcal G^n(t)\rrbracket-(t_i^n-t_{i-1}^n)\mathcal G^n(t)\right|}{t_i^n-t_{i-1}^n}\\
\leq\left.\begin{cases}
\frac{\left|\llbracket(t_i^n-t_{i-1}^n)\mathcal G_G^n(t)\rrbracket-(t_i^n-t_{i-1}^n)\mathcal G_G^n(t)\right|}{t_i^n-t_{i-1}^n}&\text{if }\mathcal G_G^n(t)\geq0\\
\frac{\left|(t_i^n-t_{i-1}^n)\mathcal G_G^n(t)\right|}{t_i^n-t_{i-1}^n}&\text{else}
\end{cases}\right\}
\leq\left|\mathcal G_G^n(t)\right|\,,
\end{multline*}
thus $\|\tilde{\mathcal G}^n\|_{\meas([0,T])}\leq\|\mathcal G_G^n\|_{\meas([0,T])}\to0$ almost surely by part \ref{enm:backgNoiseConv}.\qedhere
\end{enumerate}
\end{proof}

Statements \ref{enm:elecCountConv} and \ref{enm:elecCountWeakConv} of the lemma show that as our scanning path and thus the temporally changing material density under the electron beam converge,
the events of electron detections will converge exactly to the events detected if a time-continuous scanning path were used
(at least up to a subsequence, if the material density under the beam only converges weakly).
Statement \ref{enm:backgNoiseConv} analyses the behavior of the background sensor noise and shows that measurements become useless if $\frac{\sigma^nN^n}\alpha\to\infty$,
since in that case the noisy background signal becomes infinitely large, swallowing up the electron count.
Thus, increasing the scan resolution and decreasing the dwell time can only be feasible if at the same time the variance of the background noise decreases sufficiently fast;
for given $\sigma$ one should choose a dwell time of at least $\Delta t\sim\sigma$.
Statement \ref{enm:niceData} then shows that the combined measured signal (the accumulated electron count and background noise at each measurement location)
indeed converges (in the weak sense) against the electron count during a time-continuous scan.
Finally, the last statement implies that for sufficiently decreasing background noise variance the electron counting signal becomes clean in the sense that it deviates only little from integer values.

For later reference we shall here also prove the following statement about points of a Poisson process.
\begin{lemma}\label{thm:mod1Dense}
Let $\hat N^n$ be a sequence with $\hat N^n\to\infty$ as $n\to\infty$ and $\elec=\sum_{j=1}^m\delta_{\tau_j}$ describe a Poisson point process on $[0,T]$ with positive intensity and $m$ points.
Then almost surely
\begin{equation*}
\{(\hat N^n\tfrac{\tau_1}T\!\!\mod1,\ldots,\hat N^n\tfrac{\tau_m}T\!\!\mod1)\,:\,n\in\N\}\text{ is dense on }[0,1]^m\,.
\end{equation*}
\end{lemma}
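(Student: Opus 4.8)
\emph{Proof proposal.} The plan is to reduce this probabilistic statement to a metrical equidistribution statement and then apply the classical metric theory of lacunary exponential sums. Conditioning on the event that the Poisson process has exactly $m$ points, the points $\tau_1,\ldots,\tau_m$ are independent and identically distributed on $[0,T]$ with common law proportional to the intensity measure; since the intensity is positive (and, as implicit in the statement, of finite total mass), this law is absolutely continuous with respect to Lebesgue measure. After rescaling by $T$, the vector $\theta=(\tau_1/T,\ldots,\tau_m/T)\in[0,1]^m$ thus has a distribution $\nu\ll\mathcal L^m$. Hence it suffices to prove that for $\mathcal L^m$-almost every $\theta\in[0,1]^m$ the set $\{(\hat N^n\theta_1\bmod1,\ldots,\hat N^n\theta_m\bmod1):n\in\N\}$ is dense in $[0,1]^m$; this then holds $\nu$-almost surely, hence conditionally on $m$ points almost surely, and summing over $m$ gives the claim.

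Since $\hat N^n\to\infty$, I would extract indices $n_1<n_2<\ldots$ with $\hat N^{n_1}\geq1$ and $\hat N^{n_{j+1}}\geq2\hat N^{n_j}$, i.e. a lacunary subsequence (nothing here requires the $\hat N^n$ to be integers). It is enough to show that $(\hat N^{n_j}\theta\bmod1)_j$ is equidistributed — in particular dense — in the torus $\mathbb T^m=(\R/\Z)^m$ for $\mathcal L^m$-a.e. $\theta$. By Weyl's criterion this is equivalent to showing, for every $h\in\Z^m\setminus\{0\}$, that $\tfrac1J\sum_{j=1}^J e^{2\pi i\hat N^{n_j}\langle h,\theta\rangle}\to0$ for a.e. $\theta$. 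Writing $\phi_j(\theta)=e^{2\pi i\hat N^{n_j}\langle h,\theta\rangle}$, a direct computation gives $\int_{[0,1]^m}\phi_j\overline{\phi_k}\,\d\theta=\prod_{h_i\neq0}\tfrac{e^{2\pi i(\hat N^{n_j}-\hat N^{n_k})h_i}-1}{2\pi i(\hat N^{n_j}-\hat N^{n_k})h_i}$, whose modulus is at most $C/|\hat N^{n_j}-\hat N^{n_k}|$; by lacunarity $|\hat N^{n_j}-\hat N^{n_k}|\geq2^{|j-k|-1}$, so $|\langle\phi_j,\phi_k\rangle|\leq 2C\cdot2^{-|j-k|}$. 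Consequently $\mathbb E|\tfrac1J\sum_{j\leq J}\phi_j|^2=\tfrac1{J^2}\sum_{j,k\leq J}\langle\phi_j,\phi_k\rangle=O(1/J)\to0$, so the Weyl sums converge to $0$ in $L^2([0,1]^m)$.

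To pass from $L^2$- to almost-everywhere convergence I would invoke the Rademacher–Menshov maximal inequality: the summable near-orthogonality $|\langle\phi_j,\phi_k\rangle|\leq 2C\cdot2^{-|j-k|}$ makes $(\phi_j)$ behave like an orthogonal system, so $\mathbb E\max_{J\leq2^\ell}|\sum_{j\leq J}\phi_j|^2\leq C'\ell^2\,2^\ell$; dividing by $2^{2(\ell-1)}$, summing over $\ell$, and applying Borel–Cantelli yields $\tfrac1J\sum_{j\leq J}\phi_j\to0$ a.e. (Alternatively one may simply cite the classical theorem that $(b_j x)_j$ is uniformly distributed modulo one for Lebesgue-almost every $x$ whenever $(b_j)$ is lacunary, applied to the absolutely continuous pushforward of $\mathcal L^m$ under $\theta\mapsto\langle h,\theta\rangle$.) Intersecting the resulting full-measure sets over the countably many $h\in\Z^m\setminus\{0\}$ gives a.e. equidistribution of $(\hat N^{n_j}\theta)_j$ on $\mathbb T^m$, hence density of the whole sequence $(\hat N^n\theta)_n$ in $[0,1]^m$.

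The conditioning and rescaling, the reduction to Lebesgue-a.e.\ $\theta$ via $\nu\ll\mathcal L^m$, the extraction of a lacunary subsequence, the evaluation of the integrals $\int\phi_j\overline{\phi_k}$, and the countable intersection over $h$ are all routine. The one substantial point, which I expect to be the main obstacle, is the upgrade of the $L^2$-convergence of the Weyl sums to almost-everywhere convergence — the metric theory of lacunary exponential sums — so the write-up should either carry out the Rademacher–Menshov argument carefully or pin down a citable statement of it.
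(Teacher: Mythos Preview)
Your proof is correct but takes a genuinely different route from the paper. The paper argues directly and elementarily: for each target ball $B_\varepsilon(\boldsymbol\theta)\subset[0,1]^m$ it shows that the set $S_\varepsilon^l(\boldsymbol\theta)$ of $\boldsymbol t\in[0,1]^m$ for which $(\hat N^n\boldsymbol t\bmod1)$ misses $B_\varepsilon(\boldsymbol\theta)$ for all $n\geq l$ is a Lebesgue nullset. The key is a geometric observation: the single-step ``miss'' set $T_\varepsilon^k(\boldsymbol\theta)=\{\boldsymbol t:(\hat N^k\boldsymbol t\bmod1)\notin B_\varepsilon(\boldsymbol\theta)\}$ is a periodic pattern of scale $1/\hat N^k$ with volume fraction $1-(2\varepsilon)^m$, and once $\hat N^k$ is large this pattern is fine enough to cut any finite-perimeter set $S$ down to at most $(1-(2\varepsilon)^m/2)|S|$. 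Iterating along well-chosen $k$ drives the measure of the intersection to zero. There is no Fourier analysis and no equidistribution --- only density is established, and the argument is entirely self-contained.

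Your approach, by contrast, extracts a lacunary subsequence, invokes Weyl's criterion, and proves the stronger conclusion of equidistribution on $\mathbb T^m$ via near-orthogonality of the exponentials $e^{2\pi i\hat N^{n_j}\langle h,\theta\rangle}$. The price is the passage from $L^2$- to almost-everywhere convergence of the Weyl sums, which you correctly flag as the one substantive step (Rademacher--Menshov for quasi-orthogonal systems, or a Gál--Koksma-type argument, or a direct citation of the metric theorem for lacunary sequences). Both proofs are valid; the paper's is more elementary and portable, yours is conceptually cleaner, yields a stronger conclusion, and handles the absolute-continuity reduction (positive but possibly non-constant intensity) more explicitly than the paper does.
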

\begin{proof}
The times $\frac{\tau_1}T,\ldots,\frac{\tau_m}T$ are independently identically distributed on $[0,1]$. Define
\begin{align*}
B_\varepsilon(\boldsymbol\theta)&=(\theta_1-\varepsilon,\theta_1+\varepsilon)\times\ldots\times(\theta_m-\varepsilon,\theta_m+\varepsilon)\subset[0,1]^m\,,\\
S_\varepsilon^l(\boldsymbol\theta)&=\{\boldsymbol t\in[0,1]^m\,:\,(\hat N^nt_1\!\!\mod1,\ldots,\hat N^nt_m\!\!\mod1)\notin B_\varepsilon(\boldsymbol\theta)\,\forall n\geq l\}\,.
\end{align*}
Thus, we have to show that $S_\varepsilon^l(\boldsymbol\theta)$ is a nullset for all $\varepsilon>0$, $l\in\N$, and $\boldsymbol\theta\in(0,1)^m$.
Now $S_\varepsilon^l(\boldsymbol\theta)$ can be written as (compare Figure~\ref{fig:density})
\begin{align*}
S_\varepsilon^l(\boldsymbol\theta)=\bigcap_{k=l}^\infty T_\varepsilon^k(\boldsymbol\theta)
\quad\text{ for }
T_\varepsilon^k(\boldsymbol\theta)
&=\{\boldsymbol t\in[0,1]^m\,:\,(\hat N^kt_1\!\!\mod1,\ldots,\hat N^kt_m\!\!\mod1)\notin B_\varepsilon(\boldsymbol\theta)\}\\
&=[0,1]^m\setminus\prod_{j=1}^m\left[B_{\frac\varepsilon{\hat N^k}}\big(\tfrac{\theta_j}{\hat N^k}\big)\cup B_{\frac\varepsilon{\hat N^k}}\big(\tfrac{\theta_j}{\hat N^k}+\tfrac{1}{\hat N^k}\big)\cup\ldots\cup B_{\frac\varepsilon{\hat N^k}}\big(\tfrac{\theta_j}{\hat N^k}+\tfrac{\hat N^k-1}{\hat N^k}\big)\right]\,.
\end{align*}
Note that for every set $S\subset\R^m$ with finite perimeter there is some $k$ such that $|T_\varepsilon^k(\boldsymbol\theta)\cap S|/|S|<1-(2\varepsilon)^m/2$.
Indeed, for any $k\in\N$, the hypercube $[0,1]^m$ is tiled by $(\hat N^k)^m$ hypercubes of sidelength $\frac1{\hat N^k}$, each having a volume fraction $1-(2\varepsilon)^m$ inside $T_\varepsilon^k(\boldsymbol\theta)$ (see Figure~\ref{fig:density}).
By choosing $k$ sufficiently large, the little hypercubes also tile $S$ with an arbitrarily small error at the boundary $\partial S$ so that the volume fraction of $S$ inside $T_\varepsilon^k(\boldsymbol\theta)$ approaches $1-(2\varepsilon)^m$ as well.
Therefore, we may for $j\in\N$ recursively define
\begin{equation*}
S_1=T_\varepsilon^l(\boldsymbol\theta)\,,\quad
S_{j+1}=T_\varepsilon^{k_j}(\boldsymbol\theta)\cap S_j\,,\quad
\text{where }k_j\text{ satisfies }\tfrac{|T_\varepsilon^{k_{j}}(\boldsymbol\theta)\cap S_j|}{|S_j|}<1-\tfrac{(2\varepsilon)^m}2\,.
\end{equation*}
Then, $S_\varepsilon^l(\boldsymbol\theta)\subset\bigcap_{j=1}^\infty S_j$ so that (letting $|\cdot|$ denote Lebesgue measure)
\begin{equation*}
|S_\varepsilon^l(\boldsymbol\theta)|\leq\lim_{j\to\infty}|S_j|=\lim_{j\to\infty}|S_1|\frac{|S_2|}{|S_1|}\cdots\frac{|S_j|}{|S_{j-1}|}\leq\lim_{j\to\infty}(1-(2\varepsilon)^m/2)^j=0\,.\qedhere
\end{equation*}
\end{proof}

\begin{figure}
\centering
\setlength{\unitlength}{.35\linewidth}
\begin{picture}(1,1)
\put(0,0){\includegraphics[width=\unitlength]{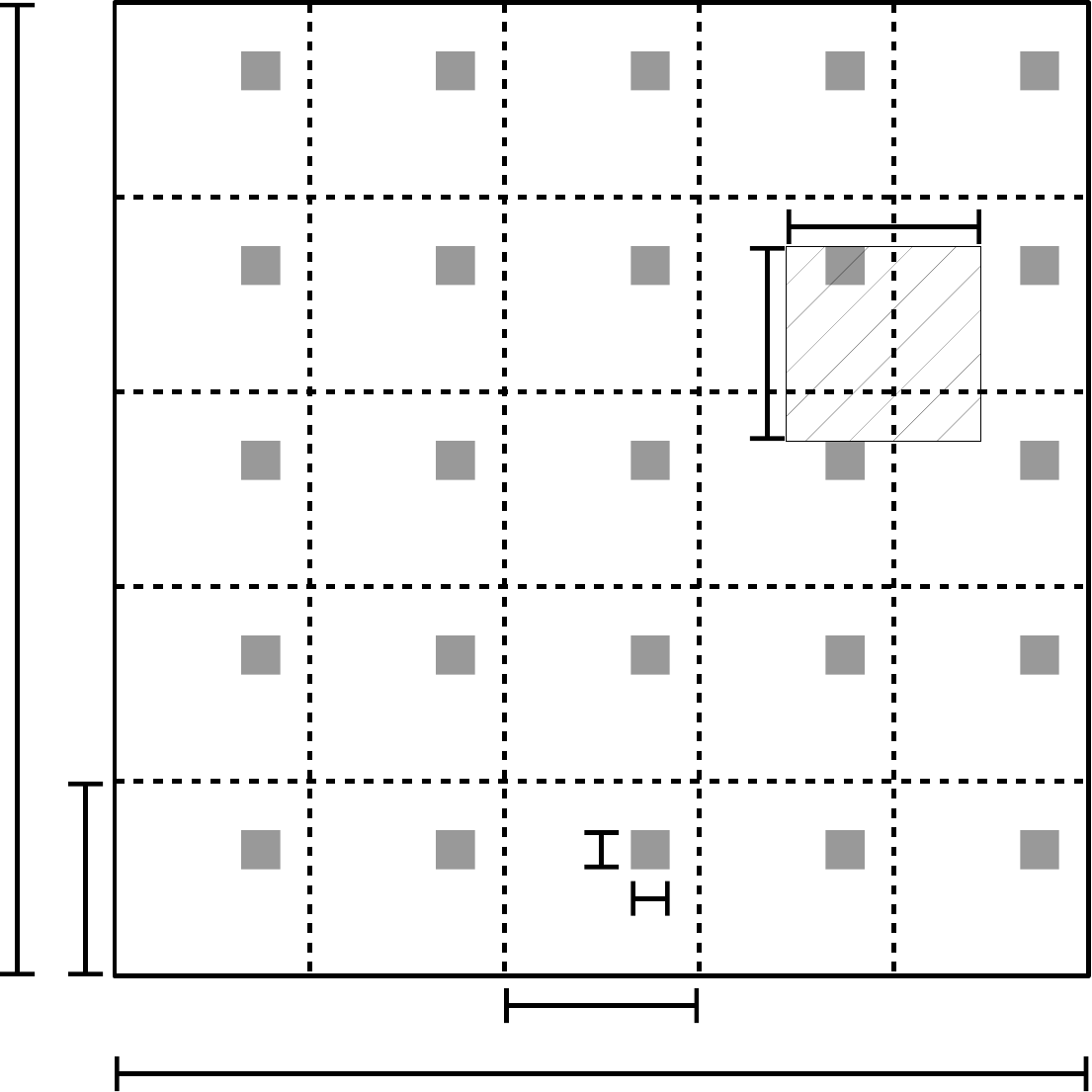}}
\put(.53,-.03){\small$1$}
\put(-.02,.53){\small$1$}
\put(.02,.2){\tiny$\tfrac1{\!\hat N^{\!k}\!}$}
\put(.49,.04){\tiny$1/\hat N^k$}
\put(.8,.68){\tiny$\bullet$}
\put(.82,.68){\small$\boldsymbol\theta$}
\put(.645,.68){\small$2\varepsilon$}
\put(.785,.8){\small$2\varepsilon$}
\put(.48,.22){\tiny$\tfrac{2\varepsilon}{\!\hat N^{\!k}\!}$}
\put(.565,.14){\tiny$\tfrac{2\!\varepsilon}{\!\hat N^{\!k}\!}$}
\put(1.13,.97){\vector(-2,-1){.17}}
\put(1.15,.95){$T_\varepsilon^k(\boldsymbol\theta)$}
\end{picture}
\caption{The set $T_\varepsilon^k(\boldsymbol\theta)$ from the proof of Lemma~\ref{thm:mod1Dense}, consisting of copies of $[0,1]^m\setminus B_\varepsilon(\boldsymbol\theta)$, scaled by $\frac1{\hat N^k}$.}
\label{fig:density}
\end{figure}

\subsection{Time-continuous scanning paths}
Here, we shall consider the case in which the piecewise constant scanning path $x^n(t)$ approximates a time-continuous scanning path as $n\to\infty$, that is,
\begin{equation*}
x^n\to x^\infty\text{ uniformly on }[0,T]\,,
\end{equation*}
where, for a fixed total acquisition time $T$, we assume the number of measurement locations $N^n$, the dwell time $\Delta t^n$, the signal acquisition times $t_i^n$, the path $x^n$, and the background noise standard deviation $\sigma^n$ to satisfy
\begin{equation*}
N^n\to\infty\,,\quad
\Delta t^n=\tfrac T{N^n}\,,\quad
t_i^n=i\Delta t^n\text{ for }i=1,\ldots,N^n\,,\quad
x^n(t)=x_i^n\text{ for }t\in[t_{i-1}^n,t_i^n)\,,\quad
\tfrac{\sigma^nN^n}\alpha\to0\,.
\end{equation*}
Consequently, assuming a smooth (for instance Lipschitz) true underlying material density $\den$, the material density $\lambda^n$ under the electron beam satisfies
\begin{equation*}
\lambda^n(t)=\den(x^n(t)+w(t))\to\den(x^\infty(t)+w(t))=\lambda^\infty(t)\text{ uniformly on }[0,T]\,.
\end{equation*}
Statement\,\ref{enm:niceData} of Lemma~\ref{thm:dataBehaviour} thus implies weak convergence of the measured signal,
\begin{equation*}
\mathcal G^n\stackrel*\rightharpoonup\elec^\infty
\text{ with }\mathcal G^n(t)=\frac{(\mathbf\img^n)_i-\mu}{\alpha(t_{i}^n-t_{i-1}^n)}\text{ for }t\in[t_{i-1}^n,t_i^n)\,,
\end{equation*}
where $\elec^\infty$ represents the signal belonging to $\lambda^\infty$.
In addition, by Statement\,\ref{enm:cleanData} of Lemma~\ref{thm:dataBehaviour} we have
\begin{equation*}
\frac1{\Delta t^n}\left\|\llbracket(\Delta t^n)\mathcal G^n\rrbracket-(\Delta t^n)\mathcal G^n\right\|_{\meas([0,T])}\to0\,.
\end{equation*}
In that case, the energy, whose minimizers yield estimates for the sample motion and material density, becomes
\begin{equation*}
E^n[\motion,\mathbf p]
=\sum_{i=1}^{N^n}\diss^n((\mathbf\img^n)_i,\Delta t^n\den[\mathbf p](x_i^n+w_i))+\frac1{2\diff}\sum_{i=1}^{N^n}\frac{|w_i-w_{i-1}|^2}{\Delta t^n}+\iota_A(\mathbf p)
\end{equation*}
(note that the data dissimilarity $\diss^n$ depends on $\sigma^n$ and thus has a superscript as well).
In this formulation, it is still inconvenient to analyze the energy convergence as $n\to\infty$, since the dimension of the argument $\motion$ changes with $n$.
Therefore, we reformulate the energy based on time-continuous representations of $\motion$ as
\begin{equation*}
\mathcal E^n[\mathcal W,\mathbf p]
=\begin{cases}
E^n[\motion,\mathbf p]&\text{if }\mathcal W:[0,T]\to\R^2\text{ is the piecewise affine interpolation}\\&\quad\text{ of the values }\motion=(w_0,\ldots,w_{N^n})\text{ at positions }(t_0^n,\ldots,t_{N^n}^n)\\
\infty&\text{else,}
\end{cases}
\end{equation*}
where consistently with our previous notation, we use calligraphic capitals for functions derived from discrete vectors.
We shall show that this energy $\Gamma$-converges (up to constants) against the energy
\begin{equation*}
\mathcal E^\infty[\mathcal W,\mathbf p]
=\begin{cases}
\diss_{\mathrm{KL}}(\elec^\infty,\den[\mathbf p](x^\infty(\cdot)+\mathcal W(\cdot)))
+\frac1{2\diff}|\mathcal W|_{H^1((0,T))^2}^2
+\iota_A(\mathbf p)\,,
&\text{if }\mathcal W(0)=w_0\\
\infty&\text{else,}
\end{cases}
\end{equation*}
where $|\cdot|_{H^1((0,T))^2}$ is the $H^1$-seminorm and $\diss_{\mathrm{KL}}$ is the Kullback--Leibler divergence
\begin{equation*}
\diss_{\mathrm{KL}}(\elec,\den)=\int_0^T\den(t)\,\d t-\int_0^T\log\den(t)\,\d\elec(t)\,.
\end{equation*}
Thus, in the limit, the motion estimate $\mathcal W$ is regularized in $H^1$, and the material density is compared to the electron detections via the Kullback--Leibler divergence between measures.

\begin{theorem}[Limit reconstruction for time-continuous scanning paths]\label{thm:GammaLimit}
Let $\mathbf p\mapsto\den[\mathbf p]$ be continuous from $A$ to $C^{0,1}(\Omega)$ with $\den[\mathbf p]>0$ bounded away from zero for any $\mathbf p$ (this holds true for our particular choice of parameterization).
There exists a sequence of constants $C^n=C^n(\mathcal G^n,\sigma^n)$ such that
with respect to weak convergence in $H^1((0,T))^2$ and (strong) convergence in $\R^{\numFitParams}$ we have
\begin{equation*}
\Gamma-\lim_{n\to\infty}
\mathcal E^n-C^n
=\mathcal E^\infty\,.
\end{equation*}
Furthermore, minimizers of $\mathcal E^n$ converge in the same topology against minimizers of $\mathcal E^\infty$.
\end{theorem}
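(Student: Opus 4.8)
The plan is to establish the $\Gamma$-convergence in the two standard steps (liminf inequality and recovery sequence), after first choosing the right compensating constants $C^n$. The natural candidate is to take $C^n$ to be the $z$-independent constant appearing in Lemma~\ref{thm:dataTerm}, summed over all measurement locations: $C^n = \sum_{i=1}^{N^n}\tilde C((\mathbf\img^n)_i-\mu,\alpha,\sigma^n)$, so that after subtraction the data term of $\mathcal E^n$ becomes (up to an $O(\sigma^n/\alpha)$ error which, summed over $N^n$ terms, is controlled by $\frac{\sigma^nN^n}{\alpha}\to0$)
\begin{equation*}
\sum_{i=1}^{N^n}\left[\Delta t^n\,\den[\mathbf p](x_i^n+w_i) - \llbracket\tfrac{(\mathbf\img^n)_i-\mu}{\alpha}\rrbracket\log\big(\Delta t^n\den[\mathbf p](x_i^n+w_i)\big)\right].
\end{equation*}
Here one uses that $\den[\mathbf p]$ is bounded away from zero and from above on $\Omega$ uniformly in $\mathbf p$ (hypothesis), so the interval $[\underline z,\overline z]$ in Lemma~\ref{thm:dataTerm} can be taken uniform; one also needs the condition on $\frac{g-\mu}{\alpha}$ in that lemma, which holds because the electron counts $\mathbf\img_P^n$ are almost surely realizations of the fixed Poisson process $\hat\elec$ restricted to finitely many disjoint time windows, hence bounded. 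The first sum is just $\int_0^T\den[\mathbf p](x^n(t)+\mathcal W(t))\,\d t$ (since $x^n$ is piecewise constant and $\mathcal W$ is the affine interpolant, with a vanishing quadrature error as $\Delta t^n\to0$ thanks to Lipschitz continuity of $\den[\mathbf p]$), and the second sum is $\int_0^T\log\den[\mathbf p](x^n(t)+w_i)\,\d\tilde\elec^n$ where $\tilde\elec^n=\sum_i\llbracket(\mathbf\img^n)_i/\ldots\rrbracket\delta_{\text{(time)}}$; by Statement~\ref{enm:niceData} and Statement~\ref{enm:cleanData} of Lemma~\ref{thm:dataBehaviour}, $\tilde\elec^n\stackrel*\rightharpoonup\elec^\infty$. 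So the rounded data term converges to $\diss_{\mathrm{KL}}(\elec^\infty,\den[\mathbf p](x^\infty(\cdot)+\mathcal W(\cdot)))$ whenever $\mathcal W^n\rightharpoonup\mathcal W$ in $H^1$ (hence uniformly by compact embedding) and $\mathbf p^n\to\mathbf p$.

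For the $\liminf$ inequality, suppose $\mathcal W^n\rightharpoonup\mathcal W$ in $H^1((0,T))^2$ and $\mathbf p^n\to\mathbf p$; we may assume $\liminf\mathcal E^n[\mathcal W^n,\mathbf p^n]<\infty$, which forces $\mathbf p^n\in A$ eventually (so $\mathbf p\in A$ by closedness) and forces each $\mathcal W^n$ to be the affine interpolant of its nodal values. The regularization term $\frac1{2\diff}\sum_i\frac{|w_i-w_{i-1}|^2}{\Delta t^n}$ equals exactly $\frac1{2\diff}|\mathcal W^n|_{H^1}^2$ for the affine interpolant, and this is weakly lower semicontinuous, giving $\liminf\frac1{2\diff}|\mathcal W^n|_{H^1}^2\geq\frac1{2\diff}|\mathcal W|_{H^1}^2$; it also forces $\mathcal W(0)=w_0$ since $w_0$ is fixed and pointwise evaluation at $0$ is continuous under $H^1$-weak (indeed uniform) convergence. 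For the data part one combines the uniform convergence $\mathcal W^n\to\mathcal W$, the uniform convergence $x^n\to x^\infty$, the equicontinuity of $\{\den[\mathbf p^n]\}$ and convergence $\den[\mathbf p^n]\to\den[\mathbf p]$ in $C^{0,1}$, and the weak-* convergence $\tilde\elec^n\stackrel*\rightharpoonup\elec^\infty$ to pass to the limit in the (rearranged, constant-subtracted) data term; the positivity lower bound on $\den[\mathbf p]$ keeps $\log\den[\mathbf p^n](\cdot)$ uniformly bounded so the pairing against measures is continuous. The $\Gamma$-$\limsup$ / recovery sequence is easier: given $(\mathcal W,\mathbf p)$ with $\mathcal W(0)=w_0$, take $\mathcal W^n$ to be the piecewise affine interpolant of the nodal values $\mathcal W(t_i^n)$ and $\mathbf p^n=\mathbf p$; then $\mathcal W^n\to\mathcal W$ strongly in $H^1$ (standard finite-element interpolation estimate, using $\mathcal W\in H^1$; when $\mathcal W\notin H^1$ the target energy is $+\infty$ and nothing is to prove), so $|\mathcal W^n|_{H^1}^2\to|\mathcal W|_{H^1}^2$, and the same uniform-convergence-plus-weak-* argument shows the data term converges to $\diss_{\mathrm{KL}}$.

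Finally, convergence of minimizers follows from the fundamental theorem of $\Gamma$-convergence once one has equi-coercivity: a uniform bound on $\mathcal E^n[\mathcal W^n,\mathbf p^n]$ gives a uniform bound on $|\mathcal W^n|_{H^1}$ and, together with $\mathcal W^n(0)=w_0$, a uniform $H^1$-norm bound, hence (after subsequence) weak $H^1$ compactness of $\mathcal W^n$, while $\mathbf p^n\in A$ compact gives compactness of $\mathbf p^n$; the limit is then a minimizer of $\mathcal E^\infty$ by the standard argument, and $\mathcal E^\infty$ has a minimizer because it is itself lower semicontinuous and coercive on $H^1\times A$ (same reasoning as Theorem~\ref{thm:existence}, now in the function-space setting). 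I expect the main obstacle to be the bookkeeping in the data term: one must carefully justify replacing $\diss^n((\mathbf\img^n)_i,z)$ by the clean expression $z-\llbracket\cdot\rrbracket\log z$ with a total error that is $o(1)$ \emph{after} summing $N^n$ contributions and \emph{after} subtracting $C^n$ — this is exactly where the hypothesis $\frac{\sigma^nN^n}{\alpha}\to0$ enters (each error is $O(\sigma^n/\alpha)$ and there are $N^n$ of them), and where one must check the growth condition $\frac{g-\mu}{\alpha}\in o(\exp((\alpha/\sigma^n)^2))$ and the "not too close to a half-integer" condition of Lemma~\ref{thm:dataTerm} hold almost surely uniformly in $i$ and $n$, using the almost-sure local finiteness and the density statement of Lemma~\ref{thm:mod1Dense} (to rule out nodal values landing pathologically near half-integers along the sequence).
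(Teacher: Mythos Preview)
Your overall architecture matches the paper's proof almost exactly: liminf via weak lower semicontinuity of the $H^1$-seminorm plus continuity of the (constant-corrected) data term, recovery sequence via nodal interpolation, and equi-coercivity for convergence of minimizers. Two points need correcting.

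\textbf{The constants $C^n$ are incomplete.} With your choice $C^n=\sum_i\tilde C((\mathbf\img^n)_i-\mu,\alpha,\sigma^n)$, the remaining data term contains $-\llbracket\tfrac{(\mathbf\img^n)_i-\mu}{\alpha}\rrbracket\log\bigl(\Delta t^n\den[\mathbf p](x_i^n+w_i)\bigr)$, and the piece $-\llbracket\cdot\rrbracket\log\Delta t^n$ does not vanish: summed over $i$ it equals $-\bigl(\sum_i\llbracket\cdot\rrbracket\bigr)\log\Delta t^n$, which diverges to $+\infty$ since the total electron count is a.s.\ finite and positive while $\log\Delta t^n\to-\infty$. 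Thus $\mathcal E^n-C^n\to+\infty$ and no $\Gamma$-limit is obtained. The paper absorbs this divergent piece into $C^n$ as well, taking
\[
C^n=\frac1{\Delta t^n}\int_0^T\tilde C(\alpha\Delta t^n\mathcal G^n(t),\alpha,\sigma^n)\,\d t-\frac1{\Delta t^n}\int_0^T\llbracket\Delta t^n\mathcal G^n(t)\rrbracket\log\Delta t^n\,\d t\,,
\]
after which only $\log\den[\mathbf p]$ survives in the data term and the argument you sketch goes through. You actually make exactly this slip between your displayed formula (which has $\log(\Delta t^n\den)$) and the next sentence (which has $\log\den$); the missing term has to be parked in $C^n$, not dropped.

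\textbf{Lemma~\ref{thm:mod1Dense} is irrelevant here.} That lemma concerns density of rescaled Poisson times modulo~1 and is used only for Theorem~\ref{thm:noGammaLimit}. The ``not near a half-integer'' hypothesis of Lemma~\ref{thm:dataTerm} is handled, as you already noted earlier in your sketch, by Statement~\ref{enm:cleanData} of Lemma~\ref{thm:dataBehaviour}: $\|\tilde{\mathcal G}^n\|_{\meas([0,T])}\to0$ gives $\bigl|\llbracket\Delta t^n\mathcal G^n(t)\rrbracket-\Delta t^n\mathcal G^n(t)\bigr|\to0$ uniformly in $t$, so for $n$ large the deviation from the nearest integer is below any fixed $c<\tfrac12$. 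No appeal to Lemma~\ref{thm:mod1Dense} is needed or correct.
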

\begin{proof}
\emph{$\liminf$-inequality}:
Let $\mathcal W^n\rightharpoonup\mathcal W$ in $H^1((0,T))^2$ and $\mathbf p^n\to\mathbf p$.
Without loss of generality we may assume $\mathcal E^n[\mathcal W^n,\mathbf p^n]-C^n<C<\infty$ for all $n$ (else we may either restrict to a subsequence or there is nothing to show).
Due to the compact embedding of $H^1((0,T))$ in $C^0([0,T])$ combined with $\mathcal W^n(0)=w_0$ for all $n$, we get $\mathcal W(0)=w_0$.

Furthermore, note that
\begin{equation*}
\mathcal E^n[\mathcal W^n,\mathbf p^n]=
\int_0^T\frac{\diss^n(\Delta t^n\alpha\mathcal G^n(t)+\mu,\Delta t^n\den[\mathbf p^n](x^n(t)+\mathcal W^n(\Delta t^n\lceil\tfrac{t}{\Delta t^n}\rceil)))}{\Delta t^n}\,\d t
+\frac1{2\diff}|\mathcal W^n|_{H^1((0,T))^2}^2
+\iota_A(\mathbf p^n)\,.
\end{equation*}
Since $|\cdot|_{H^1((0,T))^2}^2$ and $\iota_A$ are sequentially lower semi-continuous under weak convergence in $H^1((0,w))^2$ and convergence in $\R^{\numFitParams}$, respectively,
it suffices to show that the integral minus
\begin{equation*}
C^n=\frac1{\Delta t^n}\int_0^T\tilde C(\alpha\Delta t^n\mathcal G^n(t),\alpha,\sigma^n)-\llbracket\Delta t^n\mathcal G^n(t)\rrbracket\log\Delta t^n\,\d t
\end{equation*}
(with $\tilde C$ from Lemma~\ref{thm:dataTerm})
converges up to a subsequence against $\diss_{\mathrm{KL}}(\elec^\infty,\den[\mathbf p](x^\infty(\cdot)+\mathcal W(\cdot)))$.

We have $\den[\mathbf p^n]\to\den[\mathbf p]$ in $C^{0,1}(\Omega)$ as well as $x^n\to x^\infty$ in $L^\infty((0,T))$ and $\mathcal W^n\to\mathcal W$ in $C^0((0,T))$ (upon extracting a subsequence) so that also
\begin{equation*}
u^n(t)
=\den[\mathbf p^n](x^n(t)+\mathcal W^n(\Delta t^n\lceil\tfrac{t}{\Delta t^n}\rceil))
\to
\den[\mathbf p](x^\infty(t)+\mathcal W(t))
=u^\infty(t)
\end{equation*}
uniformly.
Furthermore, for $n$ large enough Lemma~\ref{thm:dataTerm} implies
\begin{multline*}
\frac{\diss^n(\Delta t^n\alpha\mathcal G^n(t)+\mu,\Delta t^nu^n(t))}{\Delta t^n}
-\frac{\tilde C(\alpha\Delta t^n\mathcal G^n(t),\alpha,\sigma^n)}{\Delta t^n}
+\frac{\llbracket\Delta t^n\mathcal G^n(t)\rrbracket}{\Delta t^n}\log\Delta t^n\\
=u^n(t)-\frac{\llbracket\Delta t^n\mathcal G^n(t)\rrbracket}{\Delta t^n}\log u^n(t)+O(\tfrac{\sigma^n}{\alpha\Delta t^n})
\end{multline*}
since $\underline z\leq\den[\mathbf p^n]\leq\overline z$ for some $0<\underline z<\overline z$, $\Delta t^n\mathcal G^n(t)\leq\|\mathcal G^n\|_{\meas([0,T])}$ is uniformly bounded,
and $|\llbracket\Delta t^n\mathcal G^n(t)\rrbracket-\Delta t^n\mathcal G^n(t)|
\leq\frac{\|\llbracket\Delta t^n\mathcal G^n\rrbracket-\Delta t^n\mathcal G^n\|_{\meas([0,T])}}{\Delta t^n}\to0$ uniformly in $t$.
Thus, due to the uniform convergence of $u^n$ and the weak-* convergence of $\mathcal G^n$
(which implies $\frac{\llbracket\Delta t^n\mathcal G^n\rrbracket}{\Delta t^n}\stackrel*\rightharpoonup\elec^\infty$) we obtain
\begin{multline*}
\int_0^T\frac{\diss^n(\Delta t^n\alpha\mathcal G^n(t)+\mu,\Delta t^nu^n(t))}{\Delta t^n}\,\d t-C^n
=\int_0^Tu^n(t)-\frac{\llbracket\Delta t^n\mathcal G^n(t)\rrbracket}{\Delta t^n}\log u^n(t)\,\d t+O(\tfrac{T\sigma^n}{\alpha\Delta t^n})\\
\to\int_0^Tu^\infty(t)\,\d t-\int_0^T\log u^\infty(t)\,\d\elec^\infty(t)
=\diss_{\mathrm{KL}}(\elec^\infty,u^\infty)\,,
\end{multline*}
as desired.
Indeed, for any $g^n\stackrel*\rightharpoonup g$ we have
\begin{multline*}
\left|\int_0^T\log u^n\,\d g^n-\int_0^T\log u^\infty\,\d g\right|
\leq\left|\int_0^T\log u^n-\log u^\infty\,\d g^n\right|+\left|\int_0^T\log u^\infty\,\d(g-g^n)\right|\\
\leq\|\log u^n-\log u^\infty\|_{L^\infty((0,T))}\|g^n\|_{\meas([0,T])}+\left|\int_0^T\log u^\infty\,\d(g-g^n)\right|
\to0\,.
\end{multline*}

\emph{$\limsup$-inequality}:
Let $\mathcal W\in H^1((0,T))^2$ and $\mathbf p\in\R^{\numFitParams}$ be given with $\mathcal E^\infty[\mathcal W,\mathbf p]<\infty$.
As recovery sequence we choose $\mathbf p^n=\mathbf p$ and $\mathcal W^n$ the piecewise affine interpolation of the points $(t_i^n,\mathcal W(t_i^n))$, $i=0,\ldots,N^n$,
which even converges strongly in $H^1((0,T))^2$.
With this choice, we obtain
\begin{equation*}
\mathcal E^n[\mathcal W^n,\mathbf p^n]=
\int_0^T\frac{\diss^n(\Delta t^n\alpha\mathcal G^n(t)+\mu,\Delta t^n\den[\mathbf p](x^n(t)+\mathcal W^n(\Delta t^n\lceil\tfrac{t}{\Delta t^n}\rceil)))}{\Delta t^n}\,\d t
+\frac1{2\diff}|\mathcal W^n|_{H^1((0,T))^2}^2
+\iota_A(\mathbf p)\,.
\end{equation*}
The latter two terms converge against $\frac1{2\diff}|\mathcal W|_{H^1((0,T))^2}^2+\iota_A(\mathbf p)$, while the integral minus $C^n$ converges to $\diss_{\mathrm{KL}}(\elec^\infty,\den[\mathbf p](x^\infty(\cdot)+\mathcal W(\cdot)))$ as in the proof of the $\liminf$-inequality
so that $\mathcal E^n[\mathcal W^n,\mathbf p^n]-C^n\to\mathcal E^\infty[\mathcal W,\mathbf p]$ as desired.

\emph{Convergence of minimizers}:
For any $\hat{\mathbf p}\in A$, we have $\min_{\mathcal W,\mathbf p}\mathcal E^n[\mathcal W,\mathbf p]\leq\mathcal E^n[0,\hat{\mathbf p}]$, and the right-hand side is uniformly bounded in $n$.
Since $\frac1{2\diff}|\mathcal W|_{H^1((0,T))^2}^2+\iota_A(\mathbf p)$ forms part of each $\mathcal E^n$,
this implies (using $\mathcal W^n(0)=0$ and the Poincar\'e inequality) that the set of minimizers of the $\mathcal E^n$ is uniformly bounded (and thus sequentially compact with respect to our chosen topology) in $H^1((0,T))^2\times\R^{\numFitParams}$,
which in turn is well-known to result in any sequence of minimizers having a subsequence converging to a minimizer of $\mathcal E^\infty$.
\end{proof}

That the limit energy only stays finite for motions $\mathcal W$ of $H^1$-regularity may seem a little counterintuitive
since Brownian motion almost surely has no weak derivative, however, in the MAP estimate, we retrieve very special realizations which may indeed have additional regularity.

\begin{remark}[Full coverage via Peano curve]
In some contexts, for instance for biological samples that can only be scanned at much lower magnification,
it might be advantageous to fully cover the whole sample $\Omega$ during the image acquisition
(that is, to traverse every point in $\Omega$ at least once along the scan path),
since otherwise small objects might be overlooked in between the scan locations (for instance between the rows for row-wise scans).
This can be achieved using a space-filling curve such as a Peano curve $x^\infty$ as scan path.

To this end, consider a standard sequence of piecewise affine curves $y^n:[0,T]\to\Omega$ with constant absolute velocity
such that $y^n$ converges uniformly to the Peano curve $x^\infty:[0,T]\to\Omega$.
Let $N^n$ denote the number of corners of $y^n$ and take these as measurement locations $x_1^n,\ldots,x_{N^n}^n$ with dwell time $\Delta t^n=\frac T{N^n}$.
Then the corresponding beam scanning path $x^n$ also converges uniformly against $x^\infty$ so that our previous $\Gamma$-convergence result applies.
In particular, in the limit $n\to\infty$ the data term $\diss_{\mathrm{KL}}(\elec^\infty,\den[\mathbf p](x^\infty(\cdot)+\mathcal W(\cdot)))$
compares the measurement to the estimated material density at every point in $\Omega$ (up to the Brownian sample motion).

Note that since each point in $\Omega$ is covered one might be tempted to rewrite the data term and the $H^1$-regularization of $\mathcal W$ as a space integral over $\Omega$,
however, this is not possible since a space-filling curve $x^\infty$ can never be injective.
Thus, a sampling of the entire domain can only be expressed as a time-like variational problem, but not a space-like one.
\end{remark}

\begin{remark}[Constant number of continuous rows]
Typically, the electron beam scans the sample row-wise as described in Section\,\ref{sec:scanMode}.
We may consider the case in which the number $N_2$ of scanned rows as well as the scan time $T_{\text{row}}$ per row and the waiting time $\Delta T$ between two rows stay the same,
but the number $N_1^n=\frac{T_{\text{row}}}{\Delta t^n}=\frac a{\Delta x_1^n}$ of acquired pixels per row converges to infinity.
In this case, our $\Gamma$-convergence result applies to each single row scan.

Setting $T=T_{\text{row}}$ we may split each function $f(t)$ of time up into the time intervals corresponding to the different scan rows according to
\begin{equation*}
f_j:[0,T]\to\R^2\,,\quad
f_j(t)=f(t+(j-1)(T_{\text{row}}+\Delta T))\,,\quad
j=1,\ldots,N_2\,.
\end{equation*}
In particular, we apply this notation to the motion estimate $f=\mathcal W$, the scan path $f=x^n$, and the signal $f=\mathcal G^n$.
Then, our model energy can be rewritten as
\begin{equation*}
E^n[\motion,\mathbf p]
=\mathcal E^n[\mathcal W,\mathbf p]
=\sum_{j=1}^{N_2}\mathcal E^n_j[\mathcal W_j,\mathbf p]+\frac1{2\diff}\frac{|\mathcal W_j(0)-\mathcal W_{j-1}(T)|^2}{\Delta T}\,,
\end{equation*}
where we simply set $\mathcal W_{0}=0$ and where $\mathcal E^n_j$ represents the energy for scan path $x^n_j$ and signal $\mathcal G^n_j$ (note that for the $j$\textsuperscript{th} energy $\mathcal E^n_j$ the initial accumulated motion is $w_0=\mathcal W_j(0)$).
Since the $\mathcal E^n_j$ $\Gamma$-converge against the $\mathcal E^\infty_j$
and $\sum_{j=1}^{N_2}\frac1{2\diff}\frac{|\mathcal W_j(0)-\mathcal W_{j-1}(T)|^2}{\Delta T}$ represents a continuous perturbation, the $\Gamma$-limit of the full energy is
\begin{equation*}
\mathcal E^\infty[\mathcal W,\mathbf p]
=\sum_{j=1}^{N_2}\mathcal E^\infty_j[\mathcal W_j,\mathbf p]+\frac1{2\diff}\frac{|\mathcal W_j(0)-\mathcal W_{j-1}(T)|^2}{\Delta T}\,.
\end{equation*}
Thus, the motion estimate is $H^1$-regularized separately for each row, and the difference between the last and the first shift of each row are penalized quadratically in addition.
\end{remark}

\subsection{Finer row and column resolution}
We now consider the case in which there are not only more and more pixels per row, but in which the number of rows $N_2^n$ also increases to infinity.
Of course, to perform such measurements in finite time, not only the dwell time $\Delta t^n$ has to decrease to zero, but also the waiting time $\Delta T^n$ between consecutive rows.
At first sight, this may seem unrealistic since a sufficient waiting time is necessary after any large motion,
however, if instead of scanning each row from left to right one scans the rows in alternating directions, obtaining a snake-like scanning path,
then the motion between consecutive rows actually is of the same order as the motion between two pixels so that no waiting time is necessary.
For simplicity we shall thus set the waiting time $\Delta T$ between consecutive rows to zero;
its inclusion would not lead to a qualitatively different analysis.
Thus, we choose
\begin{gather*}
N_1^n,N_2^n\to\infty\,,\quad
N^n=N_1^nN_2^n\,,\quad
\Delta t^n=\tfrac T{N^n}\,,\quad
t_i^n=i\Delta t^n\text{ for }i=1,\ldots,N^n\,,\quad
\tfrac{\sigma^nN^n}\alpha\to0\,,\\
\Delta x_1^n=\tfrac a{N_1^n}\,,\quad
\Delta x_2^n=\tfrac a{N_2^n}\,,\quad
x^n(t)=x_{ml}^n=(m\Delta x_1^n,l\Delta x_2^n)\text{ for }t\in[t_{i-1}^n,t_i^n)\text{ and }m,l\text{ according to \eref{eq:STEMIndexing}}\,.\quad
\end{gather*}
We will see that---for our chosen stochastic coupling---the variational model does not converge in this case.

Again we have to identify how the measured signal behaves as $n\to\infty$.
We first note that the material density $\lambda^n(t)=\den(x^n(t)+w(t))$ under the electron beam converges weakly against some row-wise average density.
\begin{lemma}
Denoting by $v=\frac{a}{T}$ the average vertical speed of the electron beam, we have
\begin{equation*}
\lambda^n(t)=\den(x^n(t)+w(t))
\stackrel*\rightharpoonup
\lambda^\infty(t)
=\frac1a\int_0^a\den((s,vt)+w(t))\,\d s
\quad\text{ in }L^\infty((0,T))\,.
\end{equation*}
\end{lemma}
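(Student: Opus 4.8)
The plan is to establish the weak-$*$ convergence for the entire sequence directly, by combining a uniform $L^\infty$ bound with convergence of the pairings $\int_0^T\lambda^n\varphi\,\d t$ against a dense class of test functions.

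First I would record the uniform bound. Since $w$ is a fixed continuous path on the compact interval $[0,T]$, the set $\{x+w(t):x\in\Omega,\ t\in[0,T]\}$ lies in a fixed compact set $K\subset\R^2$, on which $\den$ is bounded; hence $\|\lambda^n\|_{L^\infty((0,T))}\le\|\den\|_{L^\infty(K)}$ for all $n$. Because a norm-bounded sequence in $L^\infty=(L^1)^*$ converges weak-$*$ to a prescribed limit as soon as its pairing with every member of a dense subset of $L^1$ converges, it suffices to show $\int_0^T\lambda^n\varphi\,\d t\to\int_0^T\lambda^\infty\varphi\,\d t$ for every $\varphi\in C([0,T])$. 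Here one also checks in passing that $\lambda^\infty$ is well defined and continuous on $[0,T]$, using the uniform continuity of $w$ together with the assumed Lipschitz regularity of $\den$.

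Next I would exploit the two-scale structure of $x^n$. Decompose $[0,T]$ into the intervals $I_l^n$ on which the $l$\textsuperscript{th} row is scanned; since $\Delta T=0$, these have common length $T_{\mathrm{row}}^n=T/N_2^n\to0$. On $I_l^n$ the vertical coordinate of $x^n$ is the constant $l\Delta x_2^n$, which lies within $\Delta x_2^n=a/N_2^n$ of $vt$, while the horizontal coordinate runs through the equispaced grid $\{m\Delta x_1^n:m=1,\dots,N_1^n\}$ (as in the indexing \eref{eq:STEMIndexing}), each value being held for a time $\Delta t^n$. Choosing a representative time $t_l^\ast\in I_l^n$ and denoting by $m(t)$ the pixel index active at time $t$, I would estimate for $t\in I_l^n$
\[
\bigl|\lambda^n(t)-\den\bigl((m(t)\Delta x_1^n,v t_l^\ast)+w(t_l^\ast)\bigr)\bigr|
\le\mathrm{Lip}(\den)\bigl(\omega_w(T_{\mathrm{row}}^n)+\Delta x_2^n+v T_{\mathrm{row}}^n\bigr),
\]
where $\omega_w$ is the modulus of continuity of $w$; the three error contributions arise from freezing $w$ at $t_l^\ast$, replacing the discrete row height $l\Delta x_2^n$ by $vt$, and replacing $vt$ by $vt_l^\ast$. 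Averaging the frozen expression over the row is a right-endpoint Riemann sum with Lipschitz integrand (note $\Delta t^n/T_{\mathrm{row}}^n=1/N_1^n$), so uniformly in $l$
\[
\frac1{T_{\mathrm{row}}^n}\int_{I_l^n}\lambda^n\,\d t
=\frac1a\int_0^a\den\bigl((s,v t_l^\ast)+w(t_l^\ast)\bigr)\,\d s+o(1)
=\lambda^\infty(t_l^\ast)+o(1),
\]
the $o(1)$ also absorbing the Riemann-sum error $\mathrm{Lip}(\den)\,\Delta x_1^n$.

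Finally I would assemble these estimates: by uniform continuity of $\varphi$,
\begin{multline*}
\int_0^T\lambda^n\varphi\,\d t
=\sum_l\int_{I_l^n}\lambda^n\varphi\,\d t
=\sum_l\varphi(t_l^\ast)\,T_{\mathrm{row}}^n\,\lambda^\infty(t_l^\ast)+o(1)\\
\to\int_0^T\varphi(t)\,\lambda^\infty(t)\,\d t\quad(n\to\infty),
\end{multline*}
the last convergence being that of a Riemann sum for the continuous integrand $\varphi\lambda^\infty$; combined with the uniform bound this yields $\lambda^n\stackrel*\rightharpoonup\lambda^\infty$ for the whole sequence. I expect the only delicate point to be the uniform-in-$l$ bookkeeping of the three approximation errors above: although there are $N_2^n\to\infty$ rows, each error enters through a \emph{row average} weighted by $T_{\mathrm{row}}^n$, so that the contributions amount to an average, not a sum, over rows — which is precisely what prevents per-row errors of size $o(1)$ from accumulating. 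Everything else is routine and relies solely on the Lipschitz bound for $\den$ and the uniform continuity of $w$ and $\varphi$ on $[0,T]$.
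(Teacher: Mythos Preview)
Your proposal is correct and follows essentially the same route as the paper: exploit the row-by-row structure, recognize the average over a single row as a Riemann sum for the horizontal integral $\frac1a\int_0^a\den((s,\cdot)+w(\cdot))\,\d s$, and control all remaining errors via $\mathrm{Lip}(\den)$ and the uniform continuity of $w$. The only cosmetic difference is that the paper tests against characteristic functions $\chi_{[\alpha,\beta]}$ (whose span is dense in $L^1$), whereas you test against $C([0,T])$; both choices work, and your explicit statement of the uniform $L^\infty$ bound makes the density argument slightly more transparent.
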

\begin{proof}
It suffices to show $\int_\alpha^\beta\lambda^n(t)\,\d t\to\int_\alpha^\beta\lambda^\infty(t)\,\d t=\frac1{av}\int_{[v\alpha,v\beta]\times[0,a]}\den(x+w(x_2/v))\,\d x$ as $n\to\infty$ for arbitrary $\alpha,\beta\in[0,T]$,
since linear combinations of characteristic functions are dense in $L^1((0,T))$.
Now let $L_u$ be the Lipschitz constant of $\den$
and let $\varepsilon:[0,\infty]\to[0,\infty)$ with $\lim_{\delta\to0}\varepsilon(\delta)=0$ such that $|w(t)-w(\hat t)|\leq\varepsilon(\delta)$ for all $|t-\hat t|\leq\delta$.
Then we have
\begin{align*}
&\int_\alpha^\beta\den(x^n(t)+w(t))\,\d t\\
&=\Delta t^n\sum_{l=\lfloor\frac{v\alpha}{\Delta x_2^n}\rfloor}^{\lfloor\frac{v\beta}{\Delta x_2^n}\rfloor}\sum_{m=1}^{N_1^n}\den(x_{ml}^n+w((lN_1^n+m)\Delta t^n))+O(N_1^n\Delta t^n)\\
&=\frac{\Delta t^n}{\Delta x_1^n}\sum_{l=\lfloor\frac{v\alpha}{\Delta x_2^n}\rfloor}^{\lfloor\frac{v\beta}{\Delta x_2^n}\rfloor}\int_0^a\den\big((s,l\Delta x_2^n)+w\big(\tfrac{l\Delta x_2^n}v\big)\big)\,\d s
+O(N_1^n\Delta t^n+L_u(\Delta x_1^n+\varepsilon(N_1^n\Delta t^n)))\\
&=\frac{\Delta t^n}{\Delta x_1^n\Delta x_2^n}\int_{\lfloor\frac{v\alpha}{\Delta x_2^n}\rfloor\Delta x_2^n}^{\lfloor\frac{v\beta}{\Delta x_2^n}\rfloor\Delta x_2^n}\int_0^a\den(x+w(x_2/v))\,\d x
+O\big(N_1^n\Delta t^n+L_u\big(\Delta x_1^n+\Delta x_2^n+\varepsilon\big(N_1^n\Delta t^n+\tfrac{\Delta x_2^n}v\big)\big)\big)\,,
\end{align*}
which converges against the desired limit.
\end{proof}
Thus, by Statement\,\ref{enm:elecCountWeakConv} of Lemma~\ref{thm:dataBehaviour} a subsequence of the electron detection events (still indexed by $n$) converges, $\elec^n\to\elec^\infty$,
where $\elec^\infty$ represents the signal belonging to $\lambda^\infty$.
Since all $\elec^n$ and $\elec^\infty$ are sums of Dirac measures, this actually implies $\elec^n=\elec^\infty$ for $n$ large enough.
Therefore, restricting to the subsequence and large enough $n$, the measured signal satisfies
\begin{equation*}
\mathbf\img^n=\alpha\mathbf\img_P^n+\mathbf\img_G^n\,,
\quad\text{ for }
\mathbf\img_P^n=(\elec^\infty([t_0^n,t_1^n)),\ldots,\elec^\infty([t_{N^n-1}^n,t_{N^n}^n)))\,,\quad
\mathbf\img_G^n=\mu+\sigma^n(\white(t_1^n),\ldots,\white(t_{N^n}^n))\,,
\end{equation*}
where, due to Statements\,\ref{enm:backgNoiseConv} and \ref{enm:cleanData} of Lemma~\ref{thm:dataBehaviour}, we have
\begin{equation*}
\sum_{i=1}^{N^n}\left|\tfrac{(\mathbf\img_G^n)_i-\mu}\alpha\right|
=\|\mathcal G_G^n\|_{\meas([0,T])}\to0\,,
\qquad
\sum_{i=1}^{N^n}\left|\left\llbracket\tfrac{(\mathbf\img^n)_i-\mu}\alpha\right\rrbracket-\tfrac{(\mathbf\img^n)_i-\mu}\alpha\right|
=\|\tilde{\mathcal G}^n\|_{\meas([0,T])}\to0\,.
\end{equation*}

To compare scan paths of different resolution we shall again extend the vector of estimated Brownian motions to a continuous function, which this time shall be defined on $\Omega$.
Thus, we set
\begin{equation*}
\mathcal E^n[\mathcal W,\mathbf p]
=\begin{cases}
E^n[\motion,\mathbf p]&\text{if }\mathcal W:\Omega\to\R^2\text{ is the piecewise bilinear interpolation of the}\\&\quad\text{ values }\motion=(w_1,\ldots,w_{N^n})\text{ at positions }(x_{11}^n,x_{21}^n,\ldots,x_{N_1^n1}^n,x_{12}^n,\ldots,x_{N_1^nN_2^n}^n)\\
\infty&\text{else.}
\end{cases}
\end{equation*}
As the following result shows, this variational model does not have a limit model in general.

Note that, for the sake of simplicity, we dropped the condition $\mathcal W(0)=w_0$, when defining $\mathcal W$ on $\Omega$ instead of $[0,T]$. This leads to some invariance that causes $\mathcal E^n$ to have infinitely many minimizers (indeed, a constant shift in $\mathcal W$ can be compensated for by the applying the same shift to the atom positions), but is not the reason why there is no limit model.
\begin{theorem}[Nonexistence of limit model]\label{thm:noGammaLimit}
Again, let $\mathbf p\mapsto\den[\mathbf p]$ be continuous from $A$ to $C^{0,1}(\Omega)$ with $\den[\mathbf p]>0$ bounded away from zero for any $\mathbf p$, and let $C^n$ be the sequence of constants from Theorem~\ref{thm:GammaLimit}.
Almost surely, with respect to weak convergence in $H^1(\Omega)^2$ and (strong) convergence in $\R^{\numFitParams}$, we have
\begin{align*}
\Gamma-\liminf_{n\to\infty}\mathcal E^n[\mathcal{W},\mathbf p]-C^n
=&\iota_A(\mathbf p)+\frac a{2\diff T}\int_0^a|\partial_{x_2}\mathcal W(a,x_2)|^2\,\d x_2+\iota_{\partial_{x_1}\mathcal W=0}(\mathcal W)\\
&+\frac T{|\Omega|}\int_\Omega\den[\mathbf p](x+\mathcal W(x))\,\d x-\int_0^a\log\max_{x\in[0,a]\times\{s\}}\den[\mathbf p](x+\mathcal W(x))\,\d\elec^\infty(\tfrac Tas)\,,\\
\Gamma-\limsup_{n\to\infty}\mathcal E^n[\mathcal{W},\mathbf p]-C^n
=&\iota_A(\mathbf p)+\frac a{2\diff T}\int_0^a|\partial_{x_2}\mathcal W(a,x_2)|^2\,\d x_2+\iota_{\partial_{x_1}\mathcal W=0}(\mathcal W)\\
&+\frac T{|\Omega|}\int_\Omega\den[\mathbf p](x+\mathcal W(x))\,\d x-\int_0^a\log\min_{x\in[0,a]\times\{s\}}\den[\mathbf p](x+\mathcal W(x))\,\d\elec^\infty(\tfrac Tas)\,,
\end{align*}
where $\iota_{\partial_{x_1}\mathcal W=0}(\mathcal W)=0$ if $\mathcal W(\cdot,x_2)$ is constant for almost all $x_2\in[0,a]$ and $\iota_{\partial_{x_1}\mathcal W=0}(\mathcal W)=\infty$ else
and where $\elec^\infty(\tfrac Ta\cdot)$ is to be interpreted as the pushforward (image measure) of $\elec^\infty$ under $s\mapsto\tfrac aTs$.
\end{theorem}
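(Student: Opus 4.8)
The plan is to repeat the $\liminf$/$\limsup$ scheme of the proof of Theorem~\ref{thm:GammaLimit}, monitoring the two new effects that destroy convergence: the within‑row part of the motion regularization now rescales so that only $x_1$‑independent $\mathcal W$ keep finite energy, and, since the scan path $x^n$ no longer converges, the pointwise part of the limiting Kullback--Leibler term samples $\den[\mathbf p]$ along a \emph{dense} set of horizontal positions rather than a fixed one. I work throughout along the (a.s.\ existing) subsequence on which $\elec^n=\elec^\infty$, where the signal is as stated just above the theorem; applying Lemma~\ref{thm:dataTerm} exactly as in Theorem~\ref{thm:GammaLimit} and absorbing the $-\llbracket\Delta t^n\mathcal G^n\rrbracket\log\Delta t^n$ terms into $C^n$, the data dissimilarity sum equals, up to an $O(N^n\sigma^n/\alpha)=o(1)$ error, $\sum_i\Delta t^n\den[\mathbf p^n](x_i^n+w_i)-\sum_j\log\den[\mathbf p^n](x_{i(j)}^n+w_{i(j)})$, where $i(j)$, with row/column indices $(m(j),l(j))$, is the dwell interval containing the $j$‑th atom $\tau_j$ of $\elec^\infty$, and $\den[\mathbf p^n]$ is uniformly bounded in $[\underline u,\overline u]$, $0<\underline u<\overline u$, with uniform Lipschitz constant. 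Since $\sum_i\Delta t^n\den[\mathbf p^n]\ge T\underline u$ and the logarithmic sum is bounded below, boundedness of $\mathcal E^n-C^n$ forces $R^n:=\tfrac1{2\diff\Delta t^n}\sum_i|w_i-w_{i-1}|^2$ to stay bounded, which I use below.

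\emph{Regularization.} Decompose $\sum_i|w_i-w_{i-1}|^2$ into the $(N_1^n-1)N_2^n$ within‑row increments $w_{m,l}-w_{m-1,l}$ and the $N_2^n-1$ between‑row increments $a_l:=w_{1,l}-w_{N_1^n,l-1}$. With $\Delta t^n=T/(N_1^nN_2^n)$, $\Delta x_i^n=a/N_i^n$ one has $\int_\Omega|\partial_{x_1}\mathcal W^n|^2\lesssim\tfrac{\Delta x_2^n}{\Delta x_1^n}\sum_{m,l}|w_{m,l}-w_{m-1,l}|^2\lesssim\tfrac{T}{(N_2^n)^2}R^n$, so a bounded‑energy weak limit $\mathcal W$ satisfies $\partial_{x_1}\mathcal W=0$, whereas $\partial_{x_1}\mathcal W\ne0$ makes the within‑row part blow up like $(N_2^n)^2$ (this is the factor $\iota_{\partial_{x_1}\mathcal W=0}$). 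Assume $\partial_{x_1}\mathcal W=0$. By Cauchy--Schwarz the within‑row part is $\ge\tfrac1{2\diff\Delta t^n(N_1^n-1)}\sum_l|w_{N_1^n,l}-w_{1,l}|^2$; here $\tfrac{\Delta x_2^n}{\Delta t^n(N_1^n-1)}\to\tfrac aT$, the trace $\mathcal W^n(a,\cdot)$ is exactly the piecewise‑affine interpolant of the right‑column values $w_{N_1^n,l}$, and $w_{N_1^n,l}-w_{N_1^n,l-1}=(w_{N_1^n,l}-w_{1,l})+a_l$ with $\sum_l|a_l|^2\le2\diff\Delta t^n\cdot(\text{between‑row part})\to0$, so this lower bound equals $\tfrac a{2\diff T}\int_0^a|\partial_{x_2}\mathcal W^n(a,x_2)|^2\,\d x_2$ up to a vanishing cross term. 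The traces $\mathcal W^n(a,\cdot)$ are bounded in $H^1([0,a])^2$ (again Cauchy--Schwarz row by row), hence $\mathcal W^n(a,\cdot)\rightharpoonup\mathcal W(a,\cdot)=\mathcal W$ there ($\mathcal W^n\to\mathcal W$ in $L^2(\Omega)$ by Rellich and the within‑row oscillation of $\mathcal W^n$ vanishes in $L^2$), so weak lower semicontinuity of the one‑dimensional Dirichlet integral gives the bound $\tfrac a{2\diff T}\int_0^a|\partial_{x_2}\mathcal W(a,x_2)|^2\,\d x_2$; the between‑row part is $\ge0$. As recovery sequence take $\mathbf p^n=\mathbf p$ and $w_{m,l}=\mathcal W((l-1)\Delta x_2^n)+\tfrac{m-1}{N_1^n-1}\bigl(\mathcal W(l\Delta x_2^n)-\mathcal W((l-1)\Delta x_2^n)\bigr)$, so $a_l\equiv0$ and the within‑row increment is equidistributed: then $\mathcal W^n\rightharpoonup\mathcal W$ in $H^1(\Omega)^2$, the between‑row part vanishes, and equality in Cauchy--Schwarz together with $H^1([0,a])$‑convergence of the affine interpolant of $\mathcal W$ yield within‑row part $\to\tfrac a{2\diff T}\int_0^a|\partial_{x_2}\mathcal W(a,\cdot)|^2$.

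\emph{Data term and assembly.} Writing the first sum as $\int_0^T\den[\mathbf p^n](x^n(t)+\motion^n(t))\,\d t$ with $\motion^n$ the step function taking the value $w_i$ on the $i$‑th interval, one has $\motion^n(t)\to\mathcal W((s,vt))$ uniformly ($v=a/T$; within‑row constancy plus the uniform trace convergence), so the row‑averaging argument that produced $\lambda^\infty$ just above the theorem applies with $\den$ replaced by $\den[\mathbf p^n]$ and $w$ by $\motion^n$, giving $\den[\mathbf p^n](x^n(\cdot)+\motion^n)\stackrel*\rightharpoonup\tfrac1a\int_0^a\den[\mathbf p]((s,vt)+\mathcal W((s,vt)))\,\d s$ in $L^\infty((0,T))$, hence the first sum $\to\tfrac T{|\Omega|}\int_\Omega\den[\mathbf p](x+\mathcal W(x))\,\d x$. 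For the second sum, $l(j)\Delta x_2^n\to\tfrac aT\tau_j$, $w_{i(j)}\to\mathcal W((\cdot,\tfrac aT\tau_j))$, and $m(j)\Delta x_1^n=a\,(N_2^n\tau_j/T\bmod1)+o(1)$; by Lemma~\ref{thm:mod1Dense} with $\hat N^n=N_2^n$ the vector $(N_2^n\tau_j/T\bmod1)_j$ is a.s.\ dense in $[0,1]^m$, so the cluster set of $-\sum_j\log\den[\mathbf p^n](x_{i(j)}^n+w_{i(j)})$ is $\{-\sum_j\log\den[\mathbf p]((a\theta_j,\tfrac aT\tau_j)+\mathcal W((a\theta_j,\tfrac aT\tau_j))):\theta\in[0,1]^m\}$, an interval whose lower endpoint is $-\int_0^a\log\max_{x\in[0,a]\times\{s\}}\den[\mathbf p](x+\mathcal W(x))\,\d\elec^\infty(\tfrac Tas)$ and whose upper endpoint has $\min$ in place of $\max$ — and this holds for \emph{every} admissible $\mathcal W^n$, the phases depending only on $n$. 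Consequently, for every $\mathcal W^n\rightharpoonup\mathcal W$, $\liminf_n(\mathcal E^n-C^n)\ge$ the claimed $\Gamma$‑$\liminf$ functional and $\limsup_n(\mathcal E^n-C^n)\ge$ the claimed $\Gamma$‑$\limsup$ functional (the first data sum and $\iota_A$ converge, so one only adds the $\liminf$ resp.\ $\limsup$ of the regularization and of the second sum, using super‑/sub‑additivity of $\liminf$/$\limsup$), while the single recovery sequence above attains both bounds because its regularization converges. Thus the $\Gamma$‑$\liminf$ and $\Gamma$‑$\limsup$ equal the stated functionals, which differ whenever $\den[\mathbf p]$ is non‑constant along some horizontal line, so no $\Gamma$‑limit exists.

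\emph{Expected main obstacle.} The hard part is the regularization step: the within‑row increments $w_{m,l}-w_{m-1,l}$ are of size $\sim(\Delta t^n)^{1/2}/N_1^n$ and hence invisible to the weak $H^1(\Omega)$‑topology, yet their accumulated effect must simultaneously force $\partial_{x_1}\mathcal W=0$ and reproduce the trace Dirichlet energy $\int_0^a|\partial_{x_2}\mathcal W(a,\cdot)|^2$ with \emph{precisely} the prefactor $\tfrac a{2\diff T}$; pinning down this constant relies on the two‑sided Cauchy--Schwarz bound, the identification of $\mathcal W^n(a,\cdot)$ with the interpolant of the right‑column values, and the fact that the wrap‑around contributions $a_l$ drop out in the limit. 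A secondary point to be spelled out is that Lemma~\ref{thm:mod1Dense} is invoked along the random subsequence on which $\elec^n=\elec^\infty$; since only density of the phases enters, the underlying nullset estimate is unaffected by this restriction.
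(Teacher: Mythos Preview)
Your proposal is correct and follows essentially the same strategy as the paper: expand the data term via Lemma~\ref{thm:dataTerm}, identify the Riemann sum as $\tfrac{T}{|\Omega|}\int_\Omega\den[\mathbf p](x+\mathcal W(x))\,\d x$, use Lemma~\ref{thm:mod1Dense} to show the column positions $c_j^n\Delta x_1^n$ are dense so that the logarithmic sum oscillates between the $\min$ and $\max$ values, and handle the regularization separately with a matching recovery sequence.

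The one noteworthy difference is in the regularization step. You split off the between-row increments $a_l=w_{1,l}-w_{N_1^n,l-1}$ first, apply Cauchy--Schwarz only within rows to get $\tfrac1{2\diff\Delta t^n(N_1^n-1)}\sum_l|w_{N_1^n,l}-w_{1,l}|^2$, and then have to reconstitute the full right-column increment $w_{N_1^n,l}-w_{N_1^n,l-1}=(w_{N_1^n,l}-w_{1,l})+a_l$ while arguing that the cross terms vanish because $\sum_l|a_l|^2=O(\Delta t^n)$. The paper instead applies Jensen's inequality to the \emph{full} chain of $N_1^n$ increments per row (including the wrap-around $a_r$), so the telescoping sum collapses directly to $|w_{N_1^n r}-w_{N_1^n(r-1)}|^2$ with no cross terms to control; this yields the trace Dirichlet integral and the prefactor $\tfrac{a}{2\diff T}$ in one line. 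Your route is a bit more laborious but arrives at the same bound, and your recovery sequence (linear interpolation of $\mathcal W$ across each row with $a_l\equiv0$) coincides with the paper's up to an inessential index shift. Your closing remark about invoking Lemma~\ref{thm:mod1Dense} along the random subsequence is apt; the paper does not spell this out either.
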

\begin{proof}
Without loss of generality let $\elec^\infty=\hat\elec_{\lambda^\infty}=\sum_{j=1}^m\delta_{\tau_j}$.
Furthermore, we only consider $n$ large enough, such that each measurement interval $[t_{i-1}^n,t_i^n)$ only contains a single electron detection $\tau_j$. Thus, the $j$\textsuperscript{th} electron arrives at time $\tau_j\in[0,T]$ and is detected
at the $c_j^n=\left\lceil\frac{\tau_j\,\mathrm{mod}\,(N_1^n\Delta t^n)}{\Delta t^n}\right\rceil$\textsuperscript{th} pixel
in the $r_j^n=\left\lceil\frac{\tau_j}{N_1^n\Delta t^n}\right\rceil$\textsuperscript{th} row,
\begin{equation*}
(\mathbf\img^n)_i=\begin{cases}(\mathbf\img_G^n)_i+\alpha&\text{if there exists a $j$ with }i=i_j^n=r_j^nN_1^n+c_j^n\\(\mathbf\img_G^n)_i&\text{else.}\end{cases}
\end{equation*}

Now let $\mathcal W^n\rightharpoonup\mathcal W$ in $H^1(\Omega)^2$ and $\mathbf p^n\to\mathbf p$.
Using Lemma~\ref{thm:dataTerm} and the constant $C^n$ from Theorem~\ref{thm:GammaLimit} depending on the data, $\alpha$, and $\sigma^n$, the data term is given by
\begin{align*}
D^n_\data
&=\sum_{i=1}^{N^n}\diss^n((\mathbf\img^n)_{i},\Delta t^n\den[\mathbf p^n](x^n_i+\mathcal W^n(x^n_i))\\
&=C^n+O(\tfrac{N^n\sigma^n}{\alpha})
-\sum_{i=1}^{N^n}\left\llbracket\frac{(\mathbf\img^n)_{i}-\mu}\alpha\right\rrbracket\log\den[\mathbf p^n](x^n_{i}+\mathcal W^n(x^n_{i}))
+\sum_{i=1}^{N^n}\Delta t^n\den[\mathbf p^n](x^n_{i}+\mathcal W^n(x^n_{i}))\,.
\end{align*}
The limit as $n\to\infty$ of the last sum is obtained as
\begin{equation*}
\sum_{i=1}^{N^n}\Delta t^n\den[\mathbf p^n](x^n_{i}+\mathcal W^n(x^n_{i}))
=\frac T{a^2}\sum_{c=1}^{N_1^n}\sum_{r=1}^{N_2^n}\Delta x_1^n\Delta x_2^n\den[\mathbf p^n](x^n_{cr}+\mathcal W^n(x^n_{cr}))
\to\frac T{|\Omega|}\int_\Omega\den[\mathbf p](x+\mathcal W(x))\,\d x
\end{equation*}
due to the uniform convergence of $\den[\mathbf p^n]$ and the convergence $\mathcal W^n\to\mathcal W$ in $L^1(\Omega)$ because of the compact embedding $H^1(\Omega)\hookrightarrow L^1(\Omega)$,
while the second sum can be written as
\begin{align*}
\sum_{i=1}^{N^n}\left\llbracket\frac{(\mathbf\img^n)_{i}-\mu}\alpha\right\rrbracket\log\den[\mathbf p^n](x^n_{i}+\mathcal W^n(x^n_{i}))
=O(\|\tilde{\mathcal G}^n\|_{\meas([0,T])})
+\sum_{i=1}^{N^n}\frac{(\mathbf\img^n)_{i}-\mu}\alpha\log\den[\mathbf p^n](x^n_{i}+\mathcal W^n(x^n_{i}))\\
=O(\|\tilde{\mathcal G}^n\|_{\meas([0,T])})
+\sum_{j=1}^{m}\log\den[\mathbf p^n](x^n_{i_j^n}+\mathcal W^n(x^n_{i_j^n}))
+\sum_{i=1}^{N^n}\frac{(\mathbf\img_G^n)_{i}-\mu}\alpha\log\den[\mathbf p^n](x^n_{i}+\mathcal W^n(x^n_{i}))\\
=O(\|\tilde{\mathcal G}^n\|_{\meas([0,T])}+\|\mathcal G_G^n\|_{\meas([0,T])})
+\sum_{j=1}^{m}\log\den[\mathbf p^n]((c_j^n\Delta x_1^n,r_j^n\Delta x_2^n)+\mathcal W^n(c_j^n\Delta x_1^n,r_j^n\Delta x_2^n))\,.
\end{align*}
Now almost surely (with respect to the distribution of $\hat\elec$ or equivalently $(\tau_1,\ldots,\tau_m)$)
\begin{align*}
\Delta x_2^n(r_1^n,\ldots,r_m^n)&\to\tfrac aT(\tau_1,\ldots,\tau_m)\text{ as }n\to\infty\,,\text{ while}\\
\Delta x_1^n(c_1^n,\ldots,c_m^n)&=\tfrac a{N_1^n}(\lceil N_1^n(N_2^n\tfrac{\tau_1}T \mod 1)\rceil,\ldots,\lceil N_1^n(N_2^n\tfrac{\tau_m}T \mod 1)\rceil)\,,
\,n\in\N\,,\text{ is dense on }[0,a]^m\,.
\end{align*}
The latter statement follows from Lemma~\ref{thm:mod1Dense} noting $\tfrac a{N_1^n}\lceil N_1^n(N_2^n\tfrac{\tau_j}T \mod 1)\rceil=a(N_2^n\tfrac{\tau_j}T \mod 1)+O(\frac a{N_1^n})$.
Therefore,
\begin{equation*}
\left\{\begin{array}{l}\displaystyle\limsup_{n\to\infty}\\\displaystyle\liminf_{n\to\infty}\end{array}\right\}
\sum_{j=1}^{m}\log\den[\mathbf p^n]\left({c_j^n\Delta x_1^n\choose r_j^n\Delta x_2^n}+\mathcal W^n{c_j^n\Delta x_1^n\choose r_j^n\Delta x_2^n}\right)
=\sum_{j=1}^{m}\left\{\begin{array}{l}\displaystyle\sup_{x_1\in[0,a]}\\\displaystyle\inf_{x_1\in[0,a]}\end{array}\right\}\log\den[\mathbf p]\left({x_1\choose\frac{a\tau_j}T}+\mathcal W{x_1\choose\frac{a\tau_j}T}\right)
\end{equation*}
so that with the notation $\elec^\infty(\frac Tas)=\sum_{j=1}^m\delta_{a\tau_j/T}(s)$ we obtain
\begin{align*}
\liminf_{n\to\infty}D^n_\data-C^n&=\frac T{|\Omega|}\int_\Omega\den[\mathbf p](x+\mathcal W(x))\,\d x-\int_0^a\log\max_{x\in[0,a]\times\{s\}}\den[\mathbf p](x+\mathcal W(x))\,\d\elec^\infty(\tfrac Tas)\,,\\
\limsup_{n\to\infty}D^n_\data-C^n&=\frac T{|\Omega|}\int_\Omega\den[\mathbf p](x+\mathcal W(x))\,\d x-\int_0^a\log\min_{x\in[0,a]\times\{s\}}\den[\mathbf p](x+\mathcal W(x))\,\d\elec^\infty(\tfrac Tas)\,.
\end{align*}

Setting $\mathcal W^n(x^n_{N_1^n0}):=\mathcal W^n(x^n_{N_1^n1})$, the remaining terms of $\mathcal E^n[\mathcal W^n,\mathbf p^n]$ can be expressed as
\begin{align*}
\tilde{\mathcal E}^n
&=\iota_A(\mathbf p^n)
+\frac{1}{2\diff\Delta t^n}\sum_{r=1}^{N_2^n}\Bigg[|\mathcal W^n(x^n_{1r})-\mathcal W^n(x^n_{N_1^n(r-1)})|^2
+\sum_{c=2}^{N_1^n}|\mathcal W^n(x^n_{cr})-\mathcal W^n(x^n_{(c-1)r})|^2\Bigg]\\
&=\iota_A(\mathbf p^n)
+\frac{1}{2\diff\Delta t^n}\sum_{r=1}^{N_2^n}\Bigg[|\mathcal W^n(x^n_{1r})-\mathcal W^n(x^n_{N_1^n(r-1)})|^2
+\Delta x_1^n\int_{\Delta x_1^n}^a|\partial_{x_1}\mathcal W^n(x_1,r\Delta x_2^n)|^2\,\d x_2\Bigg]
\end{align*}
so that $\liminf_{n\to\infty}\tilde{\mathcal E}^n=\infty$ unless
\begin{equation*}
\sup_{x_2\in[\Delta x_2^n,a]}\int_{\Delta x_1^n}^a|\partial_{x_1}\mathcal W^n(x_1,x_2)|^2\,\d x_2
=\sup_{r=1,\ldots,N_2^n}\int_{\Delta x_1^n}^a|\partial_{x_1}\mathcal W^n(x_1,r\Delta x_2^n)|^2\,\d x_2
\to0\,,
\end{equation*}
which due to $\mathcal W^n\rightharpoonup\mathcal W$ implies $\partial_{x_1}\mathcal W=0$ almost everywhere. Furthermore,
\begin{align*}
\tilde{\mathcal E}^n
&=\iota_A(\mathbf p^n)
+\frac{N_1^n}{2\diff\Delta t^n}\sum_{r=1}^{N_2^n}\Bigg[\frac{1}{N_1^n}\left|\mathcal W^n(x^n_{1r})-\mathcal W^n(x^n_{N_1^n(r-1)})\right|^2
+\sum_{c=2}^{N_1^n}\frac1{N_1^n}|\mathcal W^n(x^n_{cr})-\mathcal W^n(x^n_{(c-1)r})|^2\Bigg]\\
&\geq\iota_A(\mathbf p^n)
+\frac{N_1^n}{2\diff\Delta t^n}\sum_{r=1}^{N_2^n}\Bigg|\frac{\mathcal W^n(x^n_{1r})-\mathcal W^n(x^n_{N_1^n(r-1)})
+\sum_{c=2}^{N_1^n}\mathcal W^n(x^n_{cr})-\mathcal W^n(x^n_{(c-1)r})}{N_1^n}\Bigg|^2\\
&=\iota_A(\mathbf p^n)
+\frac{\Delta x_2^n}{2\diff\Delta t^nN_1^n}\sum_{r=1}^{N_2^n}\frac{|\mathcal W^n(x^n_{N_1^nr})-\mathcal W^n(x^n_{N_1^n(r-1)})|^2}{\Delta x_2^n}\\
&=\iota_A(\mathbf p^n)
+\frac{\Delta x_2^n}{2\diff\Delta t^nN_1^n}\int_0^a|\partial_{x_2}\mathcal W^n(a,x_2)|^2\,\d x_2\,,
\end{align*}
where we used Jensen's inequality.
Due to the weak lower semi-continuity of the $H^1$-norm, we thus obtain $\liminf_{n\to\infty}\tilde{\mathcal E}^n\geq\iota_A(\mathbf p)+\frac{a}{2\diff T}\int_0^a|\partial_{x_2}\mathcal W(a,x_2)|^2\,\d x_2$.
In addition, this limit can even be achieved by choosing
\begin{equation*}
\mathbf p^n=\mathbf p\,,\quad
\mathcal W^n(x^n_{cr})=\mathcal W(x^n_{N_1^n(r-1)})+\frac c{N_1^n}[\mathcal W(x^n_{N_1^nr})-\mathcal W(x^n_{N_1^n(r-1)})]\,,\quad
c=1,\ldots,N_1^n,\,r=1,\ldots,N_2^n,
\end{equation*}
so that in above Jensen's inequality we actually have an equality.
Thus the non-data terms coincide in the $\Gamma-\liminf$ and the $\Gamma-\limsup$.
\end{proof}

The disparity between $\Gamma-\liminf$ and $\Gamma-\limsup$ implies that the functional does not have a $\Gamma$-limit as $n\to\infty$
and thus the density and motion reconstruction problems do not converge.
This does not necessarily imply that our thought experiment with faster and faster row scanning is completely unreasonable or unphysical;
instead a refinement of our stochastic coupling might be necessary to make sense of the limit.
Intuitively, in the limit each row is swept out in zero time so that the electron counts belonging to a row cannot be ascribed a particular horizontal position along the row.
In other words, $\elec^\infty$ can only capture information on the vertical, but not on the horizontal location of the detected electrons.
Resolving such additional information requires the use of a different, more involved stochastic coupling.
Instead of pursuing that route, we shall propose a slight model change below in which a measurement only reflects the average of the material density along each row.

\begin{remark}[Tomographic scanning models]\label{rem:tomography}
One may adapt our reconstruction model and in particular the data term by only estimating the average material density per row. The new energy then reads
\begin{equation*}
E^n[\motion,\mathbf p]
=\sum_{r=1}^{N_2^n}\diss^n\left(\sum_{c=1}^{N_1^n}(\mathbf\img^n)_{(r-1)N_1^n+c},\sum_{c=1}^{N_1^n}\Delta t^n\den[\mathbf p](x_{cr}^n+w_{(r-1)N_1^n+c})\right)+\frac1{2\diff}\sum_{i=1}^{N^n}\frac{|w_i-w_{i-1}|^2}{\Delta t^n}+\iota_A(\mathbf p)\,,
\end{equation*}
and for its continuous version
\begin{equation*}
\mathcal E^n[\mathcal W,\mathbf p]
=\begin{cases}
E^n[\motion,\mathbf p]&\text{if }\mathcal W:\Omega\to\R^2\text{ is the piecewise bilinear interpolation of the}\\&\quad\text{ values }\motion=(w_1,\ldots,w_{N^n})\text{ at positions }(x_{11}^n,x_{21}^n,\ldots,x_{N_1^n1}^n,x_{12}^n,\ldots,x_{N_1^nN_2^n}^n)\\
\infty&\text{else}
\end{cases}
\end{equation*}
one expects to obtain by a similar proof as above that $\Gamma-\lim_{n\to\infty}\mathcal E^n-C^n=\mathcal E^\infty$ for
\begin{multline*}
\mathcal E^\infty[\mathcal W,\mathbf p]
=\diss_{\mathrm{KL}}\left(\elec^\infty,t\mapsto\frac1a\int_0^a\den[\mathbf p]((x_1,vt)+\mathcal W(x_1,vt))\,\d x_1\right)\\
+\frac a{2\diff T}\int_0^a|\partial_{x_2}\mathcal W(a,x_2)|^2\,\d x_2+\iota_{\partial_{x_1}\mathcal W=0}(\mathcal W)
+\iota_A(\mathbf p)\,.
\end{multline*}
From the physical viewpoint, the limit problem would only allow to extract information of the average material density $\den$ along each horizontal line.
However, if such information were obtained for many different rotations of the sample,
then this would correspond to sampling the Radon transform of the material density at a number of angles.
The sought material density could thus be reconstructed by standard tomographic techniques as for instance used in computerized tomography.
\end{remark}

\section{An auxiliary convex model for atom identification}
\label{sec:ConvAtomIdent}
To solve our variational problem numerically, we require the number of atoms and their approximate positions as initialization.
To this end, we reduce our complex model to a smaller auxiliary optimization problem, which is desigend to be convex so that the global minimizer can be found.
This identification of the global optimum is important, since otherwise the non-convex main model would easily get stuck in local minima, compromising the accuracy of our material density reconstruction.

\subsection{Lifting atom positions in measure space}
To specify an appropriate model, note that it does not have to accurately describe all parts of the image acquisition, but rather should be as simple as possible.
Thus, we shall here assume that a single atom at position $\centerPos$ is described by a response function
\begin{equation*}
\bumpCoeff b(\cdot-\centerPos)\,,\quad
b\text{ fixed, \eg }
b(x)=\exp\left(\frac{-|x|^2}{2\bumpWidth^2}\right)\,,
\text{ and }c>0.
\end{equation*}
In contrast to before, each atom now just is represented by two parameters, position $\centerPos$ and height $\bumpCoeff$, while the atom width $\bumpWidth$ is fixed a priori by the user to a reasonable value.
Representing the distribution of atoms with positions $\centerPos_\atomIndex$ and heights $\bumpCoeff_\atomIndex$ via a sum of weighted Dirac measures,
\begin{equation*}
h=\sum_{\atomIndex=1}^\numAtoms\bumpCoeff_\atomIndex\delta_{\centerPos_\atomIndex}\,,
\end{equation*}
the corresponding image or material density can be expressed as the convolution
\begin{equation*}
\den=h*b\,.
\end{equation*}
We now introduce two further simplifications compared to our main model from Section\,\ref{sec:model}.
\begin{itemize}
\item
We will ignore the Brownian motion leading to misplaced pixels.
Instead, we interpret the slightly changed pixel intensities as noise rather than pixel displacements.
Since the smooth material density $\den$ can locally be approximated by an affine function,
the change in pixel intensity due to Brownian motion-induced pixel displacement becomes visible as Gaussian noise
(whose variance actually depends on the local slope of $\den$, but will in the following be assumed fixed for simplicity).
The additional mixed Poisson-Gaussian noise inherent in the signal detection (described in Section\,\ref{sec:noiseModel}) is typically of a much smaller size and thus will be ignored.
Therefore we will use the for Gaussian noise appropriate quadratic data term
\begin{equation*}
\sum_{i=1}^N\diss(\img_i,\den(x_i))\quad\text{ with }\diss(\img,z)=|z-\img|^2\,.
\end{equation*}
\item
Instead of optimizing over the atom positions $\centerPos_\atomIndex$, which would be a highly non-convex optimization,
we optimize directly for the measure $h$. In other words, we lift the vector of unknowns to a measure, thereby allowing a convex optimization.
Since this way the discrete nature of the atom positions is no longer strictly enforced, we have to add a regularization to our model that promotes spatial sparseness of the measure $h$.
$L^1$-type norms are widely used for this purpose, so we shall additionally penalize the total mass of $h$.
\end{itemize}
Summarizing, we shall solve the variational model
\begin{equation*}
\min_{h\in\measp(\Omega)}\mathcal F(h)\quad
\text{ for }\mathcal F(h)=\sum_{i=1}^N|(h*b)(x_i)-\img_i|^2+\eta\|h\|_{\measp(\Omega)}\,,
\end{equation*}
where $\measp(\Omega)$ denotes the set of nonnegative Radon measures and $\eta$ is some positive weight.
It is straightforward to prove the existence of a minimizer $h$ via the direct method of the calculus of variations.

\subsection{Extracting atoms from the lifting}
The resulting minimizer $h$  will only approximately represent a linear combination of Dirac measures
so that some postprocessing is required to extract the atom positions and heights.
In detail, we identify all connected components $C_\atomIndex\subset\Omega$, $\atomIndex=1,\ldots,\numAtoms$, of the support of $h$ (which is readily done with a computational complexity proportional to the number of discretization points)
and define the atom positions as
\begin{equation*}
\centerPos_\atomIndex=\int_{C_\atomIndex}x\,\d h(x)\bigg/\int_{C_\atomIndex}\,\d h(x)\,,\quad\atomIndex=1,\ldots,\numAtoms\,.
\end{equation*}
Furthermore, we set the atom heights to
\begin{equation*}
\bumpCoeff_\atomIndex=\int_{C_\atomIndex}\,\d h(x)+\frac\eta2\bigg/\sum_{i=1}^Nb(\centerPos_\atomIndex-x_i)^2\,,\quad\atomIndex=1,\ldots,\numAtoms\,.
\end{equation*}
The latter equation is motivated by the simple fact that for a single atom ground truth, $\img_i=\bumpCoeff b(\centerPos-x_i)$,
if atom position $\centerPos$ is known, then $h=(\bumpCoeff-\frac\eta2/\sum_{i=1}^Nb(\centerPos-x_i)^2)\delta_{\centerPos}$ minimizes the energy $\mathcal F$ among all multiples of $\delta_{\centerPos}$.
Finally, in order to eliminate atoms that were just introduced by the minimization in order to reproduce background variations of the image, we simply remove all atoms with height below a manually specified threshold.

\subsection{Numerical optimization by a semi-smooth Newton method}
According to the first order optimality conditions for minimizing the convex functional $\mathcal F$, the subdifferential $\partial\mathcal F(h)$ must contain $0$.
Denoting by $\iota_{\measp(\Omega)}$ the indicator function of $\measp(\Omega)$, we thus obtain
\begin{equation*}
0\in\sum_{i=1}^N2((h*b)(x_i)-\img_i)b(x_i-\cdot)+\eta+\partial\iota_{\measp(\Omega)}(h)
\end{equation*}
or equivalently
\begin{equation*}
-\beta\in\partial\iota_{\measp(\Omega)}(h)\quad\text{ for }\beta(x)=\sum_{i=1}^N2h((h*b)(x_i)-\img_i)b(x_i-x)+\eta
\end{equation*}
and thus (denoting by $\langle\cdot,\cdot\rangle$ the dual pairing between $\meas(\Omega)$ and its dual, and interpreting $\beta$ as element of $\meas(\Omega)^\prime$ via $\langle\beta,\tilde h\rangle=\int\beta\mathrm{d}\tilde h$)
\begin{equation*}
\langle\beta,\tilde h-h\rangle\geq0\qquad\text{for all }\tilde h\in\measp(\Omega)\,.
\end{equation*}
Since $\beta$ is smooth, this is equivalent to $\beta\geq0$ and $\beta=0$ on $\spt(h)$. Thus, for any $\gamma>0$ we have
\begin{equation*}
0=h-\max(0,h-\gamma\beta)
=h-\max\left(0,h-\gamma\left[2\sum_{i=1}^N((h*b)(x_i)-\img_i)b(x_i-\cdot)+\eta\right]\right)\,,
\end{equation*}
to be solved for the measure $h$, where $\max(0,\cdot)$ denotes the positive part of a measure according to the Hahn decomposition theorem.

The aim is to solve the above equation via Newton's method, however,
the equation is well-known to lack the necessary semi-smoothness on the space $\meas(\Omega)$ of Radon measures (see \eg \cite[Lem.\,2.7]{HiPiUlUl09} for the corresponding argument in the $L^q$-setting).
Note, though, that we may very coarsly discretize the measure $h$ as we do not require high accuracy for the solution of the auxiliary problem.
In fact, we only have to resolve approximately half an atom width in order to still be able to identify all atoms.
Thus, we choose a sqaure grid on $\Omega$ of grid width $\Delta x$ and define $\chi_{kl}$ to be the characteristic function on the square in the $k$\textsuperscript{th} row and $l$\textsuperscript{th} column.
Let
\begin{equation*}
h=\sum_{k,l}h_{kl}\xi_{kl}\quad\text{ for }\xi_{kl}=\frac{\chi_{kl}}{\Delta x^2}
\end{equation*}
and denote the vector of coefficients $h_{kl}$ by $\mathbf h$. Again deriving the optimality conditions for $\mathbf h$ we arrive at
\begin{equation*}
0=h_{kl}-\max\left(0,h_{kl}-\gamma\left[2\sum_{i=1}^N\left(\sum_{m,n}h_{mn}\int_{\R^2}\xi_{mn}(x)b(x_i-x)\,\d x-\img_i\right)\int_{\R^2}\xi_{kl}(x)b(x_i-x)\,\d x+\eta\right]\right)=:F(\mathbf h)_{kl}
\end{equation*}
for all $k,l$.
This is now solved for $\mathbf h$ using a semi-smooth Newton method,
where the Newton equation
\begin{equation*}
DF(\mathbf h^{\mathrm{old}})
(\mathbf h^{\mathrm{old}}-\mathbf h^{\mathrm{new}})
=F(\mathbf h^{\mathrm{old}})
\end{equation*}
in each iteration is solved by a few steps of the GMRES method.
Here, the generalized differential of $F$ is given by
\begin{equation*}
D(F(\mathbf h)_{kl})_{mn}
=\begin{cases}
\delta_{kl,mn}&\text{if }F(\mathbf h)_{kl}=h_{kl}\\
2\gamma\sum_{i=1}^N\int_{\R^2}\xi_{mn}(x)b(x_i-x)\,\d x\int_{\R^2}\xi_{kl}(x)b(x_i-x)\,\d x&\text{else.}
\end{cases}
\end{equation*}

Note that in case of a scanning path with scan positions at points $x_{ij}$ of the same rectangular grid, the necessary computations can efficiently be performed using the fast Fourier transform (FFT).
Indeed, let
\begin{equation*}
b_{-i,-j}=\int_{\R^2}\xi_{ij}(x)b(x_{00}-x)\,\d x\quad\text{ for all }i,j\in\Z\,,
\quad\mathbf b=(b_{ij})_{i,j\in\Z}\,,
\end{equation*}
where $x_{00}$ uses the canonical of our grid indexing, \ie $x_{00}:=x_{11}-(\Delta x,\Delta x)$.
Then,  we have
\begin{equation*}
\sum_{m,n}h_{mn}\int_{\R^2}\xi_{mn}(x)b(x_{ij}-x)\,\d x
=\sum_{m,n}h_{mn}b_{i-m,j-n}
=(\mathbf h*\mathbf b)_{ij}\,,
\end{equation*}
and this discrete convolution can be computed efficiently using FFT.
Furthermore, $F$ becomes
\begin{equation*}
F(\mathbf h)_{kl}
=h_{kl}-\max\left(0,h_{kl}-\gamma\left[2(\left(\mathbf h*\mathbf b-\mathbf\img\right)*\bar{\mathbf b})_{kl}+\eta\right]\right)
\quad\text{ for }\mathbf\img=(\img_{ij})_{i,j\in\N}\,,\quad\bar{\mathbf b}_{ij}=(b_{-i,-j})_{i,j\in\N}\,,
\end{equation*}
with
\begin{equation*}
D(F(\mathbf h)_{kl})_{mn}
=\begin{cases}
\delta_{kl,mn}&\text{if }F(\mathbf h)_{kl}=h_{kl}\\
2\gamma(\mathbf b*\bar{\mathbf b})_{k-m,l-n}&\text{else.}
\end{cases}
\end{equation*}

In practice, the method robustly identifies all atoms within an image with a computation time far below that of the full model (tens of seconds).
An example is provided in Figure~\ref{fig:atomInitialization}, where the atom locations are extracted from a STEM image of GaN.

\begin{figure}
\setlength\unitlength{.25\linewidth}%
\fboxsep0pt%
\centering\hfill
\includegraphics[width=\unitlength,type=png,ext=.png,read=.png]{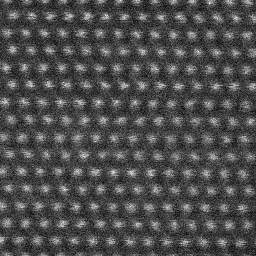}\hfill
\framebox{\includegraphics[width=\unitlength]{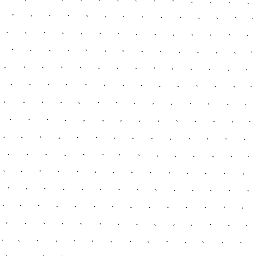}}\hfill
\includegraphics[width=\unitlength]{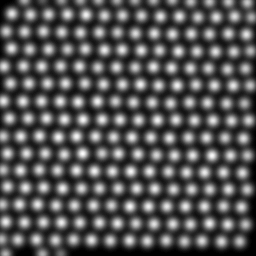}\hfill\hfill
\caption{From left to right: STEM image of GaN (courtesy of Paul~M.\ Voyles), computed atom distribution $h$, resulting initial guess for the material density $\den$.}
\label{fig:atomInitialization}
\end{figure}

\section{Numerical implementation}\label{sec:numerics}

While in the previous sections, for notational simplicity we have mainly considered the situation of a single input image $\mathbf\img$, we shall from now on consider the practically more relevant case of using the model from Section~\ref{sec:scanMode} with multiple input measurements $\mathbf\img^1,\ldots,\mathbf\img^K$.

\subsection{Necessary preprocessing of input images}
In case of multiple input measurements, the images $\mathbf\img^1,\ldots,\mathbf\img^K$ typically do not all show the same region of the material sample.
This needs to be corrected for via a data preprocessing step.
Applying the approach from Section~\ref{sec:ConvAtomIdent} on $\inputImage^1$, we get an initial guess $\imageParamVec^\text{ini}=(\centerPos_1^\text{ini},\ldots,\centerPos_\numAtoms^\text{ini},\bumpCoeff_1^\text{ini}\ldots,\bumpCoeff_\numAtoms^\text{ini},\bumpWidth^\text{ini},o^\text{ini})$ for the parameters of $\reconstImage$. Then, the input images $\inputImage^2,\ldots,\inputImage^K$ are aligned with $\inputImage^1$ using standard image alignment techniques, thus bringing all images into the coordinate system of $\inputImage^1$. In our experiments we do so by determining the optimal translation (as integer pixel shift) using the series registration strategy from \cite{BeBiBl13} but with a deformation model that just allows translations. Since the shifted images are not defined at those pixels that have no correspondence in their original coordinate system, the aligned images are cropped to their common support. For the sake of simplicity, the aligned, cropped images are again denoted by $\inputImage^1,\ldots,\inputImage^K$ and their size by $\numPixelsX\times\numPixelsY$. The initial center positions $\centerPos_1^\text{ini},\ldots,\centerPos_\numAtoms^\text{ini}$ are adjusted accordingly.
Furthermore, we remove all atoms from $\imageParamVec^\text{ini}$ which lie far enough outside the new, cropped image domain $\Omega$ such that they at most contribute a value of $10^{-8}$ to $\den[\imageParamVec^\text{ini}]$ inside $\Omega$ (those are exactly the atoms with center further away from $\Omega$ than
$\sqrt { -2(\bumpWidth^\text{ini})^2\log \left(\frac{ 10^{-8}}{ ||\inputImage^1||_\infty} \right)  }$). In different words, we keep atoms at positions that are very close to but slightly outside of the domain.
This way, we avoid that atoms inside $\Omega$ try to compensate for atoms outside $\Omega$ that are neglected (since they are not visible in $\Omega$) but still account for a few deflected electron counts measured inside $\Omega$.

\subsection{Additional, heuristic model modifications}
Since the atoms in $\imageParamVec^\text{ini}$ but outside $\Omega$ are not visible in the cropped images, we will penalize them to stay close to their initially estimated position using the penalty
\[
P[\imageParamVec]=\frac{\nu_\text{pen}}2\sum_{\centerPos_\atomIndex^\text{ini}\notin\Omega} \left[(\bumpCoeff_\atomIndex-\bumpCoeff^\text{ini}_\atomIndex)^2+|\centerPos_\atomIndex-\centerPos^\text{ini}_\atomIndex|^2\right]\,,
\]
where $\nu_\text{pen}>0$ is a constant weighting factor.

Using the STEM rastering pattern~\eref{eq:STEMIndexing} and defining the initial accumulated Brownian motion as zero, the Brownian motion term in \eref{eqn:objectiveMultipleInputs} for $\shiftVec\in(\mathbb{R}^2)^{\numPixelsX\times\numPixelsY}$ becomes
\begin{equation*}
R_1[\shiftVec]=\frac1{2\diff}\left[\frac{|\shift_{11}|^2}{\Delta t}+\sum_{j=2}^\numPixelsY\left(\frac{|\shift_{1j}-\shift_{\numPixelsX(j-1)}|^2}{\Delta T}+\sum_{i=2}^\numPixelsX\frac{|\shift_{ij}-\shift_{(i-1)j}|^2}{\Delta t}\right)\right]\,.
\end{equation*}
During the numerical experiments, we found that penalizing the squared norm of the random motion $\shift_{ij}$ with a small weight noticeably improves the convergence speed without influencing the results much. The reason lies in the slightly stronger local convexity of the resulting energy functional. Indeed, without this penalization a constant shift of all $\shift_{ij}$ by some $z\in\R^2$ could be compensated for via shifting all atoms by $z$ as well, indicating a lack of local positive definiteness of the energy Hessian. Thus, we add the regularizer
\begin{equation*}
R_2[\shiftVec]=\frac1{2}\sum_{i=1}^\numPixelsX\sum_{j=1}^\numPixelsY\left(\nu_\text{hor}|(\shift_{ij})_1|^2+\nu_\text{vert}|(\shift_{ij})_2|^2\right),
\end{equation*}
where $(\shift_{ij})_1$ and $(\shift_{ij})_2$ denote the horizontal and vertical component of $\shift_{ij}\in\mathbb{R}^2$ and $\nu_\text{hor},\nu_\text{vert}>0$ are constant weighting factors. Depending on the input data, it may even be beneficial to use a noticeably larger value for one of the weights. For instance, the data shown in Figure~\ref{fig:STEMAndSimulImage} has numerous pixel rows that contain almost no signal. Hence, the energy landscape is very flat with respect to the vertical shift of these rows so that during optimization this vertical shift often ends up fitting the noise (note that this is not a problem of our underlying model, but rather a general problem of MAP estimates in regions of relatively constant probability density). This can be prevented by an increased value of $\nu_\text{vert}$.

In summary, for numerical experiments we employ the following heuristically improved version of \eref{eqn:objectiveMultipleInputs},
\[
E^K[\shiftVec^1,\ldots,\shiftVec^K,\imageParamVec]
=\sum_{k=1}^K\bigg[
\sum_{i=1}^\numPixelsX\sum_{j=1}^\numPixelsY d(\reconstImage(x_{ij}+\shift_{ij}^k),(\inputImage^k)_{ij})+R_1[\shiftVec^k]+R_2[\shiftVec^k]
\bigg]+P[\imageParamVec]\,.
\]

\subsection{Numerical optimization}
Note that $P[\imageParamVec]$, $R_1[\shiftVec]$ and $R_2[\shiftVec]$ are quadratic in their arguments, so the first and second derivatives can be computed easily. Using $z=x_{ij}+\shift_{ij}^k$ to shorten the notation, and calling the data term $D$, the derivatives of $D$ are as follows:
\begin{align*}
\nabla_{\imageParamVec}D[\shiftVec^1,\ldots,\shiftVec^K,\imageParamVec]
={}&\sum_{k=1}^K
\sum_{i=1}^\numPixelsX\sum_{j=1}^\numPixelsY \partial_1d(\reconstImage(z),(\inputImage^k)_{ij})\nabla_{\imageParamVec}\reconstImage(z)\\[1ex]
\nabla_{\shift_{ij}^k}D[\shiftVec^1,\ldots,\shiftVec^K,\imageParamVec]
={}&\partial_1d(\reconstImage(z),(\inputImage^k)_{ij})\nabla_x\reconstImage(z)\\[1ex]
\nabla^2_{\shift_{ij}^k}D[\shiftVec^1,\ldots,\shiftVec^K,\imageParamVec]
={}&\partial^2_1d(\reconstImage(z),(\inputImage^k)_{ij})\nabla_x\reconstImage(z)\otimes\nabla_x\reconstImage(z)\\
&+\partial_1d(\reconstImage(z),(\inputImage^k)_{ij})\nabla^2_x\reconstImage(z)\\[1ex]
\nabla_{\shift_{ij}^k}\nabla_{\imageParamVec}D[\shiftVec^1,\ldots,\shiftVec^K,\imageParamVec]
={}&\partial^2_1d(\reconstImage(z),(\inputImage^k)_{ij})\nabla_x\reconstImage(z)\otimes\nabla_{\imageParamVec}\reconstImage(z)\\
&+\partial_1d(\reconstImage(z),(\inputImage^k)_{ij})\nabla_x\nabla_{\imageParamVec}\reconstImage(z)\\[1ex]
\nabla^2_{\imageParamVec}D[\shiftVec^1,\ldots,\shiftVec^K,\imageParamVec]
={}&\sum_{k=1}^K
\sum_{i=1}^\numPixelsX\sum_{j=1}^\numPixelsY \bigg[\partial^2_1d(\reconstImage(z),(\inputImage^k)_{ij})\nabla_{\imageParamVec}\reconstImage(z)\otimes \nabla_{\imageParamVec}\reconstImage(z)\\
&\qquad\qquad\qquad+\partial_1d(\reconstImage(z),(\inputImage^k)_{ij})\nabla^2_{\imageParamVec}\reconstImage(z)\bigg]
\end{align*}
The functional $E^K$ is minimized using the trust region Newton method from~\cite{Co00}.
In particular, for the trust region subproblem, we chose to implement the algorithm proposed in \cite[Algorithm 7.3.4]{Co00},
making use of a Cholesky factorization of the energy Hessian and an eigendirection-based approach to bypass saddle points, where the Cholesky factorization is performed using the CHOLMOD package from Davis\,et\,al.\,\cite{ChDaHa09}.
To start this method with an initial guess $\imageParamVec$ slightly improved over $\imageParamVec^\text{ini}$, we proceed as follows.

\begin{itemize}
\item Find $\imageParamVec$ by minimizing
$
\sum_{i=1}^\numPixelsX\sum_{j=1}^\numPixelsY d(\reconstImage(x_{ij}),(\inputImage^1)_{ij})+P[\imageParamVec]
$
with respect to $\imageParamVec$ using $\imageParamVec^\text{ini}$ as initial value, but keeping the shared bump width $\bumpWidth$ fixed.
To this end we use the computationally rather cheap BFGS quasi-Newton method.
\item Further refine $\imageParamVec$ by minimizing
$
\sum_{k=1}^K\bigg[
\sum_{i=1}^\numPixelsX\sum_{j=1}^\numPixelsY d(\reconstImage(x_{ij}),(\inputImage^k)_{ij})
\bigg]+P[\imageParamVec]
$
with respect to $\imageParamVec$ using the trust region Newton method.
\item Improve the initial guess $\shiftVec^1=\ldots=\shiftVec^K=0$ and refine $\imageParamVec$ by minimizing $E^K[\shiftVec^1,\ldots,\shiftVec^K,\imageParamVec]$ using the trust region Newton method while enforcing that $\shiftVec^k$ are constant in each pixel row.
\end{itemize}
Finally, we can minimize the whole functional $E^K$ using the trust region Newton method starting from the initial guess obtained above.

\section{Experiments on synthetic and real data}\label{sec:results}
All numerical experiments shown in this sections were conducted with the following reconstruction parameters:
$\diff=0.1$, $\Delta t=1/\max(\numPixelsX-1,\numPixelsY-1)$, $\Delta T=1000\Delta t$, $\nu_\text{pen}=0.05$, $\nu_\text{hor}=0.1$ and $\nu_\text{vert}=10$. Note that the relatively large value of $\Delta T$ is chosen to compensate for the so-called flyback error: when the electron probe is instructed to jump back from the end of one scan line to the beginning of the next line, the probe will not be positioned exactly where it is supposed to be. The position error from this effect is larger than the error caused by the Brownian motion during the time between ending one line and beginning the scan in the next line. However, it can be modeled as Brownian motion during a larger, ficticious time interval $\Delta T$ for which reason we did not explicitly include the flyback error in our forward model.

\subsection{Synthetic data generation and experiments}
To test the effectiveness of the proposed method, we first apply it to synthetic data for which the ground truth is known.
To this end, we consider a ground truth image $\den$ with Gaussian bump functions of height $45$ and standard deviation of $3$\,pixel units, arranged in a hexagonal lattice of $19.37$\,pixel units lattice spacing.
We subsequently discretize the image into $256\times256$ pixels, however, the position of the $i$\textsuperscript{th} pixel (counting row-wise) is displaced by a vector $w_i=\sum_{j=1}^i\Delta_j+\sum_{j=1}^{i/256}\hat\Delta_j$.
Here, $\Delta_1,\Delta_2,\ldots$ are random numbers drawn from a normal distribution with mean 0 and standard deviation of 0.05\,pixel units,
while $\hat\Delta_1,\hat\Delta_2,\ldots$ represent the Brownian motion happening between two pixel rows and thus are drawn from a normal distribution with mean 0 and a larger standard deviation of 1\,pixel unit.
Finally, a constant background of $o=40$ is added, and the image is corrupted by Poisson noise.
In this way, a series of 128 randomly corrupted images $\mathbf\img^1,\ldots,\mathbf\img^{128}$ is generated.
Figure~\ref{fig:STEMAndSimulImage} displays the first synthetic image $\mathbf\img^1$ next to a real image acquired by STEM, showing good qualitative agreement.

\begin{figure}
\centering
\begin{tabular}{cc}
\includegraphics[width=0.45\linewidth,type=png,ext=.png,read=.png]{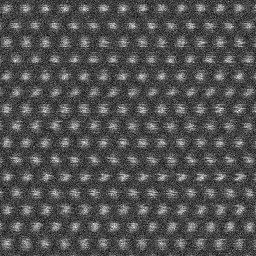}&
\includegraphics[width=0.45\linewidth,type=png,ext=.png,read=.png]{figures/3_8_12-GaN-Area2_20.5_HAADF_1000_counts}
\end{tabular}
\caption{Synthetic image created via our proposed image formation model (left) and an experimental STEM image of GaN (right, courtesy of Paul~M.\ Voyles).}
\label{fig:STEMAndSimulImage}
\end{figure}

Figure~\ref{fig:syntheticData} shows the reconstruction results of our algorithm for four synthetic input images (left column);
in addition to $\reconstImage$ (bottom row), we display a color coding of the computed random pixel displacements $\motion^k$ (middle column) as well as those displacements applied to $\reconstImage$ (right column), which should represent the input images without the Poisson noise in each pixel.
While the detected random displacement may vary considerably between consecutive pixel rows, it seems to be relatively constant along each row.
However, the plot of the displacement along a few selected pixel rows in Figure~\ref{fig:syntheticShift} reveals a Brownian-motion-like variation along a row as well.

\begin{figure}
\centering
\setlength{\unitlength}{.72\linewidth}
\begin{tabular}{rccc}
&
$\inputImage^k$&
$\shiftVec^k$&
${\small\reconstImage(x_{ij}+\shift_{ij}^k)}$
\\[1ex]
\raisebox{.145\unitlength}{$k=1$}&
\includegraphics[width=0.3\unitlength]{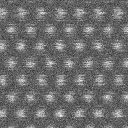}&
\includegraphics[width=0.3\unitlength]{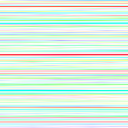}&
\includegraphics[width=0.3\unitlength]{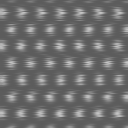}\\
\raisebox{.145\unitlength}{$k=2$}&
\includegraphics[width=0.3\unitlength]{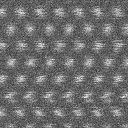}&
\includegraphics[width=0.3\unitlength]{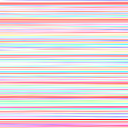}&
\includegraphics[width=0.3\unitlength]{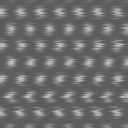}\\
\raisebox{.145\unitlength}{$k=3$}&
\includegraphics[width=0.3\unitlength]{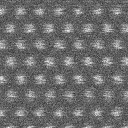}&
\includegraphics[width=0.3\unitlength]{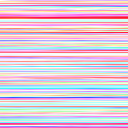}&
\includegraphics[width=0.3\unitlength]{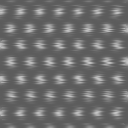}\\
\raisebox{.145\unitlength}{$k=4$}&
\includegraphics[width=0.3\unitlength]{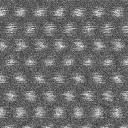}&
\includegraphics[width=0.3\unitlength]{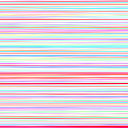}&
\includegraphics[width=0.3\unitlength]{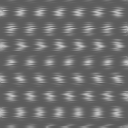}\\
\hline
$\vphantom{\displaystyle\sum}$&
$\reconstImage-o$&
&$\reconstImage$\\
&
\includegraphics[width=0.3\unitlength]{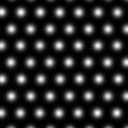} &
\begin{picture}(.3,.3)\put(.1,.1){\includegraphics[width=0.1\unitlength]{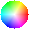}}\put(.15,.15){\thicklines\color{white}\scalebox{5}{\circle{.02}}}\end{picture}&
\reflectbox{\rotatebox[origin=c]{180}{\includegraphics[width=0.3\unitlength]{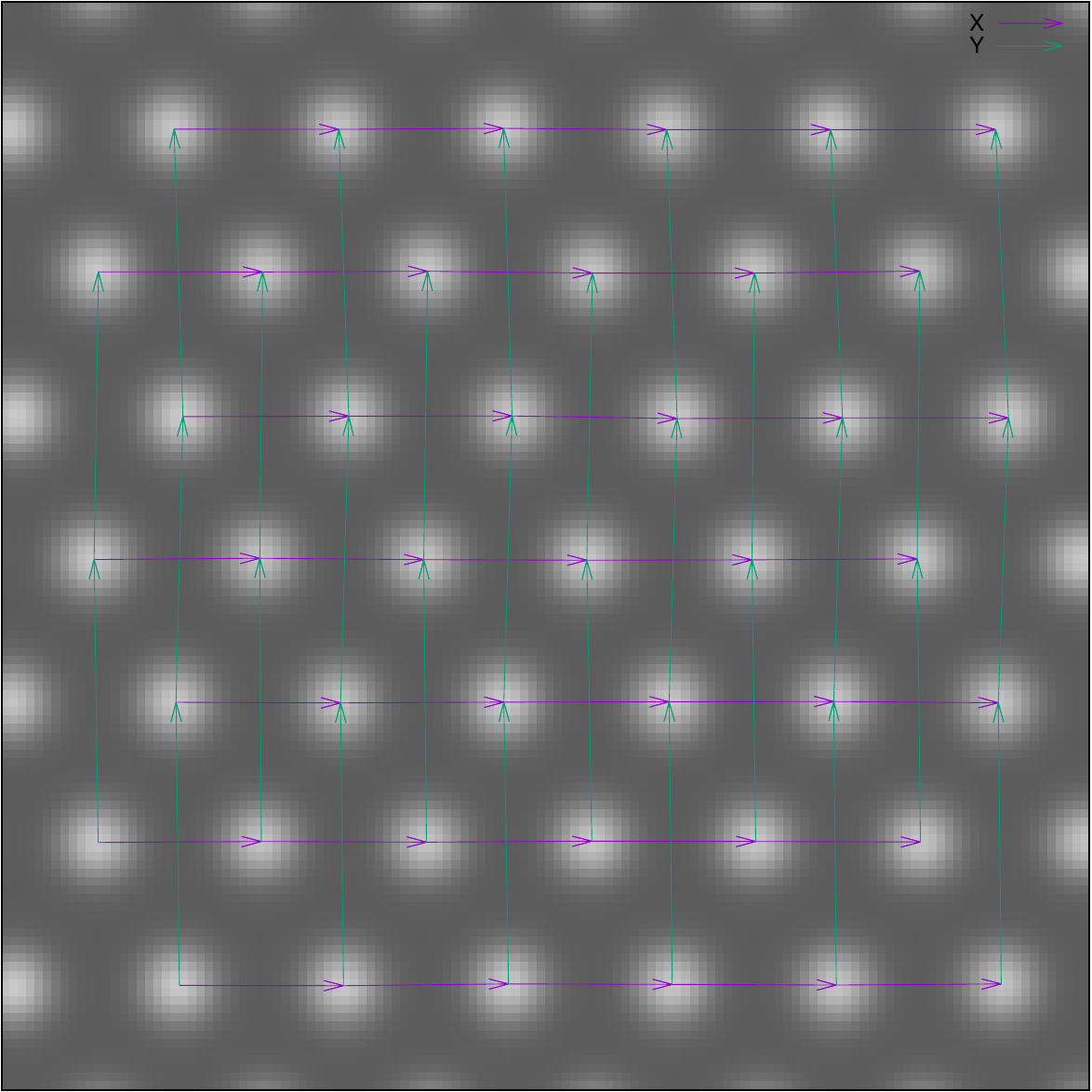}}}
\end{tabular}
\caption{Given four synthetic input images $\inputImage^1,\ldots,\inputImage^4$ (left column), our algorithm reconstructs the correct atom distribution $\reconstImage$ (bottom row)
as well as the random pixel displacement due to Brownian motion (middle column, color-coded according to the color wheel).
The displacements applied to the reconstruction are shown as well (right column).
}
\label{fig:syntheticData}
\end{figure}

\begin{figure}
\includegraphics{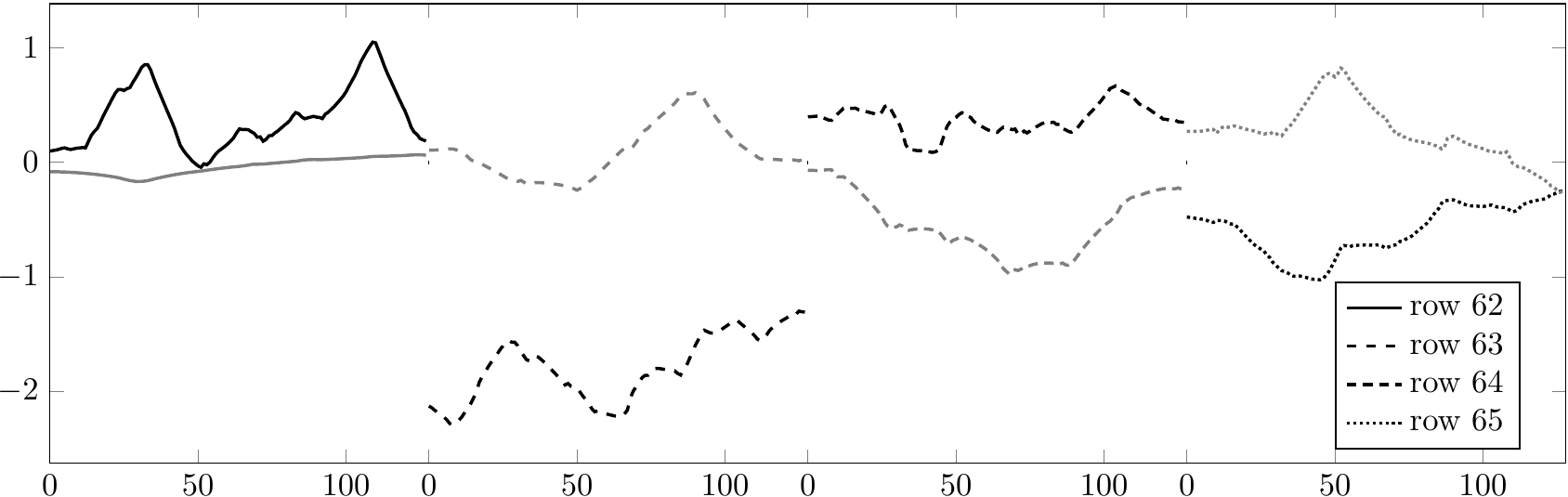}
\caption{Plots of the horizontal (black) and vertical (gray) component of $\shiftVec^1$ from Figure~\ref{fig:syntheticData} for selected pixel rows. The displacement is shown in pixels.}
\label{fig:syntheticShift}
\end{figure}

Using the synthetic data, we now perform the following series of experiments:
For each $k=0,\ldots,6$ we split the set of 128 synthetic input images into $2^{7-k}$ groups of each $2^k$ images.
To each such group we apply our reconstruction algorithm.
The quality of the result is measured via the so-called horizontal and vertical precision, which is the standard deviation of the horizontal and vertical atom distances and which should ideally be zero. The bottom right image of Figure~\ref{fig:syntheticData} illustrates which distances are used to compute the precision.
To avoid confusion let us emphasize that in this terminology low precision values correspond to high accuracy (so that the term ``precision'' may be a bit misleading).
We choose the precision as quality measure, since it is already well-established \cite{YaBeDa14} and also applicable to measurements without ground truth.
Figure~\ref{fig:syntheticPrecision} shows the horizontal and vertical precision, averaged over the performed experiments, as a function of the number $K$ of input images.
The precision decreases roughly like $K^{-1/2}$, which is the expected rate if the atom positions in each input image are displaced by Gaussian noise.
Note that the vertical precision almost is up to three times worse than the horizontal one.
This is expected, since the hexagonal grid of our synthetic images is aligned such that each pixel row traverses several atoms.
Thus, the random displacement changes much less between horizontally neighboring atoms than between vertically neighboring atoms (which lie in different pixel rows and thus have a much stronger random displacement between them).
Moreover, the absolute vertical distance of atom pairs is more than twice as big as the horizontal distance in this grid (cf.\ bottom right of Figure~\ref{fig:syntheticData}), which means that the absolute precisions should not be compared directly.
Nevertheless, even the vertical precision reaches values of 0.1\,pixels accuracy at 64 input images.

\begin{figure}
\centering
\includegraphics{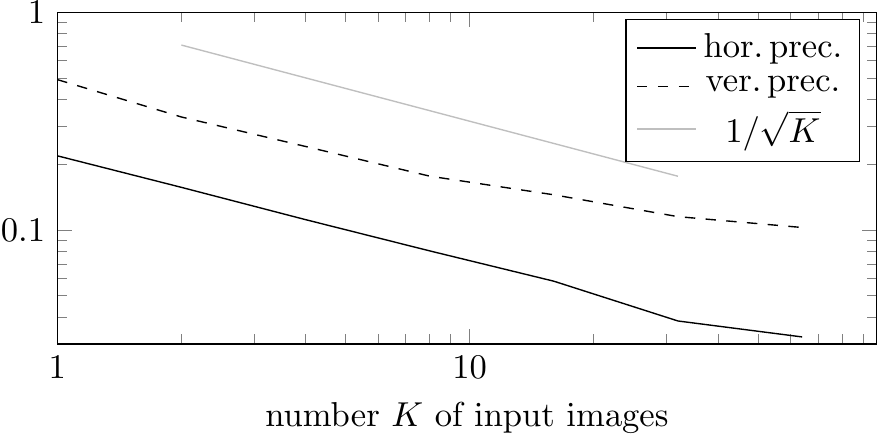}
\caption{Average horizontal and vertical precision (in pixels) of the reconstructed atom positions as a function of the number of synthetic input images.
}
\label{fig:syntheticPrecision}
\end{figure}

\subsection{Experiments on real data}
Figures\,\ref{fig:realData} to \ref{fig:realPrecision} show the same as Figures\,\ref{fig:syntheticData} to \ref{fig:syntheticPrecision}, only this time for real data acquired by STEM from a GaN material sample (experimental data courtesy of Paul M.\ Voyles).
For this data, the precision reaches a slightly higher value of 0.15\,pixels for 64 input images, which at the used STEM resolution corresponds to about $2.25$\,pm (pixel size in this case is about $15$\,pm).
Interestingly, the rate of precision decrease is lower than the expected $K^{-1/2}$ from the synthetic experiments.
Preliminary tests with different experimental data sets suggest that the origin of this inferior rate lies in the input data rather than the reconstruction method,
since it turns out that the better rate can be restored if the input images $\mathbf\img^1,\ldots,\mathbf\img^K$ are taken from different material regions (which nevertheless show the exactly same hexagonal atomic grid).
This fact actually speaks in favor of the reconstruction method, since it is apparently even able to identify hidden deviations of the measurements from a perfectly regular atomic grid.

\begin{figure}
\centering
\setlength{\unitlength}{.72\linewidth}
\begin{tabular}{rccc}
&
$\inputImage^k$&
$\shiftVec^k$&
${\small\reconstImage(x_{ij}+\shift_{ij}^k)}$
\\[1ex]
\raisebox{.145\unitlength}{$k=1$}&
\includegraphics[width=0.3\unitlength]{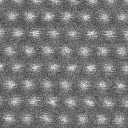}&
\includegraphics[width=0.3\unitlength]{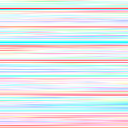}&
\includegraphics[width=0.3\unitlength]{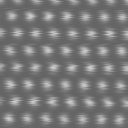}\\
\raisebox{.145\unitlength}{$k=2$}&
\includegraphics[width=0.3\unitlength]{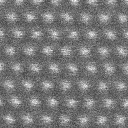}&
\includegraphics[width=0.3\unitlength]{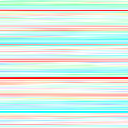}&
\includegraphics[width=0.3\unitlength]{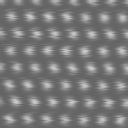}\\
\raisebox{.145\unitlength}{$k=3$}&
\includegraphics[width=0.3\unitlength]{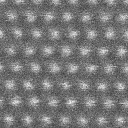}&
\includegraphics[width=0.3\unitlength]{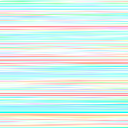}&
\includegraphics[width=0.3\unitlength]{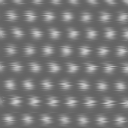}\\
\raisebox{.145\unitlength}{$k=4$}&
\includegraphics[width=0.3\unitlength]{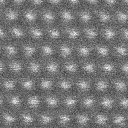}&
\includegraphics[width=0.3\unitlength]{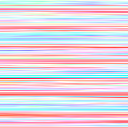}&
\includegraphics[width=0.3\unitlength]{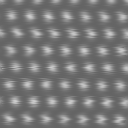}\\
\hline
$\vphantom{\displaystyle\sum}$&
$\reconstImage-o$&
&$\reconstImage$\\
&
\includegraphics[width=0.3\unitlength]{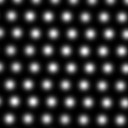} &
\begin{picture}(.3,.3)\put(.1,.1){\includegraphics[width=0.1\unitlength]{colWheel}}\put(.15,.15){\thicklines\color{white}\scalebox{5}{\circle{.02}}}\end{picture}&
\reflectbox{\rotatebox[origin=c]{180}{\includegraphics[width=0.3\unitlength]{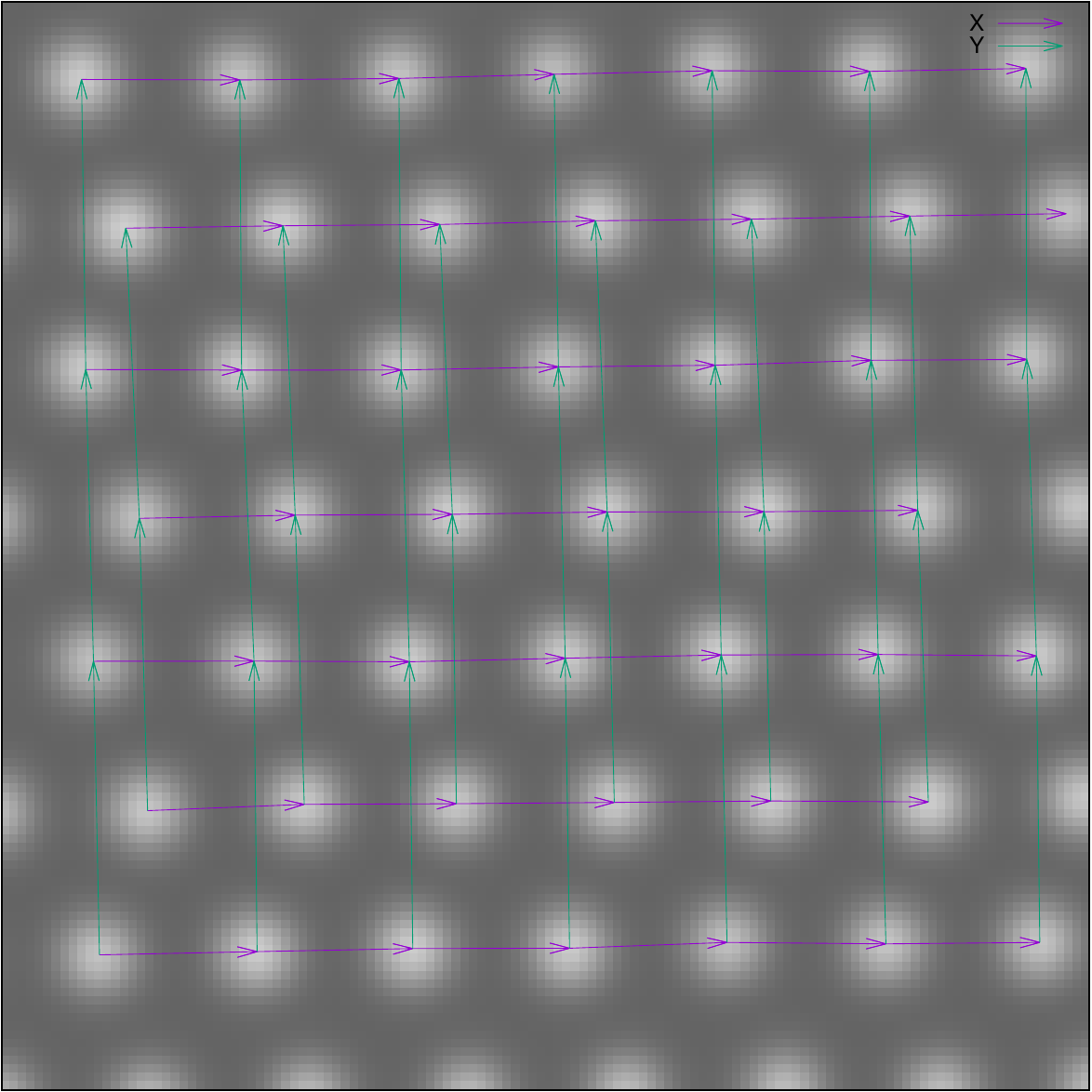}}}
\end{tabular}
\caption{Given four STEM images $\inputImage^1,\ldots,\inputImage^4$ of GaN (left column), our algorithm reconstructs the correct atom distribution $\reconstImage$ (bottom row)
as well as the random pixel displacement due to Brownian motion (middle column, color-coded according to the color wheel).
The displacements applied to the reconstruction are shown as well (right column).
}
\label{fig:realData}
\end{figure}

\begin{figure}
\centering
\includegraphics{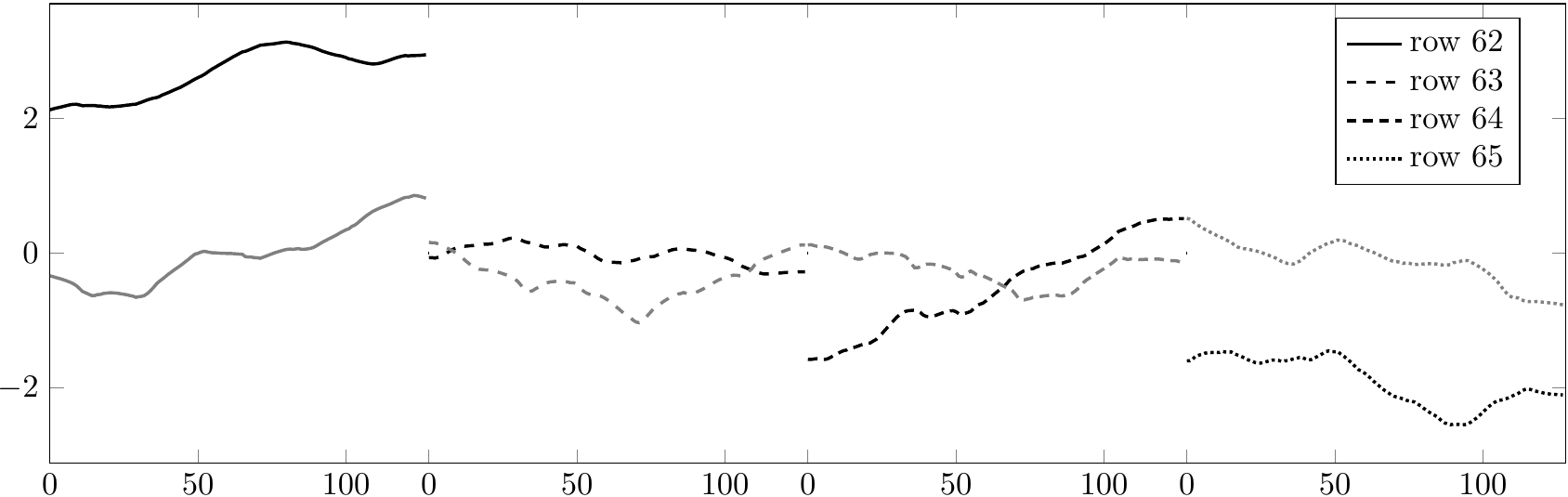}
\caption{Plots of the horizontal (black) and vertical (gray) component of $\shiftVec^1$ from Figure~\ref{fig:realData} for selected pixel rows. The displacement is shown in pixels.}
\label{fig:realShift}
\end{figure}

\begin{figure}
\centering
\includegraphics{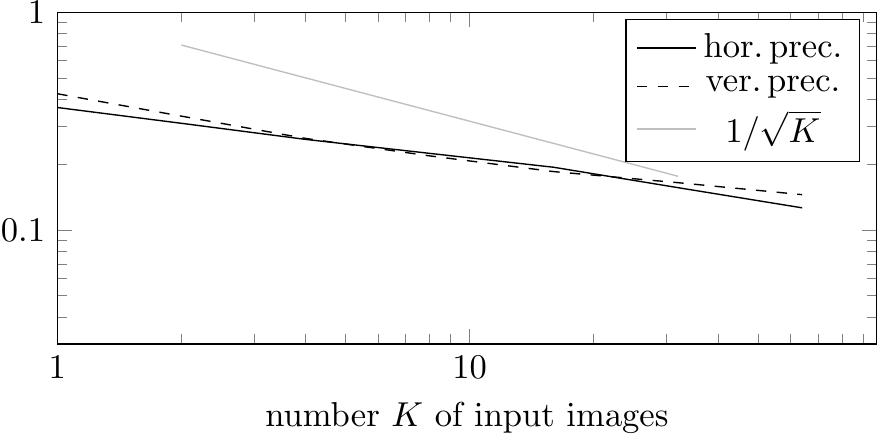}
\caption{Average horizontal and vertical precision (in pixels) of the reconstructed atom positions as a function of the number of STEM input images.
}
\label{fig:realPrecision}
\end{figure}

\section{Conclusions}
Using a Bayesian approach, we have derived a variational method to extract atom positions and random pixel displacement from STEM images.
With the help of tools from stochastic homogenization it turns out that one can also make sense of this reconstruction method if the electron beam traverses the scanned material in a time-continuous manner, constantly detecting deflected electrons.
Depending on the scanning path this may yield reconstructions based on space-filling measurements (that is, measurements at every single location in the 2D scanning domain $\Omega$)
or on tomography-type measurements.
To our knowledge, such approaches and scanning paths are not yet investigated in the STEM community, and it would be interesting to test how well they could be realized and what applications might benefit from them.

For a numerical implementation, we first derived a reduced, convex model that was used to provide an initialization for the full reconstruction method.
In our eyes, this (not new) idea is useful in inverse problems beyond STEM reconstruction:
Since accurate forward models often are complex and nonlinear, variational reconstructions may easily get stuck in suboptimal local minima;
efficient convex models can help out by providing a good, globally optimized initial guess.

Despite using a relatively accurate model of STEM imaging, choosing model parameters for a particular type of experimental images still requires a little tuning
(which is not problematic from the application viewpoint, since once the parameters are chosen one can use them for a large series of physical experiments).
In particular, some additional heuristic regularization seems to be beneficial.
This may be a manifestation of one of the well-known deficiencies of the MAP estimate for inverse problems:
If the energy landscape is very flat, the MAP estimate may tend to overfit noise, which can be counteracted by additional regularization.

\section*{Acknowledgements}
B.\ Berkels was funded in part by the Excellence Initiative of the German Federal and State Governments.
B.\ Wirth's research was supported by the Alfried Krupp Prize for Young University Teachers awarded by the Alfried Krupp von Bohlen und Halbach-Stiftung.
The work was also supported by the Deutsche Forschungsgemeinschaft (DFG), Cells-in-Motion Cluster of Excellence (EXC1003 -- CiM), University of M\"unster, Germany.

\bibliographystyle{plain}
\bibliography{paper}

\end{document}